\def\moverlay{\mathpalette\mov@rlay}
\def\mov@rlay#1#2{\leavevmode\vtop{%
   \baselineskip\z@skip \lineskiplimit-\maxdimen
   \ialign{\hfil$\m@th#1##$\hfil\cr#2\crcr}}}
\newcommand{\charfusion}[3][\mathord]{
    #1{\ifx#1\mathop\vphantom{#2}\fi
        \mathpalette\mov@rlay{#2\cr#3}
      }
    \ifx#1\mathop\expandafter\displaylimits\fi}
\newenvironment{shiftedflalign*}{
	\start@align\tw@\st@rredtrue\m@ne
	\qquad\qquad
}{
	\endalign
}
\theoremstyle{definition}
\newtheorem{definition}{Definition}
\theoremstyle{definition}
\newtheorem{theorem}{Theorem}
\theoremstyle{definition}
\theoremstyle{definition}
\theoremstyle{definition}
\theoremstyle{definition}
\theoremstyle{definition}
\newtheorem{corollary}{Corollary}
\crefname{axiom}{ax.}{axs.}
\Crefname{axiom}{Axiom}{Axioms}
\crefname{definition}{Def.}{Defs.}
\Crefname{definition}{Definition}{Definitions}
\crefname{proposition}{prop.}{props.}
\Crefname{proposition}{Proposition}{Propositions}
\crefname{remark}{remark}{remarks}
\Crefname{remark}{Remark}{Remarks}
\crefname{corollary}{corol.}{corol.}
\Crefname{corollary}{Corollary}{Corollaries}
\setlist[itemize]{leftmargin=*}
\pgfplotsset{/pgfplots/error bars/error bar style={solid}}
\tikzset{
    -Latex,auto,node distance =1 cm and 1 cm,semithick,
    state/.style ={ellipse, draw, minimum width = 0.7 cm},
    point/.style = {circle, draw, inner sep=0.04cm,fill,node contents={}},
    bidirected/.style={Latex-Latex,dashed},
    el/.style = {inner sep=2pt, align=left, sloped}
}
\newcounter{daggerfootnote}
\DeclareMathOperator*{\minimise}{minimise}
\tikzset{
  mybackground51/.style={execute at end picture={
      \begin{scope}[on background layer]
        \draw[black, rounded corners=2ex, fill=black!2] (current bounding box.south west)
        rectangle (current bounding box.north east);
        \node[draw,fill=white,ellipse,anchor=west,inner sep=1pt,minimum width=4ex] at (current bounding box.north
        west){#1};
      \end{scope}
    }},
}
\DeclareMathOperator*{\argmax}{argmax} % thin space, limits underneath in displays
\DeclareMathOperator*{\argmin}{arg\,min}
\let\emptyset\varnothing
\newcommand{\mat}[1]{\mathbf{#1}}
\renewcommand{\vec}[1]{\mathbf{#1}} % math bold
\newcommand{\expectation}[2]{ \mathbb{E}_{#1}{\left[#2\right]} }
\newcommand{\X}{\vec{X}}
\newcommand{\N}{\vec{N}}
\newcommand{\Z}{\vec{Z}}
\newcommand{\Y}{\vec{Y}}
\newcommand{\D}{\vec{D}}
\newcommand{\E}{\vec{E}}
\newcommand{\V}{\vec{V}}
\newcommand{\U}{\vec{U}}
\newcommand{\x}{\vec{x}}
\newcommand{\T}{\mathcal{T}}
\newcommand{\observe}{\mathfrak{O}}
\newcommand{\intervene}{\mathfrak{I}}
\newcommand{\graph}{\mathcal{G}}
\newcommand{\scM}{\mathscr{M}}
\newcommand{\set}[1]{\{#1\}}
\newcommand{\DO}[2]{\operatorname{do} \!  \left(#1 = #2\right)}
\newcommand{\simpleDO}[1]{\text{do} \! \left(#1\right)}
\newcommand{\myP}[1]{P \! \left ( #1    \right)}
\newcommand{\myQ}[2]{Q^{#1}_{#2}}
\newcommand{\scmdef}{\left\langle \U,\V,\mat{F}, \myP{\U} \right\rangle}
\newcommand{\dom}[1]{\text{dom}(#1)}
\newcommand{\pa}[2]{\mathrm{pa}(#1)_{#2}}
\newcommand{\Pa}[2]{\mathrm{Pa}(#1)_{#2}}
\newcommand{\acro}[1]{\textsc{#1}\xspace}
\newcommand{\acros}[1]{\textsc{#1}s\xspace}
\newcommand{\mab}{\acro{mab}}
\newcommand{\mabs}{\acros{mab}}
\newcommand{\gp}{\acro{gp}}
\newcommand{\DAG}{\acro{dag}}
\newcommand{\pomis}{\acro{pomis}}
\newcommand{\mis}{\acro{mis}}
\newcommand{\uc}{\acro{uc}}
\newcommand{\mos}{\acro{mos}}
\newcommand{\bo}{\acro{bo}}
\newcommand{\cbo}{\acro{cbo}}
\newcommand{\dcbo}{\acro{dcbo}}
\newcommand{\ceo}{\acro{ceo}}
\newcommand{\mcbo}{\acro{mcbo}}
\newcommand{\mdp}{\acro{mdp}}
\newcommand{\scm}{\acro{scm}}
\newcommand{\rbf}{\acro{rbf}}
\newcommand{\rhs}{\acro{rhs}}
\newcommand{\osco}{\acro{osco}}
\title{Optimal Observation-Intervention Trade-Off \\
       in Optimisation Problems with Causal Structure
}
\author{%
  Kim Hammar\thanks{Equal contribution.}\\
  KTH Royal Institute of Technology\\
  Stockholm, Sweden \\
  \texttt{kimham@kth.se}
\setcounter{footnote}{1}
   \And
   Neil Dhir\footnotemark \\
   Siemens Technology\\
   Berkeley, California, USA \\
   \texttt{neil.dhir@siemens.com}
}
\begin{document}

\maketitle

\begin{abstract}
We consider the problem of optimising an expensive-to-evaluate grey-box objective function, within a finite budget, where known side-information exists in the form of the causal structure between the design variables. Standard black-box optimisation ignores the causal structure, often making it inefficient and expensive. The few existing methods that consider the causal structure are myopic and do not fully accommodate the observation-intervention trade-off that emerges when estimating causal effects. In this paper, we show that the observation-intervention trade-off can be formulated as a non-myopic optimal stopping problem which permits an efficient solution. We give theoretical results detailing the structure of the optimal stopping times and demonstrate the generality of our approach by showing that it can be integrated with existing causal Bayesian optimisation algorithms. Experimental results show that our formulation can enhance existing algorithms on real and synthetic benchmarks.
\end{abstract}

\section{Introduction}\label{sec:intro}
This paper studies global optimisation of an expensive-to-evaluate \emph{grey-box} \citep{astudillo2021thinking} objective function with known causal structure in the form of a \textit{causal diagram} (making it grey-box rather than `black-box'). In this setting, inputs to the objective function correspond to \textit{interventions} and outputs correspond to causal effects. We assume that the objective function can be evaluated (possibly with noise) at a finite number of inputs, either by measurement or some estimation procedure. Each evaluation is associated with a cost and a finite budget of total evaluations is prescribed. Since no known functional form of the objective function is available, our goal is to find an input that optimises the objective function by estimating the causal effects of a sequence of interventions. This estimation can be done in two ways: \textit{i}) by intervening and conducting controlled experiments; and \textit{ii}) by passively observing and using the causal graph to estimate the causal effects (vis-\`{a}-vis the do-calculus \citep{pearl2000causality}). In choosing between these two options, an \textit{observation-intervention} trade-off emerges. On the one hand, interventions are costly but allow us to reliably estimate causal effects. On the other hand, observations are (usually) cheap to collect but may not always be sufficient to identify causal effects. We show that this trade-off can be formulated as an \textit{optimal stopping} problem that permits an efficient solution \cite{wald,shirayev,chow1971great}.

Two principal algorithmic frameworks have been developed to solve optimisation problems of the type described above: \textit{i}) causal Bayesian optimisation (\cbo) algorithms \cite{cbo,dcbo,model_based_cbo,branchini23}, which assume that the objective function is defined over a continuous domain; and \textit{ii}) causal multi-armed bandit (\mab) algorithms \cite{causal_bandits_5,causal_bandits_4,causal_bandits_3,causal_bandits_2,causal_bandits_1}, which assume that the objective function is defined over a discrete set of inputs. To our knowledge, no hybrid approach exists. Compared to standard Bayesian optimisation (\bo) and \mabs, which ignore the causal structure of the problem, \cbo and causal \mabs are able to exploit the causal structure to improve sample efficiency. A drawback of the existing causal approaches, however, is that they are \textit{myopic} in the sense that they do not consider more than one step into the future when deciding on interventions. Another limitation is that the existing approaches rely on heuristics to balance the observation-intervention trade-off and do not quantify the cost of observing. Specifically, in the \cbo setting, an $\epsilon$-greedy strategy is adopted by \citet{cbo} and a myopic exploration approach is adopted by \citet{model_based_cbo} and the trade-off is ignored in \citep{dcbo,branchini23}. In the causal \mab setting, a heuristic approach is used by \citet{causal_bandits_5} and the work in \citep{causal_bandits_4,causal_bandits_3,causal_bandits_2,causal_bandits_1} uses myopic exploration approaches similar to \citet{model_based_cbo}. Moreover, all of the approaches referenced above assume a) that an arbitrary number of observations can be collected; and b) that there are no costs associated with collecting observations. These assumptions are not realistic in many scenarios: to test for high cholesterol in the US, a blood-test (observation) is required which costs on average \$51 \citep{cbc_test_cost} and the requisite intervention (statins) costs \$139 \citep{statin_cost}. Similarly, a prostate-specific antigen test to screen for prostate cancer (observation) costs on average \$40 \citep{blood_test} and the cost of a radical prostatectomy procedure (intervention) is on average \$34,720 \citep{pate2014variations,imber2020financial}.
% for generic and \$360 for brand-name medication
%(s.d. of \$20,335 and min-max range of \$10,000--\$135,000)
% shoulder MRI (observation) costs on average \$1,874 in the US \citep{westermann2017does} but a requisite intervention

% and the annual inspection cost of a `Beechcraft Bonanza' aircraft starts at \$2,000 \citep{flying}, to give three concrete examples. \nd{Need to add the commensurate intervention costs e.g. having found prostate cancer, what is the cost of an operation; what does shoulder surgery costs; what does it cost to replace an aircraft engine.}

%and do not exploit the causal structure to limit the variables that need to be observed.
%\kh{TODO, motivate better the drawbacks of related work (lack of cost, observe all variables, etc.)}

\textbf{Main contributions.} Motivated by the above shortcomings in existing work, we present a general approach to extend existing \cbo  algorithms (the results are fully extendable to the causal \mab setting) to balance the intervention-observation trade-off in an optimal and \emph{non-myopic} way, while taking observation costs into account. From hereon, for brevity, we refer to our approach as \emph{Optimal Stopping for Causal Optimisation} (\osco). Our main contributions are:
\begin{itemize}
\item We formulate the observation-intervention trade-off as an \textit{optimal stopping} problem whose solution determines whether a given intervention should be carried out or if it is more cost-effective to collect observational data.
%  that takes into account the cost of observing and can be integrated with the existing algorithms. We use the solution to the stopping problem to determine, at each stage of the optimisation, whether a given intervention should be carried out or if it is more cost-effective to collect observational data.
\item We prove that the solution to the optimal stopping problem can be computed efficiently and show that it can enhance existing \cbo (and causal \mab) algorithms.
\item We characterise a set of variables called the \emph{minimal observation set} (\mos), which is the minimal set of variables that need to be observed to estimate the causal effect of an intervention.
\end{itemize}

\section{Theoretical background}\label{sec:background}
This section covers notations and theoretical background on structural causal models. The models and definitions introduced here provide a foundation for the subsequent section where we describe our problem statement. A table of notations is available in \cref{sec:notations}. Finally, to ensure that the narrative remains fluid we also introduce the \mos in this section.

\subsection{Structural causal models}
\label{sec:scm}

\paragraph{Structural causal models.} A Structural Causal Model (\scm{}) \citep[Ch. 7, Def 7.1.1]{pearl2000causality} is a semantic framework to model the causal mechanisms of a system. Let $\mathscr{M}$ be a \scm parametrised by the quadruple $\scmdef$. Here $\U$ is a set of exogenous variables which follow a joint distribution $\myP{\U}$ and $\mat{V}$ is a set of endogenous (observed) variables. Within $\V$ we distinguish between three types of variables: manipulative $\X \subseteq \V \setminus \Y \setminus \N$; non-manipulative $\N \subseteq \V \setminus \X$ and targets (outcome variables) $\Y \subseteq \N$.

Graphically, each \scm induces a causal diagram (a directed acyclic graph, \DAG for short) $\graph  = \left\langle \mat{V}, \mat{E} \right\rangle$. Each vertex in the graph corresponds to a variable and the directed arcs point from members of $\pa{V_i}{\graph}$ and $U_i$ toward $V_i$, where $\pa{V_i}{\graph}$ denotes the parent nodes of $V_i$ in $\graph$ \citep[Ch. 7]{pearl2000causality}. The arcs represent a set of functions $\mat{F} \triangleq \{f_i\}_{V_i \in \mat{V}}$ \citep[\S 1]{lee2020characterizing}. Each function $f_i$ is a mapping from (the respective domains of $U_i \cup \pa{V_i}{\graph}$ to $V_i$ -- where $U_i \subseteq \mat{U}$ and $\pa{V_i}{\graph} \subseteq \mat{V} \setminus V_i$. A bidirected arc between $V_i$ and $V_j$ occurs if they share an unobserved confounder, i.e. if $\mat{U}_i \cap \mat{U}_j \neq \varnothing$ \citep{lee2018structural}. If each function $f_i \in \mat{F}$ is independent of time, the \scm{} is said to be stationary and if both $\mat{U}$ and $\mat{V}$ are finite the \scm{} is said to be finite. For a more incisive discussion on the properties of \scm{}s we refer the reader to \citep{pearl2000causality, bareinboim2016causal}.

\paragraph{Causal effects of interventions.} The do-operator $\DO{\X}{\x}$ represents the causal effect of an intervention that fixes a set of endogenous variable(s) $\X$ to constant value(s) $\x$ irrespective of their original mechanisms $\mat{F}$. This operation can be represented graphically by removing the incoming arcs to $\X$ in $\graph$. We denote by $\graph_{\overline{\X}}$ the mutilated graph obtained by deleting from $\graph$ all arcs pointing to nodes in $\X$. Examples of mutilated graphs are shown in \cref{fig:causal_graph_example}.

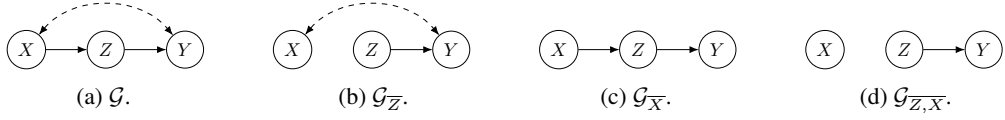
\begin{figure}[ht!]
    \centering
        \begin{subfigure}[t]{0.24\textwidth}
            \centering
            \resizebox{0.8\textwidth}{!}{%
               \begin{tikzpicture}[node distance =1.5cm]
                    % t = 0
                    \node[state,circle] (X0) {$X$};
                    \node[state,circle, right of = X0] (Z0) {$Z$};
                    \node[state,circle, right of = Z0] (Y0) {$Y$};
                    % Emission edges
                    \path (X0) edge (Z0);
                    \path (Z0) edge (Y0);
                    \path[bidirected] (X0) edge[bend left=50] (Y0);
                \end{tikzpicture}
            }%
            \caption{$\graph$. \label{fig:admg_a}}
        \end{subfigure}
    \hfill
    \begin{subfigure}[t]{0.24\textwidth}
        \centering
        \resizebox{0.8\textwidth}{!}{%
            \begin{tikzpicture}[node distance =1.5cm]
                \node[state,circle] (X0) {$X$};
                \node[state,circle, right of = X0] (Z0) {$Z$};
                \node[state,circle, right of = Z0] (Y0) {$Y$};
                % \path (X0) edge (Z0);
                \path (Z0) edge (Y0);
                \path[bidirected] (X0) edge[bend left=50] (Y0);
            \end{tikzpicture}
        }%
        \caption{$\graph_{\overline{Z}}$.}
    \end{subfigure}
    \hfill
    \begin{subfigure}[t]{0.24\textwidth}
        \centering
        \resizebox{0.8\textwidth}{!}{%
           \begin{tikzpicture}[node distance =1.5cm]
                \node[state,circle] (X0) {$X$};
                \node[state,circle, right of = X0] (Z0) {$Z$};
                \node[state,circle, right of = Z0] (Y0) {$Y$};
                % Emission edges
                \path (X0) edge (Z0);
                \path (Z0) edge (Y0);
                \path[bidirected,opacity=0.0] (X0) edge[bend left=50] (Y0);
            \end{tikzpicture}
        }%
        \caption{$\graph_{\overline{X}}$. \label{fig:toy_graph}}
    \end{subfigure}
    \hfill
    \begin{subfigure}[t]{0.24\textwidth}
        \centering
        \resizebox{0.8\textwidth}{!}{%
           \begin{tikzpicture}[node distance =1.5cm]
                \node[state,circle] (X0) {$X$};
                \node[state,circle, right of = X0] (Z0) {$Z$};
                \node[state,circle, right of = Z0] (Y0) {$Y$};
                % Emission edges
                % \path (X0) edge (Z0);
                \path (Z0) edge (Y0);
                \path[bidirected,opacity=0.0] (X0) edge[bend left=50] (Y0);
            \end{tikzpicture}
        }%
        \caption{$\graph_{\overline{Z,X}}$.}
    \end{subfigure}
    \caption{Mutilated causal diagrams; nodes represent variables in an \scm; solid edges represent causal relations and dashed edges represent unobserved confounding; the diagrams represent post-intervention worlds where a specific intervention has been implemented, from left to right, these are: $\simpleDO{\emptyset},\simpleDO{Z},\simpleDO{X}$ and $\simpleDO{X,Z}$; interventions are graphically represented with the incoming arcs onto the intervention removed.}
    \label{fig:causal_graph_example}
\end{figure}

% Applying the rules to our example in \cref{fig:admg_a}, we can establish the causal effect of $\myset{X}$, $\myset{Z}$ and $\myset{X,Z}$ on $Y$ given $P_{}\left(\V \right)$ and $\graph$:
% \begin{align}
%     \label{eq:example_causal_effects}
% 	P \left(Y \mid \DO{X}{x} \right)  &= \sum_{Z}{P\left(Z \mid X\right)\sum_{X'}{P\left(Y \mid X',Z\right)P\left(X'\right)}} \\
% 	P\left(Y \mid \DO{Z}{z}\right) &= \sum_{X}{P\left(Y \mid Z,X \right)P\left(X\right)} \\
% 	P\left(Y \mid \DO{X}{x}, \DO{Z}{z}\right) &= \sum_{X'}{P\left(Y \mid X',Z\right)P\left(X' \right)}.
% \end{align}
% See \cref{sec:causal-effect derivation} for the derivations as well as \citep[\S 3.4.3]{pearl2000causality}. \nd{Don't think we need this either.. This is a theoretical causal inference paper, anyone who reads it will know how the do-calculus works. Question is how self-contained do we want it to be.}

% \input{figures/intro_fig}
% \nd{Remove this.} The system contains three axioms which provide graphical criteria for when certain substitutions may be made. In lay terms: \underline{Rule 1}: decides if we can ignore an observation; \underline{Rule 2}: decides if we can treat an intervention as an observation and \underline{Rule 3}: determines if we can ignore an intervention.
\paragraph{Estimating causal effects.} The goal of causal inference is to generate probabilistic formulas for the effects of interventions in terms of observation probabilities. In this work we accomplish this by employing \citet{pearl2000causality}'s do-calculus, which is an axiomatic system for replacing probability formulas containing the do-operator with ordinary conditional probabilities. Application of the do-calculus requires the interventions to be uniquely determined from $P(\V)$ and $\graph$. Determining if this is the case is known as the \emph{problem of identification} and has received considerable attention in the causal inference literature \citep{pearl1995probabilistic,pearl2000causality, shpitser2006identification, shpitser2008complete,tikka2019causal, tikka2018identifying}, formally:
\begin{definition}{Causal effect identifiability \citep[Def. 1]{bareinboim2012causal}.}
\label{def:ID}
Let $\X,\Y$ be two sets of disjoint variables and let $\graph$ be the causal diagram. The causal effect of an intervention $\DO{\X}{\x}$ on a set of variables $\Y$ is said to be identifiable from $P$ in $\graph$ if $\myP{\Y \mid \DO{\X}{\x}}$ is \emph{uniquely} computable from $P(\V)$ in any causal model that induces $\graph$.
\end{definition}
%Before applying the do-calculus we must first determine if the causal effect of intervention can be uniquely determined from the probability distribution $P(\cdot)$ over the observed variables $\V$ in $\graph$. This
%, which provides a qualitative summary of the analyst's understanding of the data-generating process
%Pearl's do-calculus \citep{pearl2000causality} is used for estimating interventional distributions from an available joint observational distribution given a causal graph \citep{pearl2000causality,tikka2018identifying}. The do-calculus tells us what effect a set of intervention variables $\X$ in a causal graph $\graph$ will have on a set of response variables $\Y$ -- denoted $P \left(\Y \mid \DO{\X}{\x} \right)$. Prior to estimation,
%In the identifiability literature the problem of identification is pursued through sufficient conditions and complete non-parametric characterisation \citep{tian2002general,shpitser2006identification,huang2006global}.

\subsection{Sets of endogenous variables}
\label{sec:vertex_sets}
% \kh{I think we may need to move Def 1-3 to appendix to save space (keeping Def 4). Then we can shorten this to a single paragraph ``sets of endogeneous variables'' and a single definition. Then we can just say that given an SCM we can define MIS and POMIS, which were introduced in previous work (see references and appendix) and which are denoted by..; we extend this to observations and define MOS as..} \nd{Yes most likely though we will have to reduce the experimental section and probably make do with one graph and one table and leave the rest for the appendix.}\kh{When you read the paper from start to finish in the current form, the theoretical background and all the definitions become very long and dry and to be fair some of the details in the background are not necessary to understand the problem statement and the rest of the paper. If we can do anything to get quicker to the problem statement I am for that. Since the text is well-written rather than deleting I would just move certain parts to appendix and refer to it for the interested reader.}
\paragraph{Intervention sets.} Given a causal graph $\graph$ of an \scm $\mathscr{M} \triangleq \scmdef$ with a set of manipulative variables $\X$ and a target variable $Y$, we can define \emph{minimal intervention sets}, which represent non-redundant intervention sets for achieving an effect on $Y$:
\begin{definition}{Minimal intervention set (\mis) \citep[Def. 2]{causal_bandits_1}.}
\label{def:mis}
A set of manipulative variables $\X \subseteq \V \setminus \{Y\} \setminus \N$ is said to be a \mis for $Y$ if there is no $\X' \subset \X$ such that $\expectation{}{Y \mid \DO{\X}{\x}} = \expectation{}{Y \mid \DO{\X'}{\x'}}$ for every \scm induced by $\graph$. Denote by $\mathbf{M}^{\mathbf{V}}_{\graph,Y}$ a set of \mis{}s given the \scm $\scM$.
\end{definition}
A subset of the set of \mis{}s can lead to the minimal value of $Y$:
\begin{definition}{Possibly optimal minimal intervention set (\pomis) \citep[Def. 3]{causal_bandits_1}.}\label{def:pomis}
$\X \in \mathbf{M}_{\graph,Y}^{\mathbf{V}}$ is a \pomis if there exists an \scm induced by $\graph$ such that $\expectation{}{Y \mid \DO{\X}{\x^*}} < \expectation{}{Y \mid \DO{\vec{W}}{\vec{w}^*}} \ \forall \vec{W} \in \mathbf{M}_{\graph,Y}^{\mathbf{V}} \setminus \X$. Denote by $\mathbf{P}_{\graph,Y}^{\mathbf{V}}$ a set of \pomis{}s given the \scm $\scM$.
\end{definition}
%Next, we introduce a new type of set which we will use in the optimal stopping formulation, presented in \cref{sec:os_formulation}.
% Building on the work of \citet{lee2019structural} (which subsumes the work from \citet{lee2018structural}) here we characterise analog \emph{observation sets}, relative to the three rules of do-calculus \citep[Thm. 3.4.1]{pearl2000causality}.
% The rules of the do-calculus can be used to express the interventional distribution $\myP{Y \mid \DO{\X}{\x}}$ using only observational distributions $\myP{\V}$ \citep{tikka2019causal}. The rules themselves, however, are silent as to the order in which they should be applied.
%\footnote{We restrict ourselves to methods which only rely on the passive observational distribution. For a comprehensive list of methods which also utilise experimental data see \citep[Table 1]{tikka2019causal}.}
\paragraph{Observation sets.} The solution to the \emph{identification problem} \citep{shpitser2008complete} tells us under what conditions the effect of a given intervention can be computed from $\myP{\V}$ and the causal diagram $\graph$. A number of sound and complete algorithms exist which solve this problem \citep{shpitser2006conditional, shpitser2008complete,tian2002general,huang2006identifiability}. The solution, if it exists as per \cref{def:ID}, returns an expression $\myQ{\X}{Y}$ which only contains observational measures. The set of variables $\Z \subseteq \V$ occurring in $\myQ{\X}{Y}$ is the \emph{minimal observation set}, which we introduce and formally define as
\begin{definition}{Minimal observation set (\mos).}\label{def:mos} If $\myP{Y \mid \DO{\X}{\x}}$ is identifiable as per \cref{def:ID} then $\exists \myQ{\X}{Y}$. If a) $\myQ{\X}{Y}$ can be estimated by observing $\Z \subseteq \V$ and b) $\nexists \Z' \subset \Z$ that allows us to estimate $\myQ{\X}{Y}$, then $\Z$ is a \mos. The \mos which follows $Q^{\X}_Y$ is denoted by $\mathbf{O}^{\mathbf{X}}_{\graph,Y}$.
\end{definition}
\begin{figure}
  %[!htb]
    \centering
    \begin{subfigure}[t]{0.31\textwidth}
        \centering
        \resizebox{0.9\textwidth}{!}{%
           \begin{tikzpicture}[node distance =1.5cm]
                % t = 0
                \node[state,circle] (S) {$S$};
                \node[state,circle, below right of = S] (W) {$W$};
                \node[state,circle, below right of = W] (Y) {$Y$};
                \node[state,circle, above right of = Y] (X) {$X$};
                \node[state,circle, above right of = X] (Z) {$Z$};
                \node[state, circle, above right of = W] (B) {$B$};
                % Emission edges
                \path (S) edge (B);
                \path (B) edge (W);
                \path (W) edge (Y);
                \path (B) edge (X);
                \path (X) edge (Y);
                \path (Z) edge (X);
                \path[bidirected] (Y) edge[bend right=45] (Z);
                \path[bidirected] (S) edge[bend right=45] (Y);
            \end{tikzpicture}
        }%
        \caption{$\graph$. \label{fig:graph_two}}
    \end{subfigure}
    \hfill
    \begin{subfigure}[t]{0.31\textwidth}
        \centering
        \resizebox{0.9\textwidth}{!}{%
                % \node[state, blue,line width=0.5mm, circle, below right of = W] (Y) {$Y$};
            \begin{tikzpicture}[node distance =1.5cm]
                % t = 0
                \node[state,blue,line width=0.5mm,circle] (S) {$S$};
                \node[state,circle, below right of = S] (W) {$W$};
                \node[state,blue,line width=0.5mm,circle, below right of = W] (Y) {$Y$};
                \node[state,circle, above right of = Y] (X) {$X$};
                \node[state,circle, above right of = X] (Z) {$Z$};
                \node[state,blue,line width=0.5mm, circle, above right of = W] (B) {$B$};
                % Emission edges
                % \path (S) edge (B);
                \path (B) edge (W);
                \path (W) edge (Y);
                \path (B) edge (X);
                \path (X) edge (Y);
                \path (Z) edge (X);
                \path[bidirected] (Y) edge[bend right=45] (Z);
                \path[bidirected] (S) edge[bend right=45] (Y);
            \end{tikzpicture}
        }%
        \caption{$\graph_{\overline{B}}$; see \labelcref{eq:do_X}.\label{fig:post_intervention1}}
    \end{subfigure}
    \hfill
    \begin{subfigure}[t]{0.31\textwidth}
        \centering
        \resizebox{0.9\textwidth}{!}{%
            \begin{tikzpicture}[node distance =1.5cm]
                % t = 0
                \node[state,blue,line width=0.5mm,circle] (S) {$S$};
                \node[state,blue,line width=0.5mm,circle, below right of = S] (W) {$W$};
                \node[state,blue,line width=0.5mm,circle, below right of = W] (Y) {$Y$};
                \node[state,circle, above right of = Y] (X) {$X$};
                \node[state,circle, above right of = X] (Z) {$Z$};
                \node[state,blue,line width=0.5mm,circle, circle, above right of = W] (B) {$B$};
                % Emission edges
                % \path (S) edge (B);
                % \path (B) edge (W);
                \path (W) edge (Y);
                \path (B) edge (X);
                \path (X) edge (Y);
                \path (Z) edge (X);
                \path[bidirected] (Y) edge[bend right=45] (Z);
                \path[bidirected] (S) edge[bend right=45] (Y);
            \end{tikzpicture}
        }%
        \caption{$\graph_{\overline{B,W}}$; see \labelcref{eq:do_SB}.\label{fig:post_intervention2}}
    \end{subfigure}
    \caption{Causal diagram $\graph$ shown in \cref{fig:graph_two} with two post-intervention worlds in \cref{fig:post_intervention1} and \cref{fig:post_intervention2}. Blue node colour indicates the minimal set of variables which need to be measured to estimate the identifiable causal effect.}
    \label{fig:causal_graph_example_two}
\end{figure}
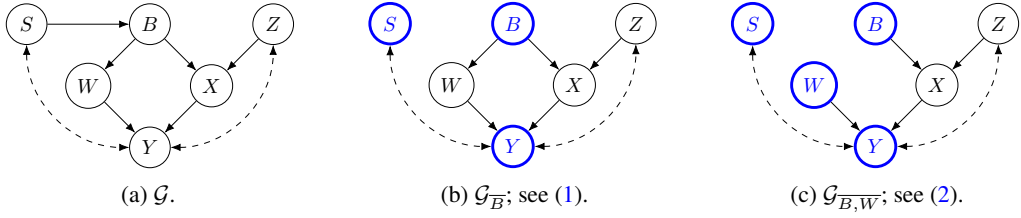
%It is not necessary to measure all variables in $\graph$ to estimate \emph{some} identifiable causal effects.

We demonstrate Def. \ref{def:mos} by considering the causal diagrams in \cref{fig:causal_graph_example_two}. Applying the rules of do-calculus we can express the interventional distributions in terms of observational mass functions:
\begin{align}
	\myP{Y \mid \DO{B}{b}}  &= \sum_{S}{\myP{Y \mid B, S}\myP{S}} = Q^{\set{B}}_Y \label{eq:do_X}  \\
	\myP{Y \mid \DO{B}{b},\DO{W}{w}} &= \sum_{S}{\myP{Y \mid B,W,S}\myP{S}}= Q^{\set{B,W}}_Y \label{eq:do_SB}
\end{align}
The \mos relative to $Q^{\set{B}}_Y$, shown on the right-hand-side (\rhs) in \labelcref{eq:do_X} is $\mathbf{O}^{\set{B}}_{\graph,Y} =  \{Y, B, S\}$ with the mutilated causal diagram shown in \cref{fig:post_intervention1}. Similarly for \cref{fig:post_intervention2} the \mos is given by $ \mathbf{O}^{\{B,W\}}_{\graph,Y} = \{Y,B,W,S\}$, with $Q^{\{B,W\}}_Y$ given on the \rhs in \labelcref{eq:do_SB}.
%The \mos is a simple concept which tells us the sufficient set of variables required to estimate the causal effect on $Y$ of intervening on variables $X \in \V$ and $S,B \in \V$ in \cref{fig:post_intervention1} and \cref{fig:post_intervention2} respectively.

% LocalWords:  parametrised acyclic de De bidirected confounder tikka
% LocalWords:  bareinboim identifiability interventional shpitser
% LocalWords:  characterisation tian huang counterfactually verma
% LocalWords:  srinivasan factorisation richardson TODO Barenboim

\section{Problem statement}\label{sec:problem_statement}
% \nd{Yupp, will get to this next.} \kh{In Branchini et al they assume ``perfect intervention model'' and in MCBO by Krause they talk about ``soft intervention model'' and ``hard intervention model''. Neil can you specify here (with just a few words and perhaps a reference) which intervention model we assume? I guess it is the perfect model since that is used in CBO? Lets not bother with additional complexity in other intervention models for this paper. We may need to mention the intervention models briefly in the theoretical background also, I am not sure.}

Consider a causal graph $\graph$ that encodes the causal relationship among a finite set of variables $\V$ in a stationary \scm $\scM = \scmdef$. We are interested in manipulating $\X \subseteq \V$ to minimise a target variable $Y \in \V \setminus \X$, which we assume is bounded, i.e. $|y| \leq M < \infty$ for some $M \in \mathbb{R}$ and all $y \in \text{dom}(Y)$. This objective is formally expressed as
\begin{align}
    \minimise_{
        \substack{\X' \in \mathbf{P}_{\graph,Y}^{\mathbf{V}}; \\ \x' \in \dom{\X'}}
    }\mu(\X',\x') & \triangleq \mathbb{E}[Y \mid \DO{\X'}{\x'}] \label{eq:cbo}
\end{align}
We assume that interventions are atomic (also known as `hard' \citep{model_based_cbo} or `perfect') as modelled by the do-operator \citep{pearl2000causality}. `Soft' or `stochastic' \citep{correa2020calculus} intervention settings are left for future work. We further assume that the functional relationships in $\scM$ (i.e. $\mat{F}$) are unknown (but $\graph$ is assumed known), which means that minimising \labelcref{eq:cbo} requires estimating $\expectation{}{Y \mid \DO{\X'}{\x'}}$ from data. This estimation can be done in two ways: \textit{i}) by intervening and conducting a controlled experiment, which yields samples from the interventional distribution $P(Y \mid \DO{\X'}{\x'})$; and \textit{ii}) by passively observing $\mathbf{O}^{\X'}_{\graph,Y}$ (see \cref{def:mos}) and using $\graph$ to estimate the causal effect through the do-calculus \citep{causality2009} (given that the causal effect is identifiable, otherwise the causal effect has to be estimated by intervening). Both estimation procedures are perturbed by additive Gaussian noise $\epsilon_t \sim \mathcal{N}(0,\sigma^2)$ and involve costs. Denote with $c(\X', e_t=\intervene)$ the cost of estimating $\mathbb{E}[Y \mid \DO{\X'}{\x'}]$ by \underline{intervening} and denote with $c(\X', e_t=\observe)$ the cost of estimating the same expression by \underline{observing}. The problem, then, is to design a sequence of interventions $(\DO{\X'_t}{\x'_{t}})_{t\in \{1,\hdots,T\}}$ and a sequence of estimation procedures $(e_t)_{t\in \{1,\hdots,T\}}$ to find an intervention that minimises \labelcref{eq:cbo} while keeping the cumulative cost below a maximum cost $K$. This problem can be formally stated as
\begin{subequations}\label{eq:formal_problem}
\begin{align}
  \minimise_{
  \substack{
  l_t,\X'_t,\x'_t\\
  t\in\{1,\hdots,T\}
  }}\quad
    & \left \{ \min_{t\in\{1,\hdots,T\}}\mu(\X'_t,\x'_t)-\mu(\X^*,\x^*) \right \} \label{eq:obj}\\
  \text{subject to }\quad& \sum_{t=1}^{T}c(\X'_t, e_t) < K, \quad\text{ } \X'_t \in \mathbf{P}_{\graph,Y}^{\mathbf{V}}, \quad\text{} \x'_t \in \dom{\X_t'}&\forall t \in \{1,\hdots,T\}\label{eq:cost_constraint}\\
%    & & \forall t \in \{1,\hdots,T\}\label{}\\
    &e_t \in \begin{dcases}
    \{\intervene, \observe \} &\text{if }P(Y \mid \DO{\X'_{t}}{\x_t'}) \text{ is identifiable}\\
    \{\intervene \} & \text{otherwise}
    \end{dcases} & \forall t \in \{1,\hdots,T\}\label{eq:evaluatin_method}
%&t\in \{1,\hdots,T\} \label{eq:time_domain}
\end{align}
\end{subequations}
where ($\X^*,\x^*$) denotes the minimiser of \labelcref{eq:cbo} and the expression inside the brackets of \labelcref{eq:obj} is the simple regret metric \cite{garnett_bayesoptbook_2023}. Further, \labelcref{eq:cost_constraint} -- \labelcref{eq:evaluatin_method} define the cost and domain constraints. The time-horizon $T$ is defined as the largest $t \in \mathbb{N}$ for which \labelcref{eq:cost_constraint} is satisfied.

The above problem is challenging for two reasons. First, to select the optimal intervention $\DO{\X'_t}{\x'_t}$ to evaluate at each stage $t \in \{1,\hdots,T\}$, it is necessary to take into account both \textit{exploration} (evaluating the causal effects in regions of high uncertainty) and \textit{exploitation} (evaluating the causal effects in regions deemed promising based on previous evaluations). Second, in selecting the evaluation procedures $(e_t)_{t \in \{1,\hdots,T\}}$, it is necessary to balance the trade-off between \textit{intervening} (estimating causal effects through controlled experimental evaluations) and \textit{observing} (estimating causal effects through the do-calculus).

The exploration-exploitation trade-off is well-studied in the statistical learning literature (see textbooks \citep{garnett_bayesoptbook_2023, lattimore2020bandit}) and numerous \textit{acquisition functions} that balance this trade-off have been proposed \cite{srinivas2012information,garnett_bayesoptbook_2023,lattimore2020bandit}. In contrast, the observation-intervention trade-off, which is the focus of this paper, is still relatively unexplored. In the following sections, we formulate this trade-off as an optimal stopping problem and present our main solution approach -- \emph{Optimal Stopping for Causal Optimisation}.

\section{Optimal stopping formulation of the observation-intervention trade-off}\label{sec:os_formulation}

\begin{wrapfigure}{r}{0.55\textwidth}
\vspace{-1em}
    \centering
    \includegraphics[width=0.54\textwidth]{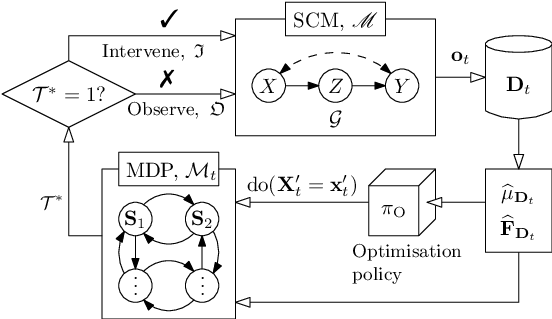}
    \caption{Optimal Stopping for Causal Optimisation (\osco) to balance the \textit{intervention-observation} trade-off; an optimisation policy $\pi_O$ decides on a sequence of interventions $(\DO{\X_t}{\x_t})_{t\in\{1,\hdots,T\}}$ to evaluate in an \scm $\mathscr{M}$ and the procedures to evaluate these interventions are decided by solving optimal stopping problems $(\mathcal{M}_t)_{t\in\{1,\hdots,T\}}$.}
     \label{fig:stopping_system}
\vspace{-1em}
\end{wrapfigure}

We formulate the problem of designing the sequence of estimation procedures $(e_t)_{t \in \{1,\hdots,T\}}$ as a series of \textit{optimal stopping} problems \cite{wald,shirayev,stopping_book_1,chow1971great}. In this formulation, we assume that an optimisation policy $\pi_{\mathrm{O}}$ that inspects the available data and selects the intervention $\DO{\X'_t}{\x'_t}$ to evaluate at each stage $t$, is given. We place no restrictions on how this policy is obtained or implemented. It may, for example, be derived from an acquisition function that balances the exploration-exploitation trade-off, as is done in e.g. \cbo \citep{cbo}. We further assume that the objective function $\mu$ \labelcref{eq:cbo} and the functions $\mat{F}$ of the underlying \scm are estimated by the probabilistic models $\widehat{\mu}_{\D_t}$ and $\widehat{\mat{F}}_{\D_t}$, respectively. Here $\D_t$ represents the available data at stage $t$ of the optimisation and as $|\D_t| \rightarrow \infty$, $\widehat{\mu}_{\D_t} \rightarrow \mu$ and $\widehat{\mat{F}}_{\D_t} \rightarrow \mat{F}$.

The  models $\widehat{\mu}$ and $\widehat{\mat{F}}$ allow us to guide the optimisation process and quantify the expected value and uncertainty in different regions of the interventional space \labelcref{eq:cbo}. Specifically, $\widehat{\mat{F}}_{\D_t}$ allows us to estimate causal effects through the do-calculus and $\widehat{\mu}_{\D_t}$ represents the current knowledge of the causal effects, allowing the optimisation policy $\pi_{\mathrm{O}}$ to make informed decisions about which intervention to evaluate at each stage.

Given the optimisation policy and the probabilistic models defined above, we seek to design the sequence $(e_t)_{t\in \{1,\hdots,T\}}$ to optimally allocate the available evaluation budget between intervening and observing so as to minimise \labelcref{eq:obj}. This task can be formally expressed as a series of Markovian and stationary optimal stopping problems $\mathcal{M}_1, \hdots, \mathcal{M}_T$ (see  \cref{fig:stopping_system}). To see this, note that, at any stage $t$ of the optimisation, the models $\widehat{\mu}_{\D_t}$ and $\widehat{\mat{F}}_{\D_t}$ allow us to simulate the growth of the dataset $\D_t$ and plan ahead when deciding between intervening and observing. This look-ahead planning involves two well-known challenges: \textit{i}) the possibly mis-specified models $\widehat{\mu}_{\D_t}$ and $\widehat{\mat{F}}_{\D_t}$ may lead to error-propagation when simulating many steps into the future \cite{nomyopic_bo_5}; and \textit{ii}) the number of possible simulation trajectories of $\D_t$ is infinite, which means that the planning problem corresponds to solving an intractable Markov Decision Process (\mdp) \cite{puterman,nomyopic_bo_1}. Most existing algorithms deal with these problems by truncating the planning horizon to one step \cite{garnett_bayesoptbook_2023,cbo,dcbo,model_based_cbo}. We propose to instead truncate the planning horizon to the next intervention, which may involve simulating many observation steps. This means that the growth of the dataset $\D_t$ follows a stationary Markov process governed by the probability law
\begin{align}
\mathbf{S}_1 &\triangleq \D_t, \quad \mathbf{S}_{T+1} \triangleq \bot, \quad \mathbf{S}_{k+1}\triangleq \mathbf{S}_{k}\cup \{\mathbf{o}_{k+1},k+1\} \text{ }(k<T), \quad \mathbf{o}_{k<T} \sim \widehat{\mat{F}}_{\D_t}\left (\mathbf{O}^{\mathbf{X}_t}_{\graph,Y} \right)\label{eq:stopping_dynamics}
\end{align}
where $\mathbf{S}_k \in \mathbf{S}$ denotes the state of the process at time-step $k$ and $\bot$ is an absorbing terminal state. At each time-step $k>1$ of this process, a new observation $\mathbf{o}_k$ is sampled from $\widehat{\mat{F}}_{\D_t}$ and added to the dataset $\D_t \cup \{\mathbf{o}_2,\hdots,\mathbf{o}_{k-1}\}$, which results in a new state $\mathbf{S}_{k+1}$. The process is stopped whenever an intervention ($e_t=\intervene$) is carried out. Thus the problem of deciding between intervening and observing becomes one of optimal stopping, where the goal is to find an optimal \textit{stopping time} $\mathcal{T}^{*}$:
\begin{align}
&\mathcal{T}^{*} \in \argmax_{\mathcal{T} \in \{1,\hdots,T\}}\left\{\gamma^{\mathcal{T}-1}r(\mathbf{S}_{\mathcal{T}})-\sum_{k=1}^{\mathcal{T}-1}\gamma^{k-1}c(\X'_t, \observe)\right\} & \text{subject to \labelcref{eq:cost_constraint} -- \labelcref{eq:evaluatin_method}}\label{eq:stopping_time_problem}
\end{align}
where $\mathcal{T} = \inf\{t: t \geq 1, \text{ }e_t=\intervene\}$ and $r(\mathbf{S}_{\mathcal{T}})$ denotes the reward of intervening (stopping) at time $\mathcal{T}$. Note that if the observation process has not been stopped at time $k=T-1$, the cost constraint in \labelcref{eq:cost_constraint} forces it to stop at time $T$, even if no intervention is carried out. We refer the reader to \cref{sec:mdp_background} for background on optimal stopping theory.

Due to the Markov property, the stopping problem can equivalently be formulated as an \mdp and any stopping time that satisfies \labelcref{eq:stopping_time_problem} is also a solution to the following Bellman equation \cite[Thm. 3.2]{chow1971great}
\begin{align}
\max_{e_t \in \{\intervene, \observe\}}\left\{r(\mathbf{S}_{k}), \int\hdots \int_{\dom{\mathbf{O}^{\mathbf{X}_t}_{\graph,Y}}}P_{\widehat{\mat{F}_{\D_t}}}(\mathbf{o}_{k}) V^{*}(\mathbf{S}_{k} \cup \{\mathbf{o}_{k+1}\}) \operatorname{d}\mathbf{O}^{\mathbf{X}_t}_{\graph,Y} - c(\X'_t, \observe)\right\}\label{eq:stopping_time_bellman}
\end{align}
subject to \labelcref{eq:cost_constraint} -- \labelcref{eq:evaluatin_method} and \labelcref{eq:stopping_dynamics}.

By solving \labelcref{eq:stopping_time_problem}, we obtain the optimal stopping time $\mathcal{T}^{*}$, which decides the next evaluation procedure $e_t$. In particular, if $\mathcal{T}^{*}=1$, the causal effect is estimated by intervening ($e_t=\intervene$) and otherwise the causal effect is estimated by observing ($e_t=\observe$). In either case, the resulting samples are used to update the probabilistic models $\widehat{\mu}_{\D_t}$ and $\widehat{\mat{F}}_{\D_t}$ and proceed to the next stage of the optimisation, wherein the next stopping problem $\mathcal{M}_{t+1}$ is defined. Note that a solution to \labelcref{eq:stopping_time_problem} always exists since $\mathcal{T}$ is restricted to the finite set $\{1,\hdots,T\}$, where $T < \infty$ \cite[Thm. 3.2]{chow1971great}.
%We refer to this approach to balancing the intervention-observation trade off as \emph{Optimal Stopping for Causal Optimisation} (\osco).

\paragraph{The stopping reward.} A key issue in the design of the above stopping problem is the stopping reward $r(\mathbf{S}_{\mathcal{T}})$, which models how beneficial it is to intervene given the state $\mathbf{S}_{\mathcal{T}}$. An intervention can be beneficial to the optimisation in two ways. First, it can improve the current estimate of the optimum. Second, it can reduce the uncertainty in the objective function $\mu$. We model these two benefits with $\widehat{\mu}_{\mathbf{S}_k}$ and the \textit{information gain} measure $I$ \cite{Cover2006}, respectively:
\begin{align}
r(\mathbf{S}_T) &\triangleq \eta I(\mathbf{S}_k;\mu) \text{ and } r(\mathbf{S}_{k<T}) \triangleq \eta I(\mathbf{S}_k;\mu) -\kappa\widehat{\mu}_{\mathbf{S}_k}(\X_k', \x_{k}') -\tau\frac{\text{Vol}(\V)}{\text{Vol}(\mathbf{S}_{k})} - c(\X'_k, \intervene)\label{eq:stopping_reward}
\end{align}
Here $r(\bot)\triangleq 0$ and $\eta, \tau$ and $\kappa$ are scalar constants. The information gain $I(\mathbf{S}_k;\mu)=H(\mathbf{S}_k)-H(\mathbf{S}_k|\mu)$ quantifies the reduction in uncertainty about $\mu$ from revealing the dataset $\mathbf{S}_k$, where $H$ is the differential entropy function \cite{Cover2006,srinivas2012information}. The terms $\widehat{\mu}_{\mathbf{S}_k}(\X_k',\x_{k}')$ and $c(\X'_k, \intervene)$ quantify the expected value and the cost of the intervention, respectively. Finally, $\frac{\text{Vol}(\V)}{\text{Vol}(\mathbf{S}_k)}$ denotes the convex hull of the interventional domain of $\V$ divided by the convex hull of the observations in $\mathbf{S}_k$. The purpose of this term is to incentivise collection of observations in the beginning of the optimisation when $|\D_t|$ is small and it is not possible to plan ahead using the models $\widehat{\mu}$ and $\widehat{\mat{F}}$ (a similar term is used in \cite{cbo}).
%(Finally we note that there are many possible ways to define $r(\mathbf{S}_T)$ and leave it for future work to investigate other models.)
%Intuitively, if $I(\mathbf{S}_{k+1};\mu)-I(\mathbf{S}_k;\mu)$ is large and the cost of observing is less than the cost of intervening, it is optimal to observe (i.e. $\mathcal{T}^{*} > 1$). If, on the other hand, $I(\mathbf{S}_{k+1};\mu)-I(\mathbf{S}_k;\mu)$ is low (i.e the information gain of observing is small), then it is generally optimal to intervene (i.e. $\mathcal{T}^{*}=1$).
%To optimally choose between intervening and observing at each stage $t \in\{1,\hdots,T\}$, there are two aspects to consider: \textit{i}) the costs associated with intervening and observing ($c(\X'_t, \intervene)$ and $c(\X'_t, \observe)$); and \textit{ii}) the impact that intervening and observing have on future stages of the optimisation.

%Equipped with \cref{eq:stopping_dynamics} and \cref{eq:stopping_reward} we can now formally state the optimal stopping problem as
%\begin{align}
%&\mathcal{T}^{*} \in \argmax_{\mathcal{T} \in \{1,\hdots,T\}}\left\{\gamma^{\mathcal{T}-1}r(\mathbf{S}_{\mathcal{T}})-\sum_{k=1}^{\mathcal{T}-1}\gamma^{k-1}c(\X'_t, \observe)\right\} & \text{subject to eqs. (\ref{eq:cost_constraint}) -- (\ref{eq:evaluatin_method}).}\label{eq:stopping_time_problem}
%\end{align}
%(see fig. \ref{fig:optimal_stopping_formulation})
%\input{figures/optimal_stopping_1.tex}

\subsection{Efficient computation of the optimal stopping time}
\Cref{eq:stopping_time_bellman} implies that it is optimal to intervene (stop) whenever $r(\mathbf{S}_{k}) \geq \alpha_{\mathbf{S}_{k}}$, where $\alpha_{\mathbf{S}_{k}}$ denotes the second expression inside the maximisation in \labelcref{eq:stopping_time_bellman}. This means that we can divide the state space into two subsets defined by
\begin{align}
\mathscr{S}_{T-k} \triangleq \{\mathbf{S}_k \mid \mathbf{S}_k \in \mathbf{S}, r(\mathbf{S}_{k}) \geq \alpha_{\mathbf{S}_{k}}\} \quad\text{and}\quad \mathscr{C}_{T-k} \triangleq \{\mathbf{S}_k \mid \mathbf{S}_k \in \mathbf{S}, r(\mathbf{S}_{k}) < \alpha_{\mathbf{S}_{k}}\}\label{eq:stopping_sets}
\end{align}
where $\mathscr{S}_{l}$ and $\mathscr{C}_l$ are the stopping and continuation sets with $l$ time-steps remaining, respectively. These sets cannot overlap and their union $\mathscr{S}_l\cup \mathscr{C}_l$ covers the state space $\mathbf{S}$. Since the set of admissible stopping times in \labelcref{eq:stopping_time_problem} decreases as $l \rightarrow 1$, the stopping sets form an increasing sequence $\mathscr{S}_l \subseteq \mathscr{S}_{l-1} \subseteq \hdots \subseteq \mathscr{S}_{1}$ and similarly the continuation sets form a decreasing sequence $\mathscr{C}_l \supseteq \mathscr{C}_{l-1} \supseteq \hdots \supseteq \mathscr{C}_{1}$. Using these sets, the optimal stopping time can be expressed as
\begin{align}
\mathcal{T}^{*} &= \min\left\{k \mid k \in \mathbb{N}, \text{ }\mathbf{S}_{T-k} \in \mathscr{S}_k \right\}\label{eq:stopping_time_redefined}
\end{align}
Based on \labelcref{eq:stopping_sets}-\labelcref{eq:stopping_time_redefined} we state the following structural result regarding the optimal stopping times for the stopping problem defined in \labelcref{eq:stopping_time_problem}.
\begin{theorem}\label{thm:closed_stopping_set}
Given the stopping problem in \labelcref{eq:stopping_time_problem}, if a) the optimisation policy $\pi_{\mathrm{O}}$ is such that $\widehat{\mu}_{\mathbf{S}}(\pi_{\mathrm{O}}(\mathbf{S}))$ is supermodular and $c(\pi_{\mathrm{O}}(\mathbf{S}), \intervene)$ is non-increasing in $|\mathbf{S}|$; and b) $I(\mathbf{S}_k;\mu)$ is submodular, then $\mathscr{S}_{1}$ is closed. That is, $P(\mathbf{S}_{k+1}=\mathbf{s}_{k+1} \mid \mathbf{S}_{k}=\mathbf{s}_{k}) = 0$ if $\mathbf{s}_{k} \in \mathscr{S}_{1}$ and $\mathbf{s}_{k+1} \not\in \mathscr{S}_{1}$.
\end{theorem}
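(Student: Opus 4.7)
The plan is to verify that the defining inequality of $\mathscr{S}_1$ is preserved along the sample paths of the Markov chain in \labelcref{eq:stopping_dynamics}. From the Bellman equation \labelcref{eq:stopping_time_bellman}, writing $\alpha_{\mathbf{S}_k}$ for the second argument of the inner maximisation, $\mathscr{S}_1 = \{\mathbf{S} \in \mathbf{S} : \psi(\mathbf{S}) \geq 0\}$ where $\psi(\mathbf{S}) \triangleq r(\mathbf{S}) - \alpha_{\mathbf{S}}$. Closure is therefore equivalent to showing $\psi(\mathbf{S}_{k+1}) \geq 0$ almost surely conditional on $\mathbf{S}_k \in \mathscr{S}_1$, which I will obtain from a submodularity-based monotonicity property of $r$.

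First, I would verify term-by-term that $r$ is submodular in the accumulated dataset. Condition (b) directly yields that $\eta I(\cdot;\mu)$ has non-increasing marginals in $\mathbf{S}$. Condition (a), namely the supermodularity of $\widehat{\mu}_{\mathbf{S}}(\pi_{\mathrm{O}}(\mathbf{S}))$ and the fact that the intervention cost $c(\pi_{\mathrm{O}}(\mathbf{S}),\intervene)$ is non-increasing in $|\mathbf{S}|$, implies that $-\kappa\widehat{\mu}_{\mathbf{S}}(\pi_{\mathrm{O}}(\mathbf{S}))$ is submodular and that $-c(\pi_{\mathrm{O}}(\mathbf{S}),\intervene)$ has non-decreasing marginals. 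Finally, the reciprocal convex-hull term $-\tau\operatorname{Vol}(\V)/\operatorname{Vol}(\mathbf{S})$ is weakly submodular because adding a point to a larger set cannot enlarge the convex hull by more than adding it to a smaller one, so its reciprocal has non-increasing marginals. Aggregating, the expected one-step increment $\mathbb{E}_{\mathbf{o}}[r(\mathbf{S}\cup\{\mathbf{o}\}) - r(\mathbf{S})]$ is a non-increasing function of $\mathbf{S}$.

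Next, I would combine this with the Bellman recursion $\alpha_{\mathbf{S}} = \mathbb{E}_{\mathbf{o}}[V^{*}(\mathbf{S}\cup\{\mathbf{o}\})] - c(\pi_{\mathrm{O}}(\mathbf{S}),\observe)$ and $V^{*}=\max\{r,\alpha\}$. On the event $\{\psi(\mathbf{S}_k)\geq 0\}$ one has $r(\mathbf{S}_k)\geq \alpha_{\mathbf{S}_k}\geq \mathbb{E}[r(\mathbf{S}_{k+1})\mid\mathbf{S}_k] - c(\pi_{\mathrm{O}}(\mathbf{S}_k),\observe)$; the submodularity of $r$ then propagates this inequality one step ahead, giving $r(\mathbf{S}_{k+1})\geq \mathbb{E}[r(\mathbf{S}_{k+2})\mid\mathbf{S}_{k+1}] - c(\pi_{\mathrm{O}}(\mathbf{S}_{k+1}),\observe) \geq \alpha_{\mathbf{S}_{k+1}}$ almost surely, hence $\mathbf{S}_{k+1}\in\mathscr{S}_1$. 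This is exactly the \emph{monotone case} of optimal stopping \citep[Thm. 3.5]{chow1971great}, from which closure of $\mathscr{S}_1$ and simultaneous optimality of the one-step look-ahead rule follow.

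The main obstacle will be the composite term $\widehat{\mu}_{\mathbf{S}}(\pi_{\mathrm{O}}(\mathbf{S}))$: supermodularity of a function whose inner argument $\pi_{\mathrm{O}}(\mathbf{S})$ itself depends on $\mathbf{S}$ is not automatic, so condition (a) must be read as a joint structural assumption on the estimator $\widehat{\mu}$ and the selection rule $\pi_{\mathrm{O}}$, and the required marginal-decrease inequality has to be obtained via a monotone-selection/envelope argument rather than by direct substitution into a submodularity identity. The reciprocal convex-hull term also demands an elementary but careful geometric verification when observations live in a general continuous domain.
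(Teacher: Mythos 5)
Your plan follows essentially the same route as the paper's proof: both characterise $\mathscr{S}_1$ by the one-step-lookahead inequality $r(\mathbf{S})\geq \mathbb{E}_{\mathbf{o}}\left[r(\mathbf{S}\cup\{\mathbf{o}\})\right]-c(\X'_t,\observe)$ and show that this inequality propagates along sample paths because submodularity of $I$ and supermodularity of $\widehat{\mu}_{\mathbf{S}}(\pi_{\mathrm{O}}(\mathbf{S}))$ (together with stationarity of the simulated observation distribution and the non-increasing intervention cost) make the expected one-step increment of $r$ monotone --- i.e.\ the monotone case of optimal stopping, exactly as you cite. The one substantive deviation is your treatment of $-\tau\,\mathrm{Vol}(\V)/\mathrm{Vol}(\mathbf{S})$: the paper claims no submodularity for this term (convex-hull volume generally has \emph{increasing} marginal gains, so your ``non-increasing marginals'' claim is dubious) and instead uses only the elementary monotonicity $\mathrm{Vol}(\mathbf{S}_k)\leq \mathrm{Vol}(\mathbf{S}_{k+1})$ to discard it from the inequality, which is all the argument requires.
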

\begin{proof}
See \cref{sec:thm_1_proof}.
\end{proof}
Informally, \cref{thm:closed_stopping_set} states that if a state $\mathbf{s}_{k}$ is encountered for which it is better to intervene than to collect one more observation and then intervene, then no matter the next observation, the next state will always satisfy the same property. This result hinges on two assumptions. Assumption a) states, informally, that as the uncertainty about $\mu$ is reduced, the optimisation policy $\pi_{\mathrm{O}}$ explores less and instead prefers exploiting regions of the interventional space that are deemed promising based on $\widehat{\mu}_{\mathbf{S}}$. This assumption is for example satisfied by an $\epsilon$-greedy optimisation policy with decaying $\epsilon$. Similarly, the informal interpretation of assumption b) is that the gain of collecting observations reduces with the number of observations. The conditions for b) to hold are given in \cite[Prop. 2]{krause_submodular_ig} and are true in general. They hold for example if $\widehat{\mu}_{\mathbf{S}}$ is a Gaussian process (\gp) \cite{srinivas2012information}.

A direct consequence of \cref{thm:closed_stopping_set} is that the optimal stopping time can be obtained from a simple rule that is efficient to implement in practice, as stated in the following corollary.
\begin{corollary}\label{cr:olsa}
If assumptions a) and b) in Theorem \ref{thm:closed_stopping_set} hold, then the optimal stopping time is given by
\begin{align}
\mathcal{T}^{*} &= \min\left\{k \mid k \in \mathbb{N}, \text{ }r(\mathbf{S}_{k}) \geq \mathbb{E}_{\mathbf{o}_{k+1}}\left[r(\mathbf{S}_{k} \cup \{\mathbf{o}_{k+1}\})\right]-c(\X'_t, \observe) \right\}\label{eq:olsa}
\end{align}
and the stopping sets are all equal, i.e. $\mathscr{S}_1=\mathscr{S}_2=\hdots=\mathscr{S}_{T-1}$.
\end{corollary}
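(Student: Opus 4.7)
The plan is to derive \cref{cr:olsa} from \cref{thm:closed_stopping_set} via a short backward induction on the number of remaining time-steps $l \in \{1,\hdots,T-1\}$, establishing the chain of equalities $\mathscr{S}_1=\mathscr{S}_2=\hdots=\mathscr{S}_{T-1}$ and then rewriting \labelcref{eq:stopping_time_redefined} as the one-step-lookahead hitting time. This is the classical Chow--Robbins--Siegmund monotone case argument adapted to our finite-horizon Markovian stopping problem \labelcref{eq:stopping_time_problem}.

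First I would introduce the finite-horizon Bellman iterates $V^{*}_l$ indexed by steps remaining, with $V^{*}_0(\mathbf{S})=r(\mathbf{S})$ (forced stop at the budget boundary) and
\begin{align*}
V^{*}_{l}(\mathbf{S}_k) \;=\; \max\Bigl\{\, r(\mathbf{S}_k),\; \mathbb{E}_{\mathbf{o}_{k+1}}\!\bigl[V^{*}_{l-1}(\mathbf{S}_{k}\cup\{\mathbf{o}_{k+1}\})\bigr]-c(\X'_t,\observe) \,\Bigr\},
\end{align*}
so that $\mathscr{S}_l=\{\mathbf{S}_k : r(\mathbf{S}_k) \geq \mathbb{E}_{\mathbf{o}_{k+1}}[V^{*}_{l-1}(\mathbf{S}_{k}\cup\{\mathbf{o}_{k+1}\})]-c(\X'_t,\observe)\}$. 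Under this indexing, $\mathscr{S}_1$ coincides exactly with the one-step-lookahead set appearing on the right-hand side of \labelcref{eq:olsa}, since $V^{*}_0=r$.

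Second, I would carry out the induction. The inclusion $\mathscr{S}_{l+1}\subseteq \mathscr{S}_1$ is free from the monotonicity $\mathscr{S}_{l+1}\subseteq \mathscr{S}_l \subseteq \hdots \subseteq \mathscr{S}_1$ stated in the paragraph preceding \cref{thm:closed_stopping_set}. For the reverse inclusion, fix $\mathbf{s}\in\mathscr{S}_1$ and use \cref{thm:closed_stopping_set}: with probability one, the successor state $\mathbf{s}\cup\{\mathbf{o}_{k+1}\}$ also lies in $\mathscr{S}_1$. By the inductive hypothesis $\mathscr{S}_l=\mathscr{S}_1$, so the successor is in $\mathscr{S}_l$, hence $V^{*}_{l}$ coincides with $r$ at that successor. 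Taking expectations and invoking $\mathbf{s}\in\mathscr{S}_1$ then gives
\begin{align*}
\mathbb{E}_{\mathbf{o}_{k+1}}\!\bigl[V^{*}_{l}(\mathbf{s}\cup\{\mathbf{o}_{k+1}\})\bigr] - c(\X'_t,\observe) \;=\; \mathbb{E}_{\mathbf{o}_{k+1}}\!\bigl[r(\mathbf{s}\cup\{\mathbf{o}_{k+1}\})\bigr] - c(\X'_t,\observe) \;\leq\; r(\mathbf{s}),
\end{align*}
so $\mathbf{s}\in\mathscr{S}_{l+1}$. This closes the induction and yields $\mathscr{S}_1=\mathscr{S}_2=\hdots=\mathscr{S}_{T-1}$.

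Third, I would combine this with \labelcref{eq:stopping_time_redefined}: since every stopping set equals $\mathscr{S}_1$, the optimal stopping time is the first time the process enters $\mathscr{S}_1$, which by the definition of $\mathscr{S}_1$ in terms of $V^{*}_0 = r$ is exactly the rule in \labelcref{eq:olsa}. The main obstacle, and the only place a nontrivial input is used, is the step where closure of $\mathscr{S}_1$ is invoked to collapse $V^{*}_l$ to $r$ on successor states; this is precisely the content of \cref{thm:closed_stopping_set}, so conditional on that theorem the corollary follows by a clean one-paragraph induction. The remaining care is bookkeeping: ensuring the cost-budget constraint in \labelcref{eq:cost_constraint} forces $V^{*}_0=r$ at the horizon (so the base case is well-defined) and that the successor measure appearing in the closure statement matches the one governing the expectations in the Bellman recursion.
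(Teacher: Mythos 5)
Your proposal is correct and follows essentially the same route as the paper's proof: a backward induction on the number of remaining steps, using the closedness of $\mathscr{S}_1$ from \cref{thm:closed_stopping_set} to replace $V^{*}$ by $r$ on successor states, combined with the nesting $\mathscr{S}_l \subseteq \mathscr{S}_{l-1} \subseteq \hdots \subseteq \mathscr{S}_1$ to obtain equality of all stopping sets and hence the one-step-lookahead rule \labelcref{eq:olsa}. Your write-up is in fact slightly more explicit than the paper's in making the finite-horizon value iterates $V^{*}_l$ and the role of the inductive hypothesis in the collapse $V^{*}_l = r$ on successors precise, but the underlying argument is identical.
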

\begin{proof}
See Appendix \ref{sec:cr_1_proof}.
\end{proof}
Corollary \ref{cr:olsa} states that the stopping problem in \labelcref{eq:stopping_time_problem}, which characterises the observation-intervention trade-off for the optimisation problem in \labelcref{eq:formal_problem}, permits an optimal solution that is efficient to implement in practical algorithms. In the following sections, we compare this solution to existing approaches and explain how it can be integrated with existing algorithms for optimisation problems with causal structure (e.g. \cbo and causal \mab algorithms). The pseudo-code for integrating the optimal stopping problem with the existing algorithms is listed in \cref{alg:os_pseudocode} in \cref{sec:algorithms}.
%\footnote{A python implementation is available; link will be added post-review.}

%More specifically, the main computational cost in solving \labelcref{eq:olsa} is evaluating the expectation $\mathbb{E}_{\mathbf{o}_{k+1}}\left[r(\mathbf{S}_{k} \cup \{\mathbf{o}_{k+1}\})\right]$, which can be performed efficiently using Monte Carlo integration and parallel invocations of $\pi_{\mathrm{O}}$.

%\footnote{A Python implementation is available at \url{https://github.com/Limmen/os-cbo} \nd{submission has to be anonymous, we'll just say we'll release the code post submission.}}

% LocalWords:  optimisation dataset interventional minimise lookahead
% LocalWords:  Markovian endogeneous datasets hyperparameters cbo eq
% LocalWords:  dom maximise modelling mis Thm eqs maximisation
% LocalWords:  supermodular submodular characterises

\section{Related work}
\label{sec:related_work}
Problems of optimising decision variables arise in many settings, ranging from the control of physical and computer systems to managing entire economies \cite{puterman,powell}. Depending on the characteristics of the optimisation problem, different solution methods are appropriate (e.g. convex optimisation \cite{boyd_convex}, dynamic programming \cite{bert05} and black-box optimisation \cite{lattimore2020bandit,garnett_bayesoptbook_2023}). We limit the following discussion to related work that studies grey-box optimisation problems with known causal structure. This line of research can be divided into two main approaches: causal \bo and causal \mab{}s. %

\paragraph{Causal Bayesian optimisation.} The literature on \bo \cite{kushner_bo_orig,mockus_bo_orig} is extensive (see textbook \cite{garnett_bayesoptbook_2023} and survey \cite{shahriari2015taking}). Most of the prior work on \bo is focused on the black-box setting and ignores prior knowledge about the objective function (see the recent tutorial paper by \citet{astudillo2021thinking} for examples of prior knowledge). \bo problems with known causal structure are usually studied under the aegis of the \scm{} framework, as is the case in this paper and in \cite{cbo,dcbo,model_based_cbo,branchini23,bograph}. \citet{astudillo2021thinking} departs from this idea by instead leveraging function networks in place of \scm{}s. Another design choice which differ among existing works is the intervention model. This paper studies the hard intervention model, which is consistent with \cite{cbo,dcbo,branchini23}, but differ from \cite{astudillo2021thinking,model_based_cbo}, which study the soft intervention model. Further, all of the existing works (including this paper) except \cite{branchini23,bograph} have in common that they assume the causal structure to be known. \citet{branchini23} and \citet{bograph} do not make this assumption and instead explore techniques that combine causal discovery with causal \bo.
%Broadly, the current line of research in this sub-domain of \bo, aims to exploit expert (causal) knowledge of the input-space to find the maximiser (minimiser) as quickly and cheaply as possible.

This work differs from the previous research on causal \bo in two main ways. First, we propose a solution to balance the observation-intervention trade-off that emerges in grey-box optimisation problems with causal structure. Existing works either ignore this trade-off or rely on heuristics to balance it. Second, we quantify the costs associated with collecting observations and estimating causal effects via the do-calculus, which previous works do not (they generally assume that observations can be collected without cost).
%, deployed when the intervention domain is discrete
\paragraph{Causal multi-armed bandits.} Non-trivial dependencies amongst bandit arms are typically listed under \emph{structured} bandits. When that structure is explicitly causal the namesake follows \citep{lee2018structural, nair2021budgeted, lu2020regret}. The literature on causal bandits is richer than that of causal \bo{}. \citet{bareinboim2015bandits} were the first to explore the connection between causal reasoning and \mab algorithms. \citet{lattimore2016causal} and \citet{sen2017identifying} introduced methods for best-arm identification and non-trivial challenges that arise when unobserved confounders (\uc{}s) are present in the \scm, which is explored in \citep{lee2018structural,lee2020characterizing} where the authors introduce the notion of \pomis{}s for graphs with and without non-manipulative variables respectively. More recently in \citep{varici2022causal} the authors prove regret bounds for causal \mab{}s with linear \scm{}s, binary intervention domains and soft interventions. In the listed works thus far, the graph is assumed known. This assumption is relaxed in \citep{lu2021causal}. Another direction is \emph{budgeted} \mab{}s \citep{madani2004budgeted} where pulling an arm comes as a fixed cost and the agent has a finite budget which she has to spend judiciously to find the best arm subject to that limitation.

Similar to the existing work in causal \bo, the main differences between this paper and the previous work on causal \mab are a) that we propose a solution to the observation-intervention trade-off; and b) we quantify the costs associated with collecting observations. Further, to our knowledge, ours is the first study that combines the structured and the budgeted \mab{} approaches.

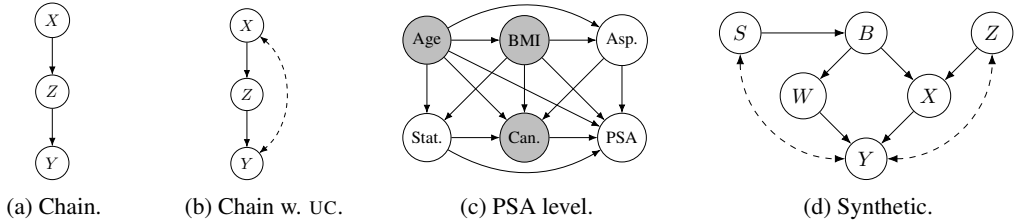
\begin{figure}[!htb]
  \centering
    \begin{subfigure}[t]{0.15\textwidth}
        \centering
        \resizebox{0.25\textwidth}{!}{%
           \begin{tikzpicture}[node distance =1.5cm]
                \node[state,circle] (X) {$X$};
                \node[state,circle, below of = X] (Z) {$Z$};
                \node[state,circle, below of = Z] (Y) {$Y$};
                \path (X) edge (Z);
                \path (Z) edge (Y);
            \end{tikzpicture}
        }%
        \caption{Chain.\label{fig:toygraph}}
    \end{subfigure}
    \hfill
    \begin{subfigure}[t]{0.15\textwidth}
        \centering
        \resizebox{0.44\textwidth}{!}{%
           \begin{tikzpicture}[node distance =1.5cm]
                \node[state,circle] (X) {$X$};
                \node[state,circle, below of = X] (Z) {$Z$};
                \node[state,circle, below of = Z] (Y) {$Y$};
                \path (X) edge (Z);
                \path (Z) edge (Y);
                \path[bidirected] (X) edge[bend left=50] (Y);
            \end{tikzpicture}
        }%
        \caption{Chain w. \uc{}.\label{fig:toygraph_w_uc}}
    \end{subfigure}
    \hfill
    \begin{subfigure}[t]{0.25\textwidth}
        \centering
        \resizebox{0.95\textwidth}{!}{%
           \begin{tikzpicture}[node distance =2cm]
                % t = 0
                \node[state,fill=gray!50,circle] (A) {Age};                 % Age
                \node[state,fill=gray!50,circle, right of = A] (B) {BMI};   % BMI
                \node[state,circle, right of = B] (C) {Asp.};                % Aspirin
                \node[state,circle, below of = A] (D) {Stat.};                % Statin
                \node[state,fill=gray!50,circle, right of = D] (E) {Can.};   % Cancer
                \node[state,circle, right of = E] (F) {PSA};               % PSA

                % Emission edges
                \path (A) edge (B);
                \path (A) edge[bend left=30] (C);
                \path (A) edge (F);
                \path (A) edge (E);
                \path (A) edge (D);
                \path (B) edge (C);
                \path (B) edge (F);
                \path (B) edge (E);
                \path (B) edge (D);
                \path (C) edge (E);
                \path (C) edge (F);
                \path (D) edge (E);
                \path (D) edge[bend right=30] (F);
                \path (E) edge (F);
            \end{tikzpicture}
        }%
        \caption{PSA level.\label{fig:psa_level}}
    \end{subfigure}
    \hfill
    \begin{subfigure}[t]{0.3\textwidth}
        \centering
        \resizebox{0.95\textwidth}{!}{%
           \begin{tikzpicture}[node distance =1.5cm]
                % t = 0
                \node[state,circle] (S) {$S$};
                \node[state,circle, below right of = S] (W) {$W$};
                \node[state,circle,circle, below right of = W] (Y) {$Y$};
                \node[state,circle, above right of = Y] (X) {$X$};
                \node[state,circle, above right of = X] (Z) {$Z$};
                \node[state, circle, above right of = W] (B) {$B$};
                % Emission edges
                \path (S) edge (B);
                \path (B) edge (W);
                \path (W) edge (Y);
                \path (B) edge (X);
                \path (X) edge (Y);
                \path (Z) edge (X);
                \path[bidirected] (Y) edge[bend right=45] (Z);
                \path[bidirected] (S) edge[bend right=45] (Y);
            \end{tikzpicture}
        }%
        \caption{Synthetic.\label{fig:synthetic_experiment}}
        \end{subfigure}
    \caption{Causal diagrams for the \scm{}s in the experimental evaluation; non-manipulative variables $\N \setminus \{Y\}$ are shaded and the outcome variable in each diagram is $Y \in \N$ apart from \cref{fig:psa_level} where the outcome variable is the PSA node.}
    \label{fig:experimental_DAGs}
\end{figure}

\section{Experimental evaluation}\label{sec:experiments}
We integrate \osco with state-of-the-art algorithms for optimisation problems with causal structure and evaluate these on a variety of synthetic and real-world \scm{s} with \DAG{}s given in \cref{fig:experimental_DAGs}.
%e first consider the synthetic chain \scm used in \cite{cbo} and \cite{model_based_cbo} (see \cref{fig:toygraph}). Then we consider two \scm{}s based on real-world scenarios: PSA level (see \cref{fig:psa_level}) and NEC level (see \cref{fig:nec_level}).%, and hotel bookings (see \cref{fig:hotel}).

\paragraph{Baselines.} Several algorithms for solving optimisation problems of the type defined in \labelcref{eq:cbo} (i.e. optimisation problems with causal structure) have emerged in recent years. These algorithms include \cbo \cite[Alg. 1]{cbo}, \mcbo \cite[Alg. 2]{model_based_cbo}, \dcbo \cite[Alg. 1]{dcbo}, \ceo \cite[Alg. 1]{branchini23}, the parallel bandit algorithm \cite[Alg. 1]{causal_bandits_5}, causal Thompson sampling \cite[Alg. 1]{causal_bandits_4,causal_bandits_2}, \textsc{c-ucb} \cite[Alg.1 ]{causal_bandits_3} and \textsc{kl-ucb} \cite[Alg. 3]{causal_bandits_1}. Among these algorithms, we choose to integrate our solution (\osco) with \cbo \cite[Alg. 1]{cbo}, \mcbo \cite[Alg. 2]{model_based_cbo} and \textsc{c-ucb} \cite[Alg.1 ]{causal_bandits_3} as those algorithms are most consistent with our problem setting and assumptions. We leave the integration of \osco with other algorithms to future work. We also compare \osco against three heuristic baselines: a) \textsc{intervene}, b) \textsc{observe} and c) \textsc{random}, which correspond to policies that a) always intervene, b) always observe and c) selects between intervening and observing randomly. These latter results can be found in the appendices.

\begin{figure}[!htb]
  \centering
\hspace{-1.5cm}
\begin{subfigure}[t]{0.45\textwidth}
  \centering
  \scalebox{1.2}{
    \includegraphics[width=0.95\linewidth]{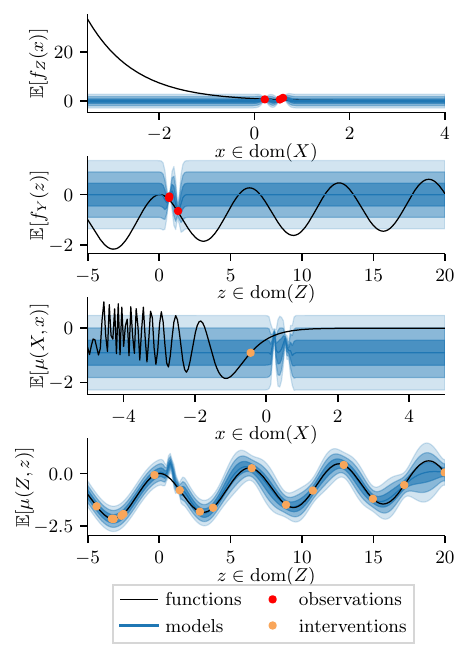}
    }
    \caption{\cbo with $\epsilon$-greedy (as used in \cite{cbo}).}
     \label{fig:cbo_gps_toygraph}
\end{subfigure}
% \hfill
\hspace{7mm}%
\begin{subfigure}[t]{0.45\textwidth}
  \centering
  \scalebox{1.2}{
    \includegraphics[width=0.95\linewidth]{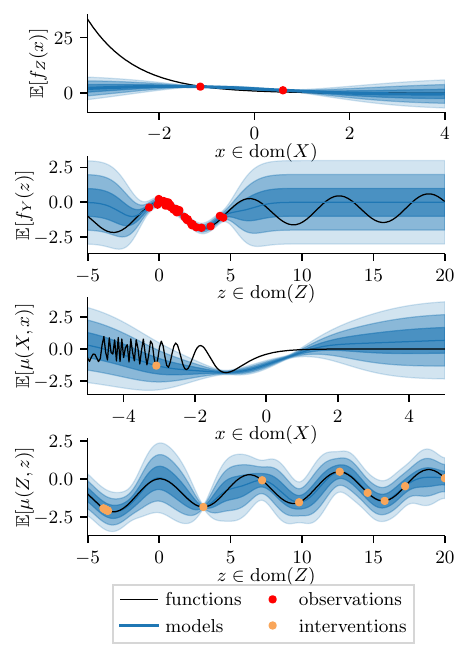}
    }
    \caption{\cbo with \osco.}
     \label{fig:cbo_os_gps_toygraph}
\end{subfigure}
%\hfill
%\begin{subfigure}[t]{0.32\textwidth}
%  \centering
%  \scalebox{1.2}{
%    \includegraphics[width=0.95\linewidth]{figures/}
%}
%    \caption{TODO}
%     \label{fig:stopping_system}
%   \end{subfigure}
 \caption{Collected data and estimated models from running \cbo with and without \osco to solve \labelcref{eq:cbo} for the chain \scm \cite[Fig. 3]{cbo} (see \cref{fig:toygraph}); the blue curves and the shaded blue areas show the mean and standard deviation of the estimated models $\widehat{\mat{F}}$ and $\widehat{\mu}$; the red and orange dots show observations and interventions respectively; the black lines show the \scm functions $\mat{F}$ and the causal effects $\mu$.}
 \label{fig:toygraph_gps}
\end{figure}

\begin{figure}
  \centering
    \includegraphics[width=1\linewidth]{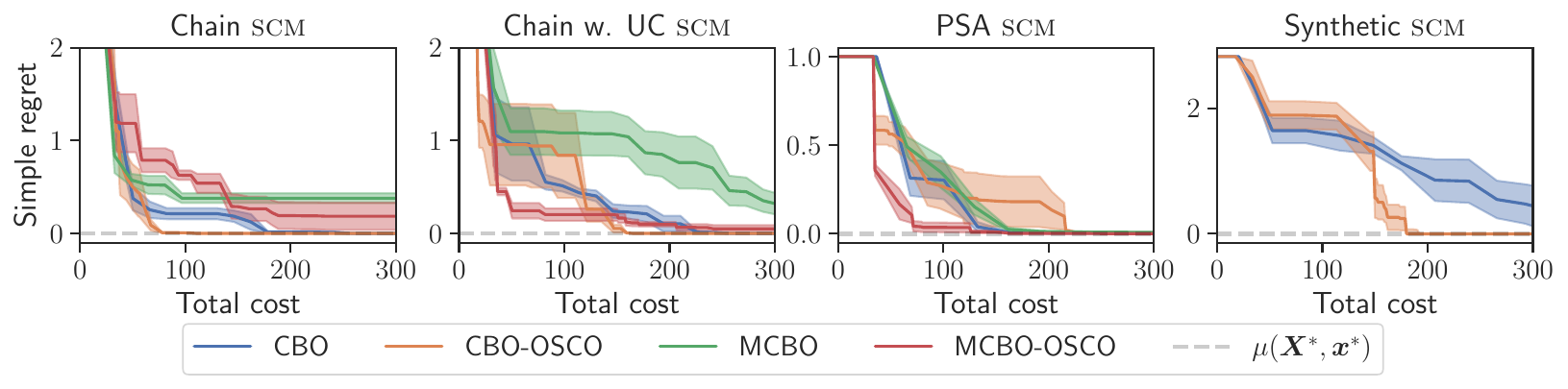}
    \begin{subfigure}[t]{0.5\textwidth}
        \centering
        \includegraphics[width=\linewidth]{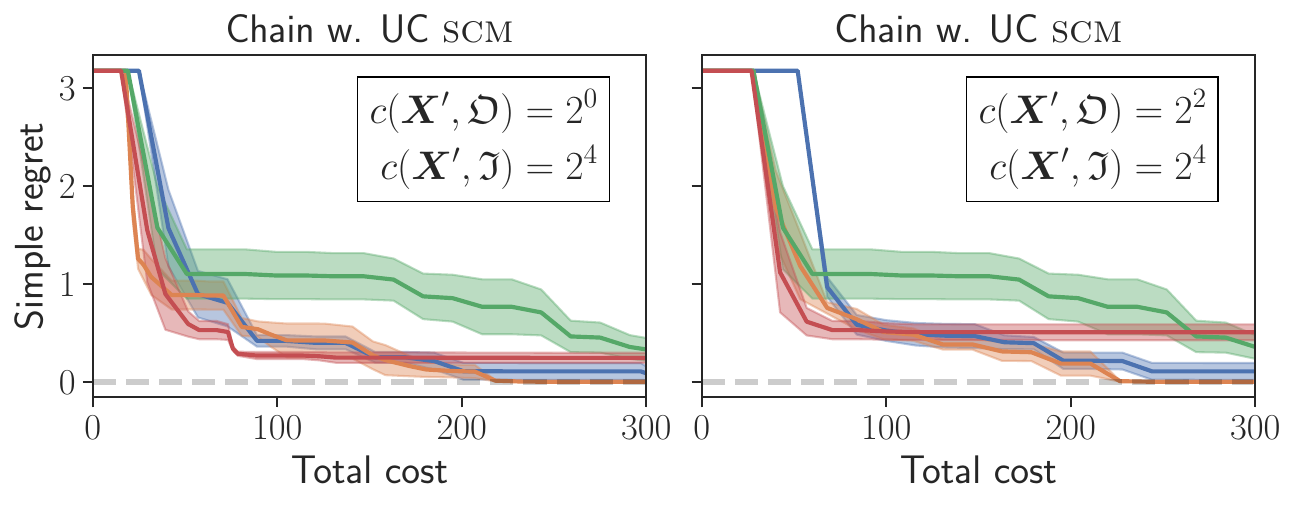}
    \end{subfigure}
    % \hspace{-0.5em}
    \begin{subfigure}[t]{0.24\textwidth}
        \centering
        \includegraphics[width=1.18\linewidth]{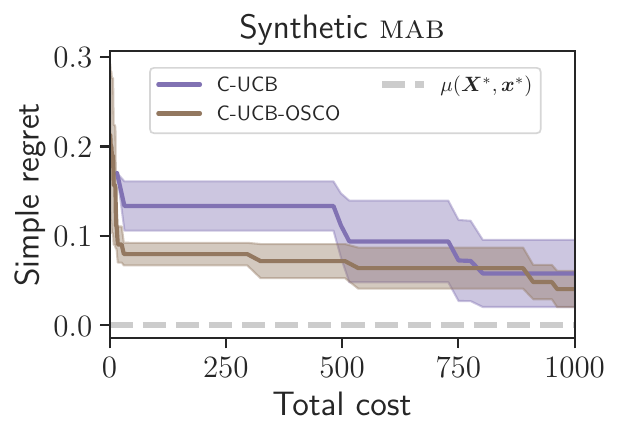}
    \end{subfigure}
    % \begin{subfigure}[t]{0.35\textwidth}
    %     \centering
    %     \vspace{-7em}
    %     \includegraphics[width=\linewidth]{figures/}
    % \end{subfigure}
    \caption{Convergence curves and computational overhead for \cbo, \mcbo and \textsc{c-ucb} with and without \osco for different \scm{}s and causal \mab{}s; the curves indicate the mean and standard deviation ($\pm \frac{\sigma}{\sqrt[]{3}}$) over three evaluations with different random seeds; the columns from left to right on the \textbf{first row} relate to the \scm{}s in \cref{fig:toygraph}, \cref{fig:toygraph_w_uc}, \cref{fig:psa_level} and \cref{fig:synthetic_experiment} respectively; the first two columns on the \textbf{second row} show convergence curves for different observation costs (all other curves are based on the observation cost $c(\X^{'}, \observe)=2^{-2}$) and the right-most column concerns the bandit version of the synthetic \scm in \cref{fig:synthetic_experiment}.
    % and the two right-most plots show the computational overhead of \osco.
    %\scm{}s in \cref{fig:nec_level}, \cref{fig:toygraph_w_uc} (bandit) and \cref{fig:graph_two} (bandit).     % \nd{Have to mess around with layout. But the bottom plot is definitely worth showing. But we need to discuss it; because the runs are of different lengths, I had to write a lot of interpolation code. Have a look in my branch and the plotting notebook for reference. Also in the bottom plot it looks like MCBO (green) is the same across all plots?}\kh{Looks good. Yeah they reach the budget constraints after different number of iterations depending on the algorithm and seed, which is annoying for plotting. That MCBO is the same across all three versions of the observation costs is because it always intervenes so the observation cost does not really affect.}
    }
     \label{fig:mega_convergence}
     \vspace{-1em}
\end{figure}

%We do not integrate our approach with \dcbo and \ceo since they consider different problems than we do in this paper -- \dcbo considers non-stationary \scm{}s and \ceo assumes that the causal graph is unknown. Among the causal \mab algorithms, we choose to integrate our approach with
\paragraph{Experiment setup.} We run all experiments with three different random seeds and show the convergence curves of the simple regret metric $\argmin_{k \in \{1,\hdots,t\}}\mu(\X_k, \x_k)-\mu(\X^{*}, \x^{*}_k)$ \labelcref{eq:cbo} for $t=1,\hdots,T$ \cite{garnett_bayesoptbook_2023} (this is consistent with the metric reported in \cite{cbo} but differs from \cite{model_based_cbo,causal_bandits_3} which reports the cumulative regret metric \cite{lattimore2020bandit}). Hyperparameters are listed in \cref{sec:hyperparameters} and were chosen based on cross-validation. Throughout, we assume that the dataset at the start of the optimisation is empty ($\D_1=\emptyset$). This contrasts with \cite{cbo} and \cite{model_based_cbo}, which assume $|\D_1| > 0$ in all experiments. When we compare against \cbo we utilise \mos (\cref{def:mos}) to reduce the observation costs. We do not utilise \mos when we compare against \mcbo as \mcbo relies on complete observations. Further, we only evaluate \mcbo on the \scm{}s available in the official implementation \cite{model_based_cbo}, namely the chain \scm{} and the PSA \scm{}. In all implementations of \osco, we implement the stopping rule implied in Corollary \ref{cr:olsa} (even in cases when the assumptions of the corollary do not hold, in which case it provides an approximation of the optimal stopping time). Complete experimental details can be found in the appendix.

\paragraph{Results discussion.}

\Cref{fig:toygraph_gps} shows the estimated probabilistic models $\widehat{\mu}$ and $\widehat{\mat{F}}$ when running \cbo with two different policies for balancing the intervention-observation trade-off: \textit{i}) the $\epsilon$-greedy policy used in \cite{cbo}; and \textit{ii}) the \osco approach described in \cref{sec:os_formulation}. We note in the lowest plots that the optimal intervention is $\DO{Z}{-3.20}$ (with target value $Y=-2.17$) and that this intervention is found in both cases. We further note that \cbo with \osco is able to accurately estimate the interventional distributions through the do-calculus. The main differences between \cbo with $\epsilon$-greedy and \cbo with \osco are a) \cbo with $\epsilon$-greedy collects only $3$ observations, spending most of the evaluation budget on interventions, whereas \cbo with \osco uses most of the evaluation budget to collect observations; and b) that \cbo with $\epsilon$-greedy observes all endogenous variables whereas  \cbo with \osco only observes the \mos $\mathbf{O}^{Z}_{\graph,Y}=\{Z,Y\}$. That \cbo uses most of the evaluation budget on intervening whereas \cbo with \osco uses most of the budget on observations can be explained by two main reasons. First, the definition of $\epsilon$ in \cite[Eq. 6]{cbo} implies that the probability of observing in \cbo with $\epsilon$-greedy is close to $0$ when the number of previously collected observations is low. Second, the optimal stopping formulation in \labelcref{eq:olsa} implies that \cbo with \osco will observe rather than intervene when it is more cost-effective.

\Cref{fig:mega_convergence} compares \osco with baselines. The first row in \cref{fig:mega_convergence} shows convergence curves of \cbo and \mcbo with and without \osco for the chain \scm (\cref{fig:toygraph} and \cref{fig:toygraph_w_uc} -- with and without an \uc respectively), the synthetic \scm (\cref{fig:synthetic_experiment}) and the PSA \scm (\cref{fig:psa_level}). An ablation study for different observation costs is shown in the two left-most plots of the second row of \cref{fig:mega_convergence}. The right-most plots in the second row of \cref{fig:mega_convergence} show a) converge curves of \textsc{c-ucb} with and without \osco for the bandit version of the synthetic \scm (\cref{fig:synthetic_experiment}); and b) the computational overhead of \osco. We note that \cbo with \osco outperforms \cbo and finds the optimal intervention for all \scm{}s within the prescribed evaluation budget. Similarly, we observe that \textsc{c-ucb} with \osco is more cost-efficient than \textsc{c-ucb} without \osco for the synthetic \scm. We explain the efficient convergence of \osco by its design, which a) uses look-ahead-planning to decide between observing and intervening based on what is most cost effective; and b) utilises \mos (\cref{def:mos}) to limit the variables that need to be observed. We further note that \mcbo with \osco outperforms \mcbo on the chain and PSA \scm{}s. Moreover, we observe that the performance of \cbo is better than that of \mcbo on average, which is consistent with the results reported in \cite{model_based_cbo}. This result can be explained by the design of \mcbo, which is optimised for the cumulative regret metric rather than the simple regret. Finally, we observe that the computational overhead of \osco per iteration is less than a factor of $2$. Extended evaluation results can be found in \cref{sec:ablation}.

\section{Conclusion}
\label{sec:conclusion}
\begin{wrapfigure}{r}{0.5\textwidth}
        \vspace{-6em}
        \centering
        \includegraphics[width=0.5\textwidth]{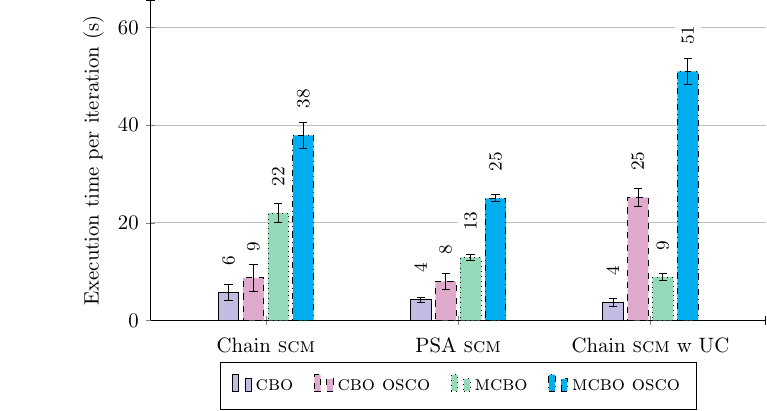}
    % \hfill
    % \begin{subfigure}[t]{0.3\linewidth}
    %     \centering
    %     \includegraphics[width=\linewidth]{figures/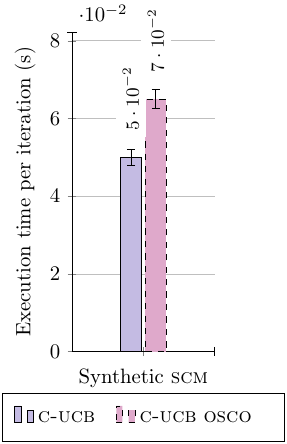}
    %     \caption{\mab}
    % \end{subfigure}
        \caption{Execution times per iteration with and without \osco for some of the example \scm{}s. The height of each bar indicates the mean execution time from $100$ measurements and the error bars indicate the standard deviations.}
        \vspace{-1em}
\end{wrapfigure}

We have formally defined the observation-intervention trade-off that emerges in optimisation problems with causal structure and have shown that this trade-off can be formulated as a non-myopic optimal stopping problem whose solution determines when a causal effect should be estimated by intervening and when it is more cost-effective to collect observational data. We have also characterised the minimal set of variables that need to be observed to estimate the causal effect -- the minimal observation set (\mos). Extensive evaluation results on real and synthetic \scm{}s show that the optimal stopping formulation can enhance existing algorithms and that the computational overhead is manageable. This paper opens up several directions for future research. One direction is to extend our model to include soft interventions and longer planning horizons. Another direction is to evaluate different reward functions in the optimal stopping problem.

%We plan to continue this work in several directions. First, we will extend our current model to include soft interventions. Second, we plan to extend the optimal stopping formulation to include longer planning horisons and use rollouts to approximate solutions to the intractable planning problems. Third, we plan to extend the

%Future work: soft interventions, different stopping reward functions, non-myopic selection of interventions, extended evaluation on bandits, non-stationary SCM, unknown causal graph, regret bounds.

%TODO discuss that a) in Thm 1. and Corollary 1. is a strong assumption and this may be removed in future work.

\clearpage
\bibliographystyle{icml2022}
\bibliography{references}

\begin{thebibliography}{68}
\providecommand{\natexlab}[1]{#1}
\providecommand{\url}[1]{\texttt{#1}}
\expandafter\ifx\csname urlstyle\endcsname\relax
  \providecommand{\doi}[1]{doi: #1}\else
  \providecommand{\doi}{doi: \begingroup \urlstyle{rm}\Url}\fi

\bibitem[{ABIM Foundation}(2014)]{blood_test}
{ABIM Foundation}.
\newblock {PSA Blood Test for Prostate Cancer}.
\newblock
  \url{https://www.choosingwisely.org/patient-resources/psa-test-for-prostate-cancer/},
  2014.
\newblock Accessed: 2023-03-23.

\bibitem[Aglietti et~al.(2020)Aglietti, Lu, Paleyes, and González]{cbo}
Aglietti, V., Lu, X., Paleyes, A., and González, J.
\newblock Causal {B}ayesian {O}ptimization.
\newblock In \emph{Proceedings of the Twenty Third International Conference on
  Artificial Intelligence and Statistics}, volume 108 of \emph{Proceedings of
  Machine Learning Research}, pp.\  3155--3164. PMLR, 26--28 Aug 2020.

\bibitem[Aglietti et~al.(2021)Aglietti, Dhir, Gonz\'{a}lez, and Damoulas]{dcbo}
Aglietti, V., Dhir, N., Gonz\'{a}lez, J., and Damoulas, T.
\newblock {Dynamic Causal Bayesian Optimization}.
\newblock In \emph{Advances in Neural Information Processing Systems},
  volume~35, 2021.

\bibitem[Alabed \& Yoneki(2022)Alabed and Yoneki]{bograph}
Alabed, S. and Yoneki, E.
\newblock Bograph: Structured bayesian optimization from logs for expensive
  systems with many parameters.
\newblock In \emph{Proceedings of the 2nd European Workshop on Machine Learning
  and Systems}, EuroMLSys '22, pp.\  45–53, New York, NY, USA, 2022.
  Association for Computing Machinery.
\newblock ISBN 9781450392549.
\newblock \doi{10.1145/3517207.3526977}.

\bibitem[Astudillo \& Frazier(2021)Astudillo and
  Frazier]{astudillo2021thinking}
Astudillo, R. and Frazier, P.~I.
\newblock Thinking inside the box: A tutorial on grey-box bayesian
  optimization.
\newblock In \emph{2021 Winter Simulation Conference (WSC)}, pp.\  1--15. IEEE,
  2021.

\bibitem[Bareinboim \& Pearl(2012)Bareinboim and Pearl]{bareinboim2012causal}
Bareinboim, E. and Pearl, J.
\newblock Causal inference by surrogate experiments: z-identifiability.
\newblock \emph{arXiv preprint arXiv:1210.4842}, 2012.

\bibitem[Bareinboim \& Pearl(2016)Bareinboim and Pearl]{bareinboim2016causal}
Bareinboim, E. and Pearl, J.
\newblock Causal inference and the data-fusion problem.
\newblock \emph{Proceedings of the National Academy of Sciences}, 113\penalty0
  (27):\penalty0 7345--7352, 2016.

\bibitem[Bareinboim et~al.(2015{\natexlab{a}})Bareinboim, Forney, and
  Pearl]{bareinboim2015bandits}
Bareinboim, E., Forney, A., and Pearl, J.
\newblock Bandits with unobserved confounders: A causal approach.
\newblock \emph{Advances in Neural Information Processing Systems},
  28:\penalty0 1342--1350, 2015{\natexlab{a}}.

\bibitem[Bareinboim et~al.(2015{\natexlab{b}})Bareinboim, Forney, and
  Pearl]{causal_bandits_4}
Bareinboim, E., Forney, A., and Pearl, J.
\newblock Bandits with unobserved confounders: A causal approach.
\newblock In Cortes, C., Lawrence, N., Lee, D., Sugiyama, M., and Garnett, R.
  (eds.), \emph{Advances in Neural Information Processing Systems}, volume~28.
  Curran Associates, Inc., 2015{\natexlab{b}}.

\bibitem[Bather(2000)]{bather_decision_theory}
Bather, J.
\newblock \emph{Decision Theory: An Introduction to Dynamic Programming and
  Sequential Decisions}.
\newblock John Wiley and Sons, Inc., USA, 2000.
\newblock ISBN 0471976490.

\bibitem[Bellman(1957{\natexlab{a}})]{bellman1957markovian}
Bellman, R.
\newblock A markovian decision process.
\newblock \emph{Journal of Mathematics and Mechanics}, 6\penalty0 (5):\penalty0
  679--684, 1957{\natexlab{a}}.

\bibitem[Bellman(1957{\natexlab{b}})]{bellman_eq}
Bellman, R.
\newblock \emph{{Dynamic Programming}}.
\newblock Dover Publications, 1957{\natexlab{b}}.
\newblock ISBN 9780486428093.

\bibitem[Bertsekas(2005)]{bert05}
Bertsekas, D.~P.
\newblock \emph{Dynamic Programming and Optimal Control}, volume~I.
\newblock Athena Scientific, Belmont, MA, USA, 3rd edition, 2005.

\bibitem[Boyd \& Vandenberghe(2004)Boyd and Vandenberghe]{boyd_convex}
Boyd, S. and Vandenberghe, L.
\newblock \emph{Convex Optimization}.
\newblock {Cambridge University Press}, March 2004.
\newblock ISBN 0521833787.

\bibitem[Branchini et~al.(2023)Branchini, Aglietti, Dhir, and
  Damoulas]{branchini23}
Branchini, N., Aglietti, V., Dhir, N., and Damoulas, T.
\newblock Causal entropy optimization.
\newblock In \emph{Proceedings of The 26th International Conference on
  Artificial Intelligence and Statistics}, volume 206 of \emph{Proceedings of
  Machine Learning Research}, pp.\  8586--8605. PMLR, 25--27 Apr 2023.

\bibitem[Chow et~al.(1971)Chow, Robbins, and Siegmund]{chow1971great}
Chow, Y., Robbins, H., and Siegmund, D.
\newblock Great expectations: The theory of optimal stopping.
\newblock In \emph{Journal of the Royal Statistical Society}, 1971.

\bibitem[Correa \& Bareinboim(2020)Correa and Bareinboim]{correa2020calculus}
Correa, J. and Bareinboim, E.
\newblock A calculus for stochastic interventions: Causal effect identification
  and surrogate experiments.
\newblock In \emph{Proceedings of the AAAI conference on artificial
  intelligence}, volume~34, pp.\  10093--10100, 2020.

\bibitem[Cover \& Thomas(2006)Cover and Thomas]{Cover2006}
Cover, T.~M. and Thomas, J.~A.
\newblock \emph{Elements of Information Theory 2nd Edition (Wiley Series in
  Telecommunications and Signal Processing)}.
\newblock Wiley-Interscience, July 2006.
\newblock ISBN 0471241954.

\bibitem[Evans et~al.(2000)Evans, Swartz, and Swartz]{evans2000approximating}
Evans, M., Swartz, T., and Swartz, A.
\newblock \emph{Approximating Integrals Via Monte Carlo and Deterministic
  Methods}.
\newblock Oxford statistical science series. Oxford University Press, 2000.
\newblock ISBN 9780198502784.

\bibitem[Ferro et~al.(2015)Ferro, Pina, Severo, Dias, Botelho, and
  Lunet]{ferro2015use}
Ferro, A., Pina, F., Severo, M., Dias, P., Botelho, F., and Lunet, N.
\newblock Use of statins and serum levels of prostate specific antigen.
\newblock \emph{Acta Urol{\'o}gica Portuguesa}, 32\penalty0 (2):\penalty0
  71--77, 2015.

\bibitem[{Frank DiVincenzo, k-health}(2022)]{cbc_test_cost}
{Frank DiVincenzo, k-health}.
\newblock {How much does blood work cost in 2022?}
\newblock
  \url{https://khealth.com/learn/healthcare/how-much-does-bloodwork-cost/},
  2022.
\newblock Accessed: 2023-04-17.

\bibitem[Garnett(2023)]{garnett_bayesoptbook_2023}
Garnett, R.
\newblock \emph{{Bayesian Optimization}}.
\newblock Cambridge University Press, 2023.
\newblock to appear.

\bibitem[GPyOpt(2016)]{gpyopt2016}
GPyOpt.
\newblock Gpyopt: A bayesian optimization framework in python.
\newblock \url{http://github.com/SheffieldML/GPyOpt}, 2016.

\bibitem[Hammar \& Stadler(2022)Hammar and Stadler]{hammar_stadler_tnsm}
Hammar, K. and Stadler, R.
\newblock Intrusion prevention through optimal stopping.
\newblock \emph{IEEE Transactions on Network and Service Management},
  19\penalty0 (3):\penalty0 2333--2348, 2022.
\newblock \doi{10.1109/TNSM.2022.3176781}.

\bibitem[Huang \& Valtorta(2006)Huang and Valtorta]{huang2006identifiability}
Huang, Y. and Valtorta, M.
\newblock Identifiability in causal bayesian networks: A sound and complete
  algorithm.
\newblock In \emph{AAAI}, pp.\  1149--1154, 2006.

\bibitem[Imber et~al.(2020)Imber, Varghese, Ehdaie, and
  Gorovets]{imber2020financial}
Imber, B.~S., Varghese, M., Ehdaie, B., and Gorovets, D.
\newblock Financial toxicity associated with treatment of localized prostate
  cancer.
\newblock \emph{Nature Reviews Urology}, 17\penalty0 (1):\penalty0 28--40,
  2020.

\bibitem[{Jacqueline Slobin, TalktoMira}(2022-08-22)]{statin_cost}
{Jacqueline Slobin, TalktoMira}.
\newblock {What's the Least Expensive Cholesterol Medication in 2021?}
\newblock
  \url{https://www.talktomira.com/post/how-much-do-statins-cost-without-insurance},
  2022-08-22.
\newblock Accessed: 2023-04-17.

\bibitem[Krause \& Guestrin(2005)Krause and Guestrin]{krause_submodular_ig}
Krause, A. and Guestrin, C.
\newblock Near-optimal nonmyopic value of information in graphical models.
\newblock In \emph{UAI}, pp.\  324--331. AUAI Press, 2005.
\newblock ISBN 0-9749039-1-4.

\bibitem[Kroon et~al.(2022)Kroon, Mooij, and Belgrave]{causal_bandits_2}
Kroon, A.~D., Mooij, J., and Belgrave, D.
\newblock Causal bandits without prior knowledge using separating sets.
\newblock In Schölkopf, B., Uhler, C., and Zhang, K. (eds.), \emph{Proceedings
  of the First Conference on Causal Learning and Reasoning}, volume 177 of
  \emph{Proceedings of Machine Learning Research}, pp.\  407--427. PMLR, 11--13
  Apr 2022.

\bibitem[Kushner(1962)]{kushner_bo_orig}
Kushner, H.~J.
\newblock A versatile stochastic model of a function of unknown and time
  varying form.
\newblock \emph{Journal of Mathematical Analysis and Applications}, 5\penalty0
  (1):\penalty0 150--167, 1962.
\newblock ISSN 0022-247X.
\newblock \doi{https://doi.org/10.1016/0022-247X(62)90011-2}.

\bibitem[Lattimore et~al.(2016{\natexlab{a}})Lattimore, Lattimore, and
  Reid]{causal_bandits_5}
Lattimore, F., Lattimore, T., and Reid, M.~D.
\newblock Causal bandits: Learning good interventions via causal inference.
\newblock In \emph{Proceedings of the 30th International Conference on Neural
  Information Processing Systems}, NIPS'16, pp.\  1189–1197, Red Hook, NY,
  USA, 2016{\natexlab{a}}. Curran Associates Inc.
\newblock ISBN 9781510838819.

\bibitem[Lattimore et~al.(2016{\natexlab{b}})Lattimore, Lattimore, and
  Reid]{lattimore2016causal}
Lattimore, F., Lattimore, T., and Reid, M.~D.
\newblock Causal bandits: Learning good interventions via causal inference.
\newblock In \emph{Advances in Neural Information Processing Systems}, pp.\
  1181--1189, 2016{\natexlab{b}}.

\bibitem[Lattimore \& Szepesv{\'a}ri(2020)Lattimore and
  Szepesv{\'a}ri]{lattimore2020bandit}
Lattimore, T. and Szepesv{\'a}ri, C.
\newblock \emph{Bandit algorithms}.
\newblock Cambridge University Press, 2020.

\bibitem[Lee \& Bareinboim(2018)Lee and Bareinboim]{lee2018structural}
Lee, S. and Bareinboim, E.
\newblock Structural causal bandits: where to intervene?
\newblock \emph{Advances in Neural Information Processing Systems}, 31, 2018.

\bibitem[Lee \& Bareinboim(2019)Lee and Bareinboim]{causal_bandits_1}
Lee, S. and Bareinboim, E.
\newblock Structural causal bandits with non-manipulable variables.
\newblock \emph{Proceedings of the AAAI Conference on Artificial Intelligence},
  33\penalty0 (01):\penalty0 4164--4172, Jul. 2019.
\newblock \doi{10.1609/aaai.v33i01.33014164}.

\bibitem[Lee \& Bareinboim(2020)Lee and Bareinboim]{lee2020characterizing}
Lee, S. and Bareinboim, E.
\newblock Characterizing optimal mixed policies: Where to intervene and what to
  observe.
\newblock \emph{Advances in neural information processing systems}, 33, 2020.

\bibitem[Lu et~al.(2020{\natexlab{a}})Lu, Meisami, Tewari, and
  Yan]{causal_bandits_3}
Lu, Y., Meisami, A., Tewari, A., and Yan, W.
\newblock Regret analysis of bandit problems with causal background knowledge.
\newblock In Peters, J. and Sontag, D. (eds.), \emph{Proceedings of the 36th
  Conference on Uncertainty in Artificial Intelligence (UAI)}, volume 124 of
  \emph{Proceedings of Machine Learning Research}, pp.\  141--150. PMLR, 03--06
  Aug 2020{\natexlab{a}}.

\bibitem[Lu et~al.(2020{\natexlab{b}})Lu, Meisami, Tewari, and
  Yan]{lu2020regret}
Lu, Y., Meisami, A., Tewari, A., and Yan, W.
\newblock Regret analysis of bandit problems with causal background knowledge.
\newblock In \emph{Conference on Uncertainty in Artificial Intelligence}, pp.\
  141--150. PMLR, 2020{\natexlab{b}}.

\bibitem[Lu et~al.(2021)Lu, Meisami, and Tewari]{lu2021causal}
Lu, Y., Meisami, A., and Tewari, A.
\newblock Causal bandits with unknown graph structure.
\newblock \emph{Advances in Neural Information Processing Systems},
  34:\penalty0 24817--24828, 2021.

\bibitem[Madani et~al.(2004)Madani, Lizotte, and Greiner]{madani2004budgeted}
Madani, O., Lizotte, D.~J., and Greiner, R.
\newblock The budgeted multi-armed bandit problem.
\newblock In \emph{International Conference on Computational Learning Theory},
  pp.\  643--645. Springer, 2004.

\bibitem[Mo{\v{c}}kus(1975)]{mockus_bo_orig}
Mo{\v{c}}kus, J.
\newblock On bayesian methods for seeking the extremum.
\newblock In Marchuk, G.~I. (ed.), \emph{Optimization Techniques IFIP Technical
  Conference Novosibirsk, July 1--7, 1974}, pp.\  400--404, Berlin, Heidelberg,
  1975. Springer Berlin Heidelberg.
\newblock ISBN 978-3-540-37497-8.

\bibitem[Nair et~al.(2021)Nair, Patil, and Sinha]{nair2021budgeted}
Nair, V., Patil, V., and Sinha, G.
\newblock Budgeted and non-budgeted causal bandits.
\newblock In \emph{International Conference on Artificial Intelligence and
  Statistics}, pp.\  2017--2025. PMLR, 2021.

\bibitem[Pate et~al.(2014)Pate, Uhlman, Rosenthal, Cram, and
  Erickson]{pate2014variations}
Pate, S.~C., Uhlman, M.~A., Rosenthal, J.~A., Cram, P., and Erickson, B.~A.
\newblock {Variations in the open market costs for prostate cancer surgery: a
  survey of US hospitals}.
\newblock \emph{Urology}, 83\penalty0 (3):\penalty0 626--631, 2014.

\bibitem[Pearl(2009{\natexlab{a}})]{causality2009}
Pearl, J.
\newblock \emph{Causality: Models, Reasoning and Inference}.
\newblock Cambridge University Press, USA, 2nd edition, 2009{\natexlab{a}}.

\bibitem[Pearl(2009{\natexlab{b}})]{pearl2000causality}
Pearl, J.
\newblock \emph{Causality: models, reasoning and inference}.
\newblock Cambridge university press, 2009{\natexlab{b}}.

\bibitem[Pearl \& Robins(1995)Pearl and Robins]{pearl1995probabilistic}
Pearl, J. and Robins, J.~M.
\newblock Probabilistic evaluation of sequential plans from causal models with
  hidden variables.
\newblock In \emph{UAI}, volume~95, pp.\  444--453. Citeseer, 1995.

\bibitem[Peskir \& Shiryaev(2006)Peskir and Shiryaev]{stopping_book_1}
Peskir, G. and Shiryaev, A.
\newblock \emph{{Optimal stopping and free-boundary problems}}.
\newblock Lectures in mathematics (ETH Zürich). Springer, 2006.

\bibitem[Powell(2011)]{powell}
Powell, W.~B.
\newblock \emph{Approximate Dynamic Programming: Solving the Curses of
  Dimensionality}.
\newblock Wiley Series in Probability and Statistics. Wiley, Hoboken, NJ, USA,
  2nd edition, 2011.

\bibitem[Puterman(1994)]{puterman}
Puterman, M.~L.
\newblock \emph{Markov Decision Processes: Discrete Stochastic Dynamic
  Programming}.
\newblock John Wiley and Sons, Inc., USA, 1st edition, 1994.
\newblock ISBN 0471619779.

\bibitem[Ross(1983)]{ross_stochastic_dp}
Ross, S.~M.
\newblock \emph{Introduction to Stochastic Dynamic Programming: Probability and
  Mathematical}.
\newblock Academic Press, Inc., USA, 1983.
\newblock ISBN 0125984200.

\bibitem[Rubinstein \& Kroese(2016)Rubinstein and Kroese]{rubinstein_mc}
Rubinstein, R.~Y. and Kroese, D.~P.
\newblock \emph{Simulation and the Monte Carlo Method}.
\newblock Wiley Publishing, 3rd edition, 2016.
\newblock ISBN 1118632168.

\bibitem[Sen et~al.(2017)Sen, Shanmugam, Dimakis, and
  Shakkottai]{sen2017identifying}
Sen, R., Shanmugam, K., Dimakis, A.~G., and Shakkottai, S.
\newblock Identifying best interventions through online importance sampling.
\newblock In \emph{International Conference on Machine Learning}, pp.\
  3057--3066. PMLR, 2017.

\bibitem[Shahriari et~al.(2015)Shahriari, Swersky, Wang, Adams, and
  De~Freitas]{shahriari2015taking}
Shahriari, B., Swersky, K., Wang, Z., Adams, R.~P., and De~Freitas, N.
\newblock Taking the human out of the loop: A review of bayesian optimization.
\newblock \emph{Proceedings of the IEEE}, 104\penalty0 (1):\penalty0 148--175,
  2015.

\bibitem[Shirayev(2007)]{shirayev}
Shirayev, A.~N.
\newblock \emph{Optimal Stopping Rules}.
\newblock Springer-Verlag Berlin, 2007.
\newblock Reprint of russian edition from 1969.

\bibitem[Shpitser \& Pearl(2006{\natexlab{a}})Shpitser and
  Pearl]{shpitser2006conditional}
Shpitser, I. and Pearl, J.
\newblock Identification of conditional interventional distributions.
\newblock In \emph{Proceedings of the Twenty-Second Conference on Uncertainty
  in Artificial Intelligence}, pp.\  437–444, Arlington, Virginia, USA,
  2006{\natexlab{a}}. AUAI Press.

\bibitem[Shpitser \& Pearl(2006{\natexlab{b}})Shpitser and
  Pearl]{shpitser2006identification}
Shpitser, I. and Pearl, J.
\newblock Identification of joint interventional distributions in recursive
  semi-markovian causal models.
\newblock In \emph{Proceedings of the 21st National Conference on Artificial
  Intelligence}, volume~2, pp.\  1219--1226. AAAI Press, 2006{\natexlab{b}}.

\bibitem[Shpitser \& Pearl(2008)Shpitser and Pearl]{shpitser2008complete}
Shpitser, I. and Pearl, J.
\newblock Complete identification methods for the causal hierarchy.
\newblock \emph{Journal of Machine Learning Research}, 9\penalty0 (9), 2008.

\bibitem[Snell(1952)]{Snell1952TAMS}
Snell, J.~L.
\newblock Applications of martingale system theorems.
\newblock \emph{Transactions of the American Mathematical Society},
  73:\penalty0 293--312, 1952.

\bibitem[Srinivas et~al.(2012)Srinivas, Krause, Kakade, and
  Seeger]{srinivas2012information}
Srinivas, N., Krause, A., Kakade, S.~M., and Seeger, M.~W.
\newblock Information-theoretic regret bounds for gaussian process optimization
  in the bandit setting.
\newblock \emph{IEEE Transactions on Information Theory}, 58\penalty0
  (5):\penalty0 3250--3265, 2012.

\bibitem[Sussex et~al.(2022)Sussex, Makarova, and Krause]{model_based_cbo}
Sussex, S., Makarova, A., and Krause, A.
\newblock Model-based causal bayesian optimization, 2022.

\bibitem[Tian \& Pearl(2002)Tian and Pearl]{tian2002general}
Tian, J. and Pearl, J.
\newblock A general identification condition for causal effects.
\newblock In \emph{Eighteenth National Conference on Artificial Intelligence},
  volume~18, pp.\  567--573, 2002.

\bibitem[Tikka \& Karvanen(2018)Tikka and Karvanen]{tikka2018identifying}
Tikka, S. and Karvanen, J.
\newblock Identifying causal effects with the r package causaleffect.
\newblock \emph{arXiv preprint arXiv:1806.07161}, 2018.

\bibitem[Tikka et~al.(2021)Tikka, Hyttinen, and Karvanen]{tikka2019causal}
Tikka, S., Hyttinen, A., and Karvanen, J.
\newblock Causal effect identification from multiple incomplete data sources: A
  general search-based approach.
\newblock \emph{Journal of Statistical Software}, 99\penalty0 (5):\penalty0
  1--40, 2021.

\bibitem[Varici et~al.(2022)Varici, Shanmugam, Sattigeri, and
  Tajer]{varici2022causal}
Varici, B., Shanmugam, K., Sattigeri, P., and Tajer, A.
\newblock Causal bandits for linear structural equation models.
\newblock \emph{arXiv preprint arXiv:2208.12764}, 2022.

\bibitem[Vives(1990)]{submodularity_integration}
Vives, X.
\newblock Nash equilibrium with strategic complementarities.
\newblock \emph{Journal of Mathematical Economics}, 19\penalty0 (3):\penalty0
  305--321, 1990.
\newblock ISSN 0304-4068.
\newblock \doi{https://doi.org/10.1016/0304-4068(90)90005-T}.
\newblock URL
  \url{https://www.sciencedirect.com/science/article/pii/030440689090005T}.

\bibitem[Wald(1947)]{wald}
Wald, A.
\newblock \emph{Sequential Analysis}.
\newblock Wiley and Sons, New York, 1947.

\bibitem[Wu \& Frazier(2019)Wu and Frazier]{nomyopic_bo_1}
Wu, J. and Frazier, P.
\newblock Practical two-step lookahead bayesian optimization.
\newblock In Wallach, H., Larochelle, H., Beygelzimer, A., d~Alch\'{e}-Buc, F.,
  Fox, E., and Garnett, R. (eds.), \emph{Advances in Neural Information
  Processing Systems}, volume~32. Curran Associates, Inc., 2019.

\bibitem[Yue \& Kontar(2020)Yue and Kontar]{nomyopic_bo_5}
Yue, X. and Kontar, R.~A.
\newblock Why non-myopic bayesian optimization is promising and how far should
  we look-ahead? a study via rollout.
\newblock In Chiappa, S. and Calandra, R. (eds.), \emph{Proceedings of the
  Twenty Third International Conference on Artificial Intelligence and
  Statistics}, volume 108 of \emph{Proceedings of Machine Learning Research},
  pp.\  2808--2818. PMLR, 26--28 Aug 2020.

\end{thebibliography}

%\section{Appendix}
\appendix
\clearpage
\tableofcontents
\begin{appendices}
\label{sec:appendix}

\section{Notation}\label{sec:notations}
Random variables are denoted by upper-case letters (e.g. $X$) and their values by lower-case letters (e.g. $x$). The probability mass or density of a random variable $X$ is denoted by $\myP{X}$. We use $x \sim \myP{X}$ to denote that $x$ was sampled from $\myP{X}$. The expectation of a function $f$ with respect to a random variable $X$ is denoted by $\mathbb{E}_X[f]$. Sets of variables and their values are noted by bold upper-case and lower-case letters respectively (e.g. $\x$ and $\X$). Operators, function spaces and tuples are represented with upper case calligraphic letters (e.g. $\mathcal{M}$). The power set of a set $\X$ is denoted with $\mathcal{P}(\X)$. The set of all probability distributions over a set $\X$ (i.e. the $(|\X|-1)$-dimensional unit simplex) is denoted with $\Delta(\X)$. We make extensive use of the do-calculus (for details see \citep[\S 3.4]{pearl2000causality}). The domain of a variable is denoted by $\text{dom}(\cdot)$ where e.g. $x \in \text{dom}(X)$ and $\mat{x} \in \text{dom}(\mat{X}) \equiv x_1 \times x_2 \times  \ldots \times x_{|\mat{x}|} \in \text{dom}(X_1) \times \text{dom}(X_2) \times \ldots \times \text{dom}(X_{|\mat{x}|})$. The set of real numbers and the set of $n$-dimensional real vectors are denoted with $\mathbb{R}$ and $\mathbb{R}^{n}$ respectively. We adopt family relationships $\mathrm{pa(X)_{\graph}, ch(X)_{\graph}, an(X)_{\graph}}$ and $\mathrm{de(X)}_{\graph}$ to denote parents, children, ancestors and descendants of a given variable $X$ in a graph $\graph$; $\mathrm{Pa, Ch, An}$ and $\mathrm{De}$ extends $\mathrm{pa, ch, an}$ and $\mathrm{de}$ by including the argument as the result. For example $\Pa{X}{\graph} = \pa{X}{\graph} \cup \{ X \}$. With a set of variables as argument, $\pa{\X}{\graph} = \bigcup_{X \in \X} \pa{X}{\graph}$ and similarly defined for other relations.

\begin{table}[ht!]
  \centering
  \caption{Notation used throughout the paper.\label{table:notations}}
  \renewcommand{\arraystretch}{1.1} % Default value: 1
  \begin{tabular}{ll}
    \toprule
    Notation(s) & Description \\
    \midrule
    $\mathscr{M}$ & quadruple defining an \scm $\mathscr{M}=\scmdef$ \\
    $\V$ & set of endogeneous variables in an \scm \\
    $\U$ & set of exogeneous variables in an \scm \\
    $\Y$ & set of target variables in an \scm \\
    $\X$ & set of manipulative variables in an \scm \\
    $\N$ & set of non-manipulative variables in an \scm \\
    $\mat{F}$ & structural equation model of an \scm \\
    $\graph$ & causal diagram (\DAG) of an \scm (with nodes $\V$)\\
    $\E$ & edges of $\graph$ \\
    $\DO{\X}{\x}$ & the do-operator \cite{pearl2000causality} \\
    $\graph_{\overline{\X}}$ & the mutilated graph obtained by deleting from $\graph$ all arcs pointing to nodes in $\X$ \\
    $\mathbf{M}^{\mathbf{V}}_{\graph,Y}$ & set of \mis{}s (\cref{def:mis}) for an \scm with endogeneous variables $\V$ and target variable $Y$ \\
    $\mathbf{P}_{\graph,Y}^{\mathbf{V}}$ & set of \pomis{}s (\cref{def:pomis}) for an \scm with endogeneous variables $\V$ and target variable $Y$ \\
    $\mathbf{O}^{\mathbf{X}}_{\graph,Y}$ & \mos (\cref{def:mos}) for an identifiable intervention $\myP{Y \mid \DO{\X}{\x}}$ \\
    $Q_{Y}^{\X}$ & do-calculus expression to estimate $\myP{Y \mid \DO{\X}{\x}}$ from $P(\V)$ \\
    $\mu(\X^{'}, \x^{'})$ & shorthand for $\mathbb{E}[Y \mid \DO{\X'}{\x'}]$ \\
    $\X^{*}, \x^{*}$ & optimal intervention set $\X^{*}$ and intervention levels $\x^{*}$ \\
    $e$ & estimation procedure, $e \in \{\intervene, \observe\}$ \\
    $\intervene$ & estimation by intervention  \\
    $\observe$ & estimation by observation  \\
    $c(\X^{'},e)$ & cost of estimating $\mathbb{E}[Y \mid \DO{\X'}{\x'}]$ using procedure $e\in \{\intervene, \observe\}$ \\
    $T$ & time horizon of the optimisation \\
    $K$ & maximum evaluation cost \\
    $\D_t$ & dataset of measured observations and interventions at stage $t$ of the optimisation \\
    $\widehat{\mu}_{\D_t}$ & probabilistic model of $\mu$ based on $\D_t$ \\
    $\widehat{\mat{F}}_{\D_t}$ & probabilistic model of $\mat{F}$ based on $\D_t$ \\
    $\mathcal{M}_t$ & stopping problem at stage $t$ of the optimisation \\
    $\mathbf{S}_k,\mathbf{o}_k$ & state and observation at stage $k$ of an optimal stopping problem \\
    $\bot$ & terminal state of an optimal stopping problem \\
    $r(\mathbf{S}_k)$ & stopping reward at stage $k$ of an optimal stopping problem \\
    $\mathcal{T}$ & stopping time \\
    $\gamma$ & discount factor for an optimal stopping problem \\
    $\mathscr{S},\mathscr{C}$ & stopping and continuation sets for an optimal stopping problem\\
    \bottomrule
  \end{tabular}
\end{table}
\FloatBarrier

\section{Modelling assumptions}
\label{sec:assumptions}

This sections contains two tables which detail our modelling assumptions on the causal inference and optimal stopping sides respectively.

\begin{table}[ht!]
    \centering
    \caption{Causal inference assumptions.\label{table:ci_assump}}
    \begin{tabular}{ll}
    \toprule
    Assumption & Description \\
      \midrule
    $|\V| < \infty$ & the set of endogenous variables is finite \\
    univariate target $Y \in \V$ & we do not consider multivariate targets $\Y \subset \V$, $|\Y| > 1$ \\
    target $Y \in \V$ is bounded & $|y| \leq M < \infty$ for some $M \in \mathbb{R}$ and all $y \in \text{dom}(Y)$ \\
    atomic interventions & interventions $\DO{\X'}{\x'}$ in \labelcref{eq:cbo} are atomic \citep{correa2020calculus} \\
    stationary \DAG topology & the \DAG topology is independent of time $t$ \\
    stationary $\mathbf{F}$ & the functions $f_i \in \mat{F}$ are independent of time $t$\\
      $\graph$ is a \DAG & causal diagrams are free of cycles \\
    identifiability methods & methods based on $P(\V)$ only (i.e. non-experimental \citep[Table 1]{tikka2019causal}) \\
    \bottomrule
    \end{tabular}
\end{table}

\begin{table}[ht!]
    \centering
    \caption{Optimal stopping assumptions.\label{table:os_assump}}
    \begin{tabular}{ll}
    \toprule
    Assumption & Description \\
      \midrule
      $c(\mathbf{L},\observe) > 0 \text{ }\forall \mathbf{L} \in \mathcal{P}(\V)$ & positive observation costs\\
    $c(\mathbf{M},\intervene) > 0 \text{ }\forall \mathbf{M} \in \mathcal{P}(\V)$ & positive intervention costs\\
      $c(\X^{'},\observe) < M < \infty \text{ } \forall \mathbf{L} \in \mathcal{P}(\V)$ & bounded observation costs ($M \in \mathbb{R}$)\\
      $c(\X^{'},\intervene) < M < \infty \text{ } \forall \mathbf{M} \in \mathcal{P}(\X)$ & bounded intervention costs ($M \in \mathbb{R}$)\\
      $T<\infty$ & finite time horizon $\implies $ $\mathcal{T}^{*} < \infty$ \labelcref{eq:stopping_time_problem}\\
    \bottomrule
    \end{tabular}
\end{table}

\section{Causal-effect derivation}
\label{sec:causal-effect derivation}

Derivation \citep{bareinboim2016causal} for estimating the causal effect of $X$ on $Y$ in \cref{fig:admg_a}. We use the shorthand $P_X(Y)$ here to denote the interventional distribution $\myP{Y \mid \DO{X}{x}}$.
See \citep[Thm. 3.4.1]{pearl2000causality} for the three rules of do-calculus.
\begin{proof}
	Summing over $Z$ gives
	\begin{equation}
		P_{X}\left(Y\right) = \sum_{Z}{P_{X}\left(Y,Z\right)}
	\end{equation}
	By C-component factorisation, we have
	\begin{equation}
		= \sum_{Z}{P_{X,Y}\left(Z\right)P_{X,Z}\left(Y\right)} \label{eq6}
	\end{equation}
	\textbf{Task 1: Compute $P_{X,Y}\left(Z\right)$} \\\\
	The third rule of do-calculus can be applied using the independence $\left(Y \perp Z | X\right)_{\graph_{\overline{X,Y}}}$ (see \cref{fig:causal_graph_example}).
	\begin{equation}
		P_{X,Y}\left(Z\right)
		= {{P_{X}\left(Z\right)}}
	\end{equation}
	The second rule of do-calculus can be applied using the independence $\left(X \perp Z\right)_{\graph_{\overline{}\underline{X}}}$ (see \cref{fig:causal_graph_example}).
	\begin{equation}
		= {{P_{}\left(Z \middle| X\right)}} \label{eq9}
	\end{equation}
	\textbf{Task 2: Compute $P_{X,Z}\left(Y\right)$} \\\\
	The third rule of do-calculus can be applied using the independence $\left(X \perp Y | Z\right)_{\graph_{\overline{X,Z}}}$ (see \cref{fig:causal_graph_example}).
	\begin{equation}
		P_{X,Z}\left(Y\right)
		= P_{Z}\left(Y\right)
	\end{equation}
	Summing over $X'$ gives
	\begin{equation}
		= \sum_{X'}{P_{Z}\left(X',Y\right)}
	\end{equation}
	\textbf{Task 2.1: Compute $P_{Z}\left(X',Y\right)$} \\\\
	We compute the effect inside the sum.
	By the chain rule, we have
	\begin{equation}
		P_{Z}\left(X',Y\right)
		= P_{Z}\left(Y \middle| X'\right)P_{Z}\left(X'\right)
	\end{equation}
	The third rule of do-calculus can be applied using the independence $\left(Z \perp X'\right)_{\graph_{\overline{Z}}}$ (see \cref{fig:causal_graph_example}).
	\begin{equation}
		= P_{Z}\left(Y \middle| X'\right){{P_{}\left(X'\right)}}
	\end{equation}
	The second rule of do-calculus can be applied using the independence $\left(Z \perp Y | X'\right)_{\graph_{\overline{}\underline{Z}}}$ (refer to \cref{fig:causal_graph_example}).
	\begin{equation}
		= {{P_{}\left(Y \middle| X',Z\right)}}P_{}\left(X'\right) \label{eq16}
	\end{equation}
	Substituting \labelcref{eq9} and \labelcref{eq16} back into \labelcref{eq6}, we get
	\begin{flalign}
		P_{X}\left(Y\right) = \sum_{Z}{P\left(Z \middle| X\right)\sum_{X'}{P\left(Y \middle| X',Z\right)P\left(X'\right)}} \label{eq17}
	\end{flalign}
      \end{proof}

\section{Proof of Theorem \ref{thm:closed_stopping_set}}\label{sec:thm_1_proof}
\begin{proof}
For ease of notation, let $\mu_{\mathbf{S}_k}$, $c_{\observe}$, and $V_{\mathbf{S}_k}$ be a shorthands for $\mu_{\mathbf{S}_k}(\X_k',\x_{k}')$, $c(\X'_t, \observe)$, and $\frac{\text{Vol}(\V)}{\text{Vol}(\mathbf{S}_k)}$, respectively.
\begin{align}
&\mathbf{S}_k \in \mathscr{S}_1\\
&\implies r(\mathbf{S}_{k}) \geq \mathbb{E}_{\mathbf{o}_{k+1}}\left[r(\mathbf{S}_{k} \cup \{\mathbf{o}_{k+1}\})\right]-c_{\observe}\label{eq:thm1_step1}\\
  &\iff \eta I(\mathbf{S}_k;\mu) -\kappa\widehat{\mu}_{\mathbf{S}_k} - \tau V_{\mathbf{S}_k} \geq \mathbb{E}_{\mathbf{o}_{k+1}}\left[\eta I(\mathbf{S}_{k+1};\mu) -\kappa\widehat{\mu}_{\mathbf{S}_{k+1}}-\tau V_{\mathbf{S}_{k+1}}\right]-c_{\observe}\label{eq:thm1_step2}\\
&\implies \eta I(\mathbf{S}_k;\mu) -\kappa\widehat{\mu}_{\mathbf{S}_k} \geq \mathbb{E}_{\mathbf{o}_{k+1}}\left[\eta I(\mathbf{S}_{k+1};\mu) -\kappa\widehat{\mu}_{\mathbf{S}_{k+1}}\right]-c_{\observe}\label{eq:thm1_step22}\\
  &\implies \kappa\left(\mathbb{E}_{\mathbf{o}_{k+1}}\left[\widehat{\mu}_{\mathbf{S}_{k+1}}\right]-\widehat{\mu}_{\mathbf{S}_k}\right) \geq \eta\left(\mathbb{E}_{\mathbf{o}_{k+1}}\left[I(\mathbf{S}_{k+1};\mu)\right]-I(\mathbf{S}_k;\mu)\right)-c_{\observe}\label{eq:thm1_step3_0}\\
&\implies \kappa\left(\mathbb{E}_{\mathbf{o}_{k+2}}\left[\widehat{\mu}_{\mathbf{S}_{k+1}}\right]-\widehat{\mu}_{\mathbf{S}_k}\right) \geq \eta\left(\mathbb{E}_{\mathbf{o}_{k+1}}\left[I(\mathbf{S}_{k+1};\mu)\right]-I(\mathbf{S}_k;\mu)\right)-c_{\observe}\label{eq:thm1_step3}\\
  &\implies \kappa\left(\mathbb{E}_{\mathbf{o}_{k+2}}\left[\widehat{\mu}_{\mathbf{S}_{k+2}}\right]-\widehat{\mu}_{\mathbf{S}_{k+1}}\right) \geq \eta\left(\mathbb{E}_{\mathbf{o}_{k+1}}\left[I(\mathbf{S}_{k+1};\mu)\right]-I(\mathbf{S}_k;\mu)\right)-c_{\observe}\label{eq:thm1_step4}\\
&\implies \kappa\left(\mathbb{E}_{\mathbf{o}_{k+2}}\left[\widehat{\mu}_{\mathbf{S}_{k+2}}\right]-\widehat{\mu}_{\mathbf{S}_{k+1}}\right) \geq \eta\left(\mathbb{E}_{\mathbf{o}_{k+2}}\left[I(\mathbf{S}_{k+1};\mu)\right]-I(\mathbf{S}_k;\mu)\right)-c_{\observe}\label{eq:thm1_step4_0}\\
&\implies \kappa\left(\mathbb{E}_{\mathbf{o}_{k+2}}\left[\widehat{\mu}_{\mathbf{S}_{k+2}}\right]-\widehat{\mu}_{\mathbf{S}_{k+1}}\right) \geq \eta\left(\mathbb{E}_{\mathbf{o}_{k+2}}\left[I(\mathbf{S}_{k+2};\mu)\right]-I(\mathbf{S}_{k+1};\mu)\right)-c_{\observe}\label{eq:thm1_step5}\\
&\implies \mathbf{S}_{k+1} \in \mathscr{S}_1\label{eq:thm1_step6}
\end{align}
\Cref{eq:thm1_step1} follows from the Bellman equation \labelcref{eq:stopping_time_bellman} and \labelcref{eq:thm1_step2} follows because $c(\pi_{\mathrm{O}}(\mathbf{S}), \intervene)$ is non-increasing in $|\mathbf{S}|$, which is implied by assumption a). Equation \labelcref{eq:thm1_step22} holds because $\frac{\text{Vol}(\V)}{\text{Vol}(\mathbf{S}_k)} \geq \frac{\text{Vol}(\V)}{\text{Vol}(\mathbf{S}_{k+1})}$ by definition. Equation (\ref{eq:thm1_step3_0}) follows from linearity of $\mathbb{E}$. Equation (\ref{eq:thm1_step3}) follows from stationarity of the observation distribution $\widehat{P}_{\mathbf{S}_t}$. \Cref{eq:thm1_step4} follows from supermodularity of $\widehat{\mu}_{\mathbf{S}}$ (which is preserved by integration \cite{submodularity_integration}), i.e. $\widehat{\mu}_{\mathbf{S}_{k}\cup {\mathbf{o}_{k+2}}} - \widehat{\mu}_{\mathbf{S}_{k}} \leq \widehat{\mu}_{\mathbf{S}_{k+1}\cup {\mathbf{o}_{k+2}}} - \widehat{\mu}_{\mathbf{S}_{k+1}}$, which is implied by assumption a) and $\mathbf{S}_{k} \subset \mathbf{S}_{k+1}$. Similarly, \labelcref{eq:thm1_step4_0} and \labelcref{eq:thm1_step5} follows from stationarity of $\widehat{P}_{\mathbf{S}_t}$ and submodularity of $I$, respectively. More specifically, \labelcref{eq:thm1_step5} holds because $\mathbf{S}_{k} \subset \mathbf{S}_{k+1}$ and $I(\mathbf{S}_{k} \cup \{\mathbf{o}_{k+2}\};\mu) - I(\mathbf{S}_{k};\mu) \geq I(\mathbf{S}_{k+1} \cup \{\mathbf{o}_{k+2}\};\mu) - I(\mathbf{S}_{k+1};\mu)$, which is implied by assumption b). Finally, \labelcref{eq:thm1_step6} follows from \labelcref{eq:stopping_time_bellman}.
\end{proof}
\section{Proof of Corollary  \ref{cr:olsa}}\label{sec:cr_1_proof}
\begin{proof}
The corollary follows from a well-known result in optimal stopping theory (see \cite[pp. 147-149]{bert05} and \cite[Prop. 7.3]{bather_decision_theory}). The rule in \labelcref{eq:olsa} is known as a \textit{one-step lookahead policy}. To prove \labelcref{eq:olsa} we note that, by definition of $\mathscr{S}_1$, \labelcref{eq:olsa} holds if $k=T-1$. Assume by induction that \labelcref{eq:olsa} holds for some integer $n+1 < T-1$ and consider a state $\mathbf{S}_{n} \in \mathscr{S}_1$. Then
\begin{align}
  \mathcal{T}^{*}=n \iff r(\mathbf{S}_{n}) &\geq \mathbb{E}_{\mathbf{o}_{n+1}}[V^{*}(\mathbf{S}_{n+1})]-c(\X'_t, \observe)\label{eq:corollary_1_1}\\
  &=\mathbb{E}_{\mathbf{o}_{n+1}}[r(\mathbf{S}_{n+1})]-c(\X'_t, \observe)\label{eq:corollary_1_2}\\
  &\leq r(\mathbf{S}_{n})\label{eq:corollary_1_3}\\
  \implies \mathbf{S}_{n} &\in \mathscr{S}_{n}\label{eq:corollary_1_4}
\end{align}
where \labelcref{eq:corollary_1_1} follows from the Bellman equation (\ref{eq:stopping_time_bellman}) and \labelcref{eq:corollary_1_2} follows from Theorem \ref{thm:closed_stopping_set} and the fact that $\mathscr{S}_1$ is closed. We then have, by induction and by \labelcref{eq:corollary_1_4}, that $\mathbf{S}' \in \mathscr{S}_1 \implies \mathbf{S}' \in \mathbf{S}_l$, for all $l \in \{1,\hdots,T-1\}$. Then, since $\mathscr{S}_l \subseteq \mathscr{S}_{l-1} \subseteq \hdots \subseteq \mathscr{S}_{1}$ by definition, we have that $\mathscr{S}_1=\mathscr{S}_2=\hdots=\mathscr{S}_{T-1}$, which directly implies \labelcref{eq:olsa}.
\end{proof}
%\clearpage

\section{Additional evaluation results}\label{sec:ablation}
This appendix contains additional evaluation results, complementing those in the main body of the paper. \Cref{sec:extra_chain} contains results for the chain \scm (see \cref{fig:toygraph}); \cref{sec:extra_chain_uc} contains results for the chain \scm with an unobserved confounder (see \cref{fig:toygraph_w_uc}); \cref{sec:extra_psa} contains results for the PSA \scm (see \cref{fig:psa_level});  and \cref{sec:extra_synthetic} contains results for the synthetic \scm with causal graph in \cref{fig:causal_graph_example_two}.
%and \cref{sec:extra_synthetic_mab} contains results for the bandit version of the synthetic \scm with causal graph given in \cref{fig:causal_graph_example_two}
% \cref{sec:extra_nec} contains results for the NEC \scm (see \cref{fig:nec_level}); \cref{sec:extra_chain_uc_mab} contains results for the bandit version of the chain \scm with an unobserved confounder (see \cref{fig:toygraph_w_uc});

To re-emphasise, in all experiments, we only explore the \pomis{}s for each \scm{}  -- for a complete pseudo-algorithm see \cref{alg:os_pseudocode}.

\subsection{Chain \scm}\label{sec:extra_chain}
The chain \scm (see \cref{fig:toygraph}) is a synthetic \scm that is benchmarked in both \cite{cbo} and \cite{model_based_cbo}. \Cref{fig:toygraph_gps} and \cref{fig:toygraph_gps_extra} show the estimated probabilistic models $\widehat{\mu}$ and $\widehat{\mat{F}}$ when running \cbo with four different policies for balancing the intervention-observation trade-off: \textit{i}) the $\epsilon$-greedy policy used in \cite{cbo}; \textit{ii}) the \osco approach described in \cref{sec:os_formulation}; \textit{iii}) the \textsc{observe} baseline (which always observes); and \textit{iv}) the \textsc{random} baseline, which selects between intervening and observing uniformly at random.

\begin{figure}[!htb]
  \centering
\hspace{-1.5cm}
\begin{subfigure}[t]{0.45\textwidth}
  \centering
  \scalebox{1.2}{
    \includegraphics[width=0.95\linewidth]{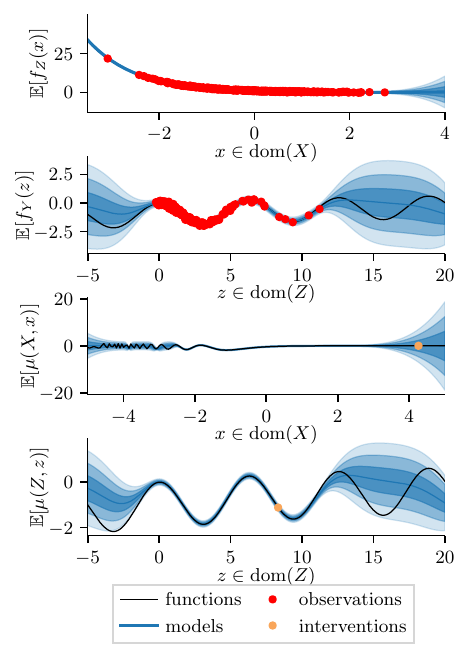}
}
    \caption{\cbo \textsc{observe}.}
     \label{fig:cbo_gps_observe_toygraph}
\end{subfigure}
% \hfill
\hspace{7mm}%
\begin{subfigure}[t]{0.45\textwidth}
  \centering
  \scalebox{1.2}{
    \includegraphics[width=0.95\linewidth]{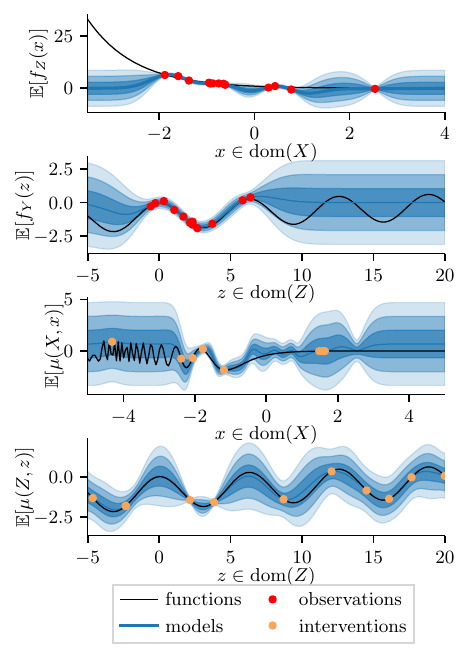}
}
    \caption{\cbo \textsc{random}.}
     \label{fig:cbo_gps_random_toygraph}
   \end{subfigure}
%\hfill
%\begin{subfigure}[t]{0.32\textwidth}
%  \centering
%  \scalebox{1.2}{
%    \includegraphics[width=0.95\linewidth]{figures/toygraph_gps.pdf}
%}
%    \caption{TODO}
%     \label{fig:stopping_system}
%   \end{subfigure}
 \caption{Collected data and estimated models from running \cbo to solve \labelcref{eq:cbo} for the chain \scm \cite[Fig. 3]{cbo} (see \cref{fig:toygraph}); the left plot (\cref{fig:cbo_gps_observe_toygraph}) shows results when using a policy that always observes and the right plot (\cref{fig:cbo_gps_random_toygraph}) shows results when using a policy that selects randomly between observing and intervening; the blue curves and the shaded blue areas show the mean and standard deviation of the estimated models $\widehat{\mat{F}}$ and $\widehat{\mu}$; the red and orange dots show observations and interventions; the black lines show the \scm functions $\mat{F}$ and the causal effects $\mu$.}
 \label{fig:toygraph_gps_extra}
\end{figure}

We note in the lowest plots that the optimal intervention is $\DO{Z}{-3.20}$ (with target value $Y=-2.17$) and that this intervention is found in all cases except for the \textsc{observe} baseline (\cref{fig:cbo_gps_observe_toygraph}). That the \textsc{observe} baseline does not find the optimal intervention is expected as the probability of observing the optimal configuration without intervening is low. We further note that all policies that collect observations are able to accurately estimate the interventional distributions through the do-calculus (see e.g. \cref{fig:cbo_gps_observe_toygraph} and \cref{fig:cbo_os_gps_toygraph}). The main differences between \cbo with $\epsilon$-greedy and \cbo with \osco (see \cref{fig:toygraph_gps}) are a) \cbo with $\epsilon$-greedy collects only $3$ observations, spending most of the evaluation budget on interventions, whereas \cbo with \osco uses most of the evaluation budget to collect observations; and b) that \cbo with $\epsilon$-greedy observes all endogeneous variables whereas  \cbo with \osco only observes the \mos $\mathbf{O}^{Z}_{\graph,Y}=\{Z,Y\}$. That \cbo uses most of the evaluation budget on intervening whereas \cbo with \osco uses most of the budget on observations can be explained by two main reasons. First, the definition of $\epsilon$ in \cite[Eq. 6]{cbo} implies that the probability of observing in \cbo with $\epsilon$-greedy is close to $0$ when the number of previously collected observations is low. Second, the optimal stopping formulation in \labelcref{eq:olsa} implies that \cbo with \osco will observe rather than intervene when it is more cost-effective.

\begin{figure}
  \centering
  \scalebox{1}{
    \includegraphics[width=1\linewidth]{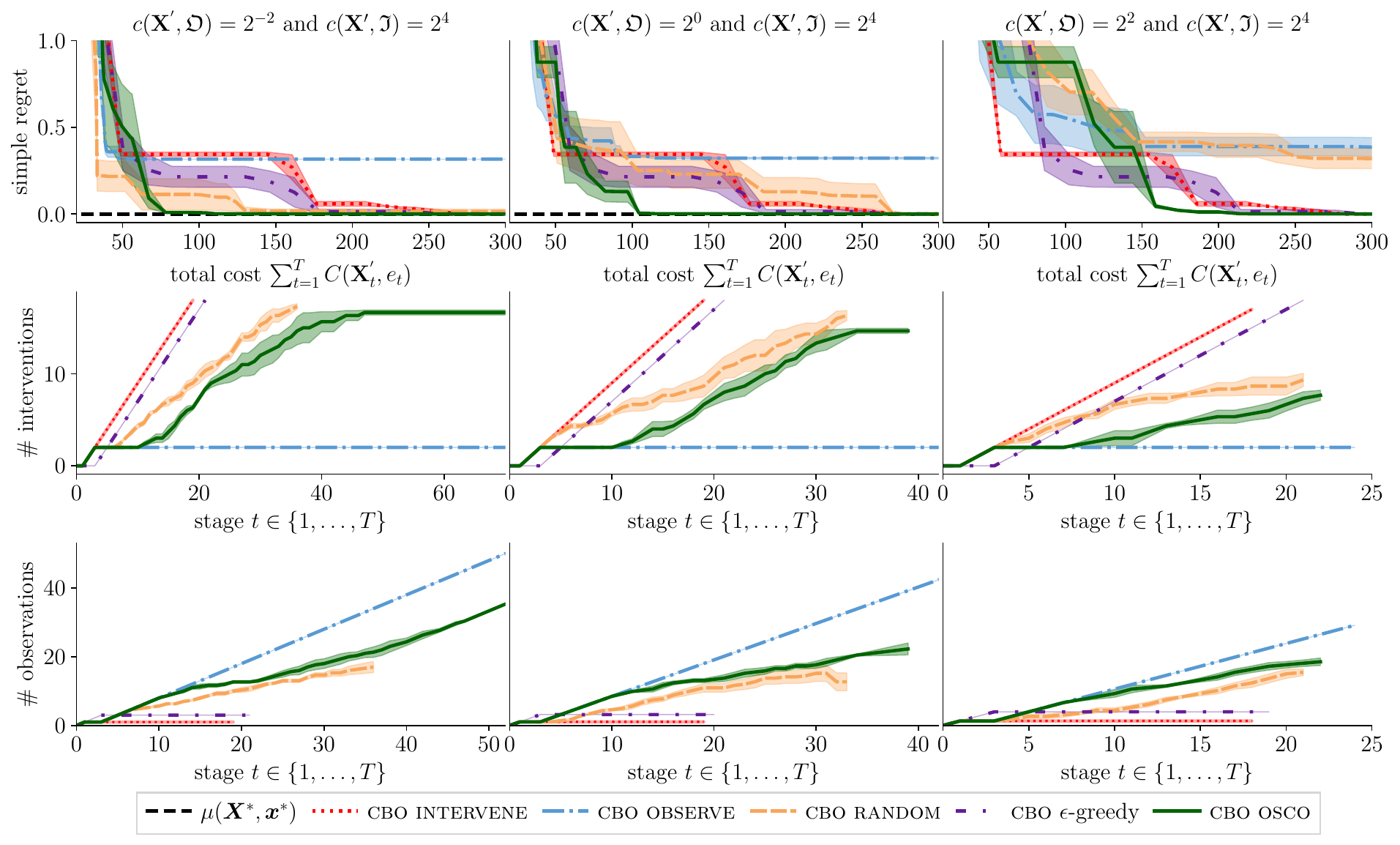}
}
    \caption{Convergence curves for \cbo with different policies for balancing the intervention-observation trade-off when solving \labelcref{eq:cbo} for the chain \scm (\cref{fig:toygraph}); the columns relate to evaluations with different observation costs $c(\X^{'}, \mathfrak{O})$; the top row shows the objective value against the evaluation cost; the middle and bottom rows show the number of interventions ($e_t=\mathfrak{I}$) and observations ($e_t=\mathfrak{O}$) at different stages of the optimisation; the curves indicate the mean and standard deviation ($\pm \frac{\sigma}{\sqrt[]{3}}$) over three evaluations with different random seeds.}
     \label{fig:toygraph_convergence}
\end{figure}

\Cref{fig:toygraph_convergence} shows convergence curves of \cbo with \osco and the baselines introduced above for different observation costs $c(\X, \observe)$. We note that the \textsc{observe} and \textsc{random} baselines do not always converge to the optimum within the prescribed evaluation budget (see \cref{sec:hyperparameters} for the list of hyperparameters). We further observe that \cbo with \osco on average reaches the optimum at a lower cost than all baselines.

We also compare the performance of \osco on the chain \scm with the performance of \mcbo \cite{model_based_cbo}. \Cref{fig:toygraph_mcbo_convergence} shows convergence curves of \mcbo with \osco and the baselines introduced in \cref{sec:experiments} for different observation costs $c(\X, \observe)$. We note that none of the \mcbo policies find the optimum. This is consistent with the findings in \cite[Fig. 6]{model_based_cbo} and can be explained by the design of \mcbo, which is optimised for minimising the cumulative regret rather than than the simple regret. We further note that the measured benefit of adding \osco to \mcbo for the chain \scm is lower than that of \cbo (cf. \cref{fig:toygraph_mcbo_convergence}). More specifically, when the observation cost is low (see e.g. the right plot in \cref{fig:toygraph_mcbo_convergence}), \mcbo with \osco yields better results than plain \mcbo. When the observation costs are high however (see e.g. the left plot in \cref{fig:toygraph_mcbo_convergence}), \mcbo with \osco leads to slower convergence. One reason why \osco works better with \cbo than \mcbo is that it can utilise \mos (\cref{def:mos}) to limit the number of observations (see \cref{sec:experiments} for details). We speculate that another reason is the different way of integrating observational data in the probabilistic models $\widehat{\mu}$ and $\widehat{\mat{F}}$. In \cbo, the observational data is integrated with the interventional data by using a causal prior on the interventional distributions whereas in \mcbo the functions $\widehat{\mat{F}}$ are fitted directly based on both observational and interventional data.

\begin{figure}
  \centering
  \scalebox{1}{
    \includegraphics[width=1\linewidth]{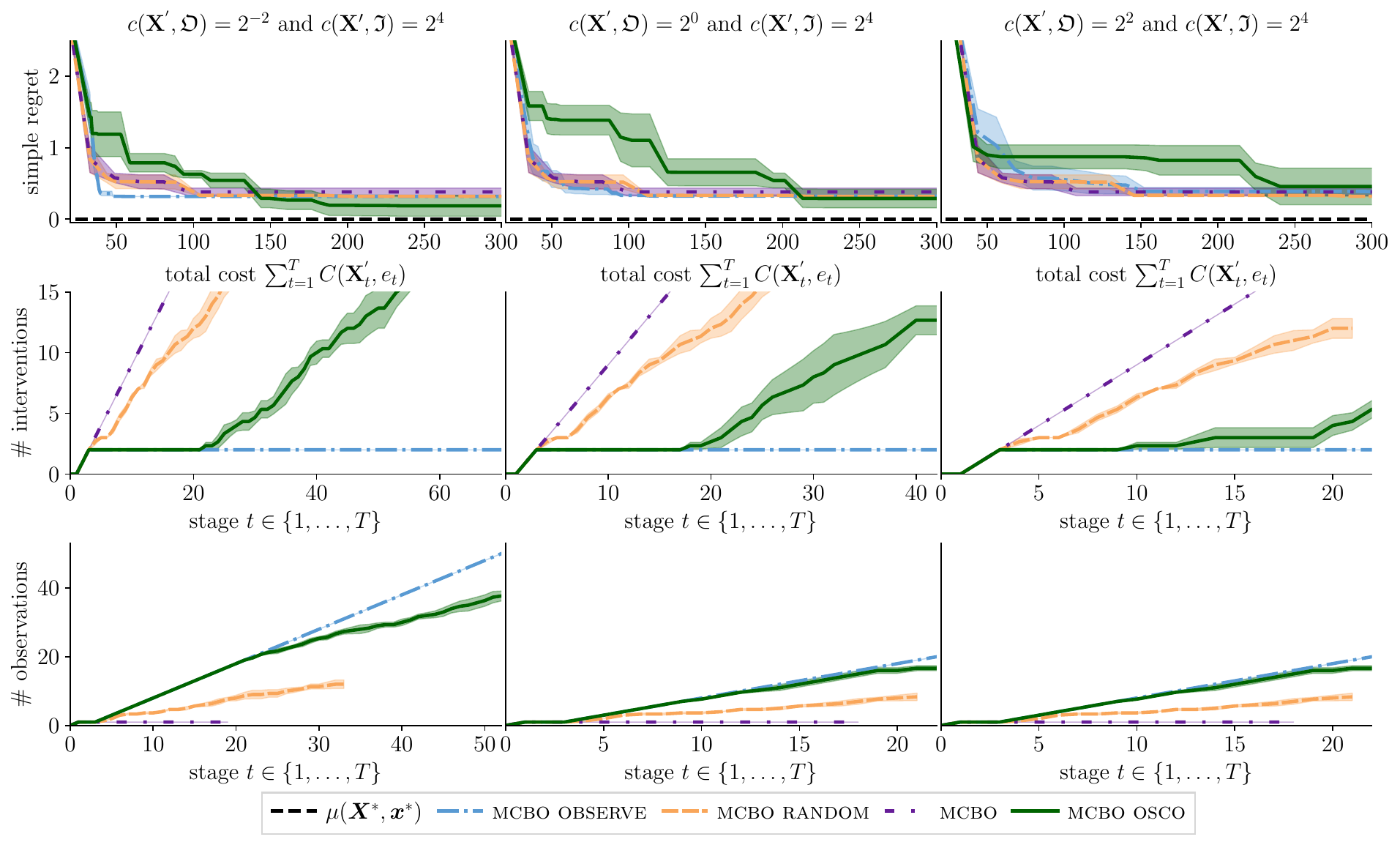}
}
    \caption{Convergence curves for \mcbo with different policies for balancing the intervention-observation trade-off when solving \labelcref{eq:cbo} for the chain \scm (\cref{fig:toygraph_gps}); the columns relate to evaluations with different observation costs $c(\X^{'}, \mathfrak{O})$; the top row shows the objective value against the evaluation cost; the middle and bottom rows show the number of interventions ($e_t=\mathfrak{I}$) and observations ($e_t=\mathfrak{O}$) at different stages of the optimisation; the curves indicate the mean and standard deviation ($\pm \frac{\sigma}{\sqrt[]{3}}$) over three evaluations with different random seeds.}
     \label{fig:toygraph_mcbo_convergence}
\end{figure}

%Lastly, \cref{fig:toygraph_trajectories} and \cref{fig:toygraph_mcbo_trajectories} show trajectories of intervention-observation decisions when running \cbo and \mcbo, respectively. We observe that regular \cbo and \mcbo collects a couple of observations in the beginning of the optimisation to initialise the probabilistic models but then almost exclusively uses interventions. In contrast, \cbo and \mcbo with \osco uses a longer sequence of observations in the beginning of the optimisation and then interleaves both interventions and observations based on what is deemed most cost-effective \labelcref{eq:olsa}.

%\input{figures/toygraph_trajectories}

%\input{figures/toygraph_mcbo_trajectories}

\subsection{Chain \scm with an unobserved confounder}\label{sec:extra_chain_uc}
\Cref{fig:chain_uc_convergence} and \cref{fig:chain_uc_convergence_mcbo} show convergence curves of \cbo and \mcbo with different policies for balancing the intervention-observation trade-off for the chain \scm with an unobserved confounder (see \cref{fig:toygraph_w_uc}). We observe that \cbo with \osco performs best on average and that the results resemble those obtained for the chain \scm without the unobserved confounder (see \cref{sec:extra_chain}). Looking at the second and third rows in the figures, we see that \cbo and \mcbo with \osco focuses on collecting observations in the beginning of the optimisation and then successively increases the frequency of interventions. This contrasts with \cbo and \mcbo without \osco, which almost exclusively intervenes.

\begin{figure}
  \centering
  \scalebox{1}{
    \includegraphics[width=1\linewidth]{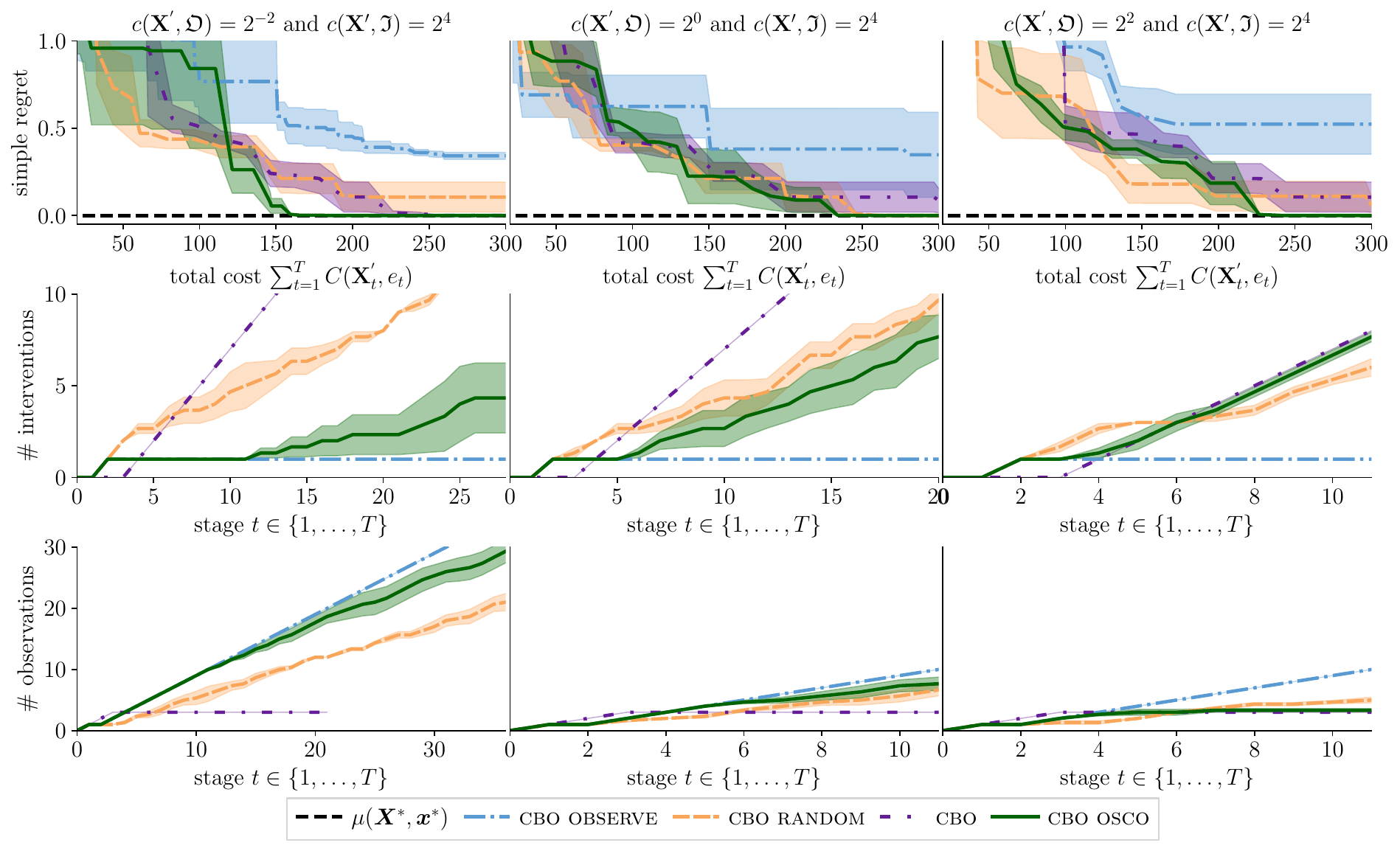}
}
    \caption{Convergence curves for \cbo with different policies for balancing the intervention-observation trade-off when solving \labelcref{eq:cbo} for the chain \scm with an unobserved confounder (\cref{fig:admg_a}); the columns relate to evaluations with different observation costs $c(\X^{'}, \mathfrak{O})$; the top row shows the objective value against the evaluation cost; the middle and bottom rows show the number of interventions ($e_t=\mathfrak{I}$) and observations ($e_t=\mathfrak{O}$) at different stages of the optimisation; the curves indicate the mean and standard deviation ($\pm \frac{\sigma}{\sqrt[]{3}}$) over three evaluations with different random seeds.}
     \label{fig:chain_uc_convergence}
\end{figure}

\begin{figure}
  \centering
  \scalebox{1}{
    \includegraphics[width=1\linewidth]{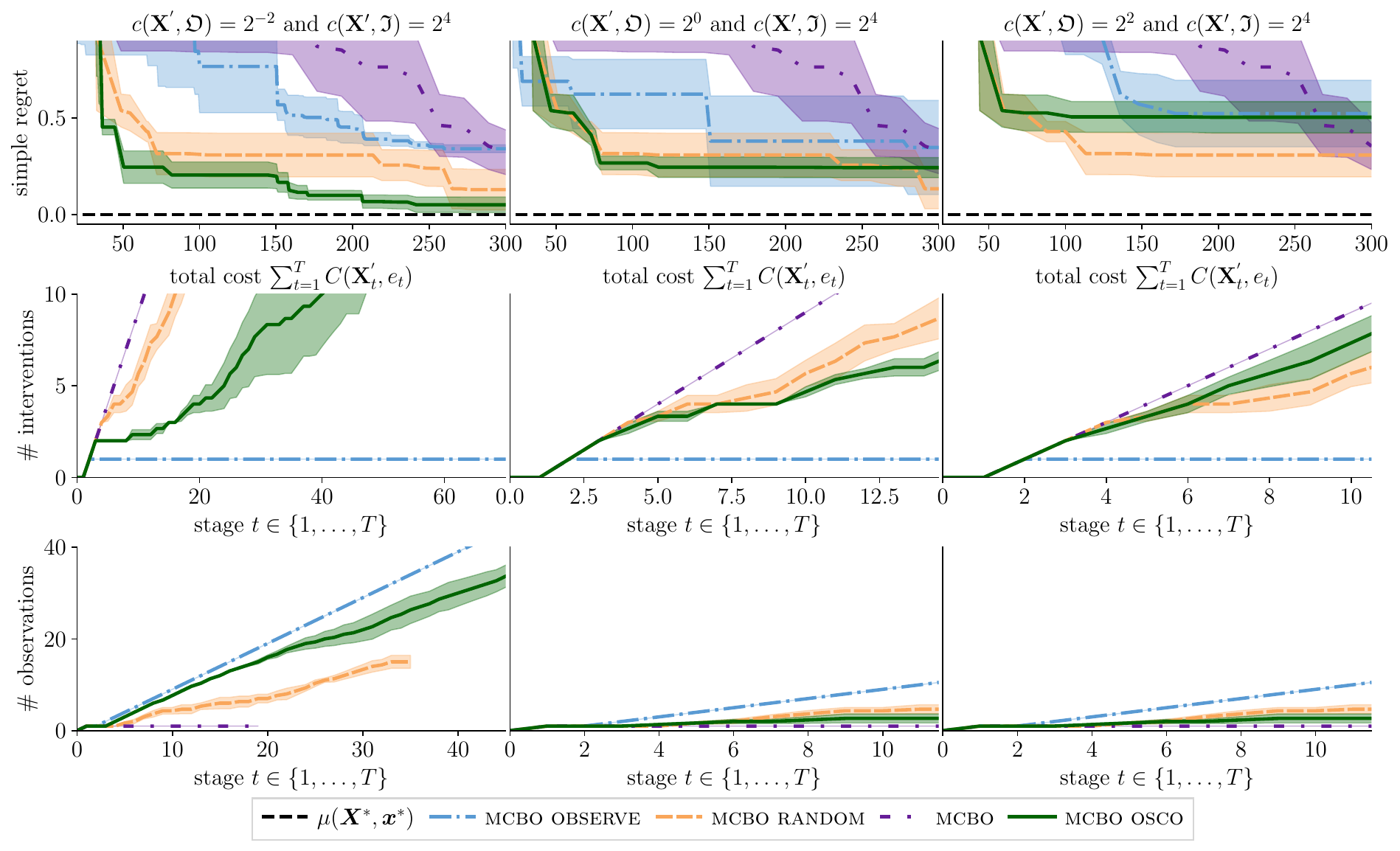}
}
    \caption{Convergence curves for \mcbo with different policies for balancing the intervention-observation trade-off when solving \labelcref{eq:cbo} for the chain \scm with an unobserved confounder (\cref{fig:admg_a}); the columns relate to evaluations with different observation costs $c(\X^{'}, \mathfrak{O})$; the top row shows the objective value against the evaluation cost; the middle and bottom rows show the number of interventions ($e_t=\mathfrak{I}$) and observations ($e_t=\mathfrak{O}$) at different stages of the optimisation; the curves indicate the mean and standard deviation ($\pm \frac{\sigma}{\sqrt[]{3}}$) over three evaluations with different random seeds.}
     \label{fig:chain_uc_convergence_mcbo}
\end{figure}

\subsection{PSA \scm}\label{sec:extra_psa}
The PSA \scm (see \cref{fig:psa_level}) is based on a real healthcare setting \cite{ferro2015use} where interventions correspond to dosage prescriptions of statins and/or aspirin to control Prostate-Specific Antingen (PSA) levels, which should be minimised. This \scm is benchmarked in both \cite{cbo} and \cite{model_based_cbo}.

\begin{figure}
  \centering
  \scalebox{1}{
    \includegraphics[width=1\linewidth]{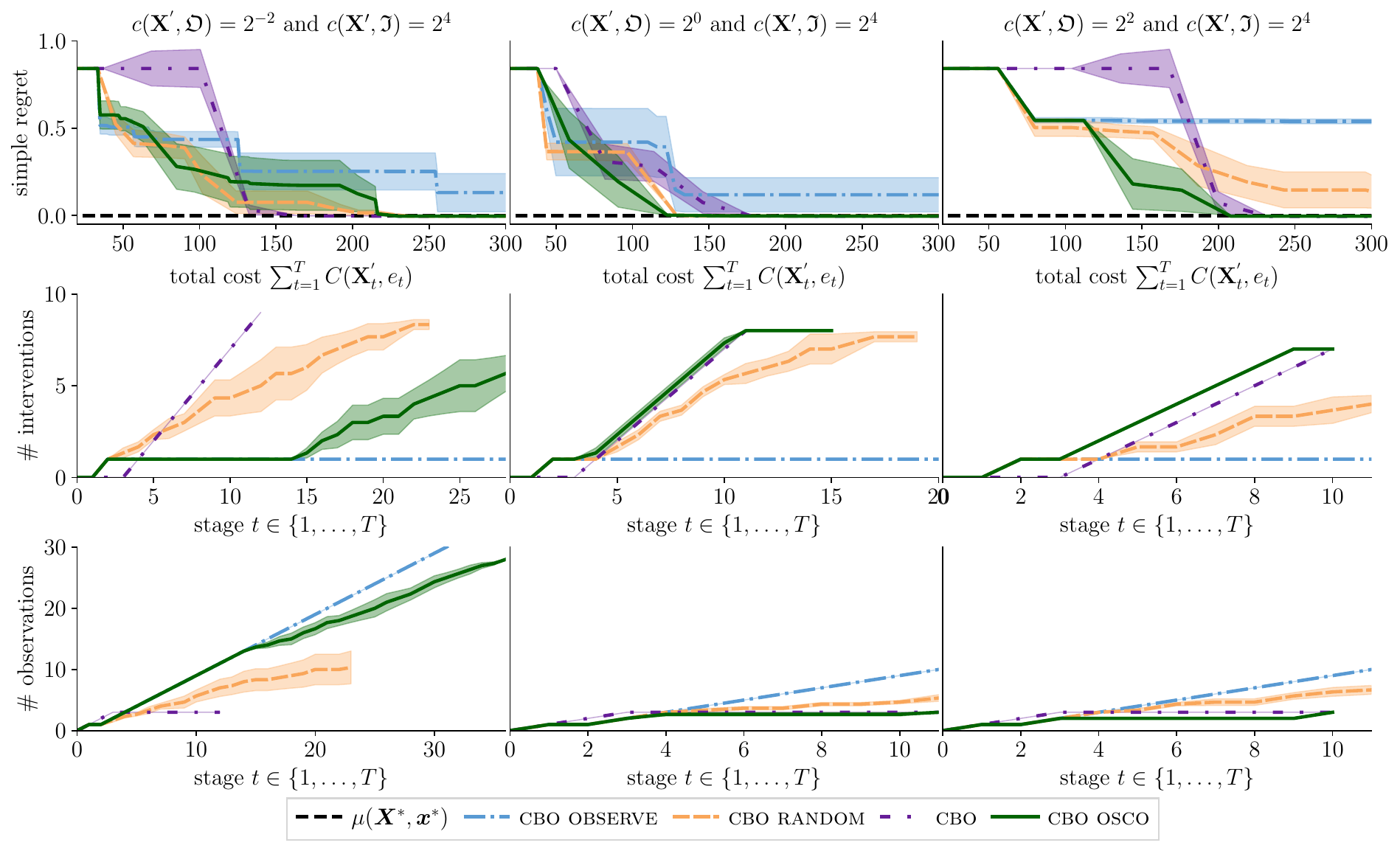}
}
    \caption{Convergence curves for \cbo with different policies for balancing the intervention-observation trade-off when solving \labelcref{eq:cbo} for the PSA \scm (\cref{fig:psa_level}); the columns relate to evaluations with different observation costs $c(\X^{'}, \mathfrak{O})$; the top row shows the objective value against the evaluation cost; the middle and bottom rows show the number of interventions ($e_t=\mathfrak{I}$) and observations ($e_t=\mathfrak{O}$) at different stages of the optimisation; the curves indicate the mean and standard deviation ($\pm \frac{\sigma}{\sqrt[]{3}}$) over three evaluations with different random seeds.}
     \label{fig:psa_convergence}
\end{figure}

\Cref{fig:psa_convergence} and \cref{fig:psa_mcbo_convergence} show convergence curves of \cbo and \mcbo with \osco and the baselines introduced in \cref{sec:experiments} for different observation costs $c(\X, \observe)$. We note that the only policies that consistently find the optimum are \cbo, \cbo with \osco, \mcbo, and \mcbo with \osco. We also note that \osco performs, on average, better than \cbo and \mcbo. The differences are however relatively small. We believe that the reason why the differences are relatively small is that both \cbo and \mcbo finds the optimum in the chain \scm after only a couple of interventions, diminishing the need to utilise observational data. This is because the optimal intervention in the PSA \scm is at the endpoints of the domains, i.e. $(\X^{*},\x^{*}) = (\{\text{aspirin}, \text{statin}\}, (0,1))$, which is easy to find.

\begin{figure}
  \centering
  \scalebox{1}{
    \includegraphics[width=1\linewidth]{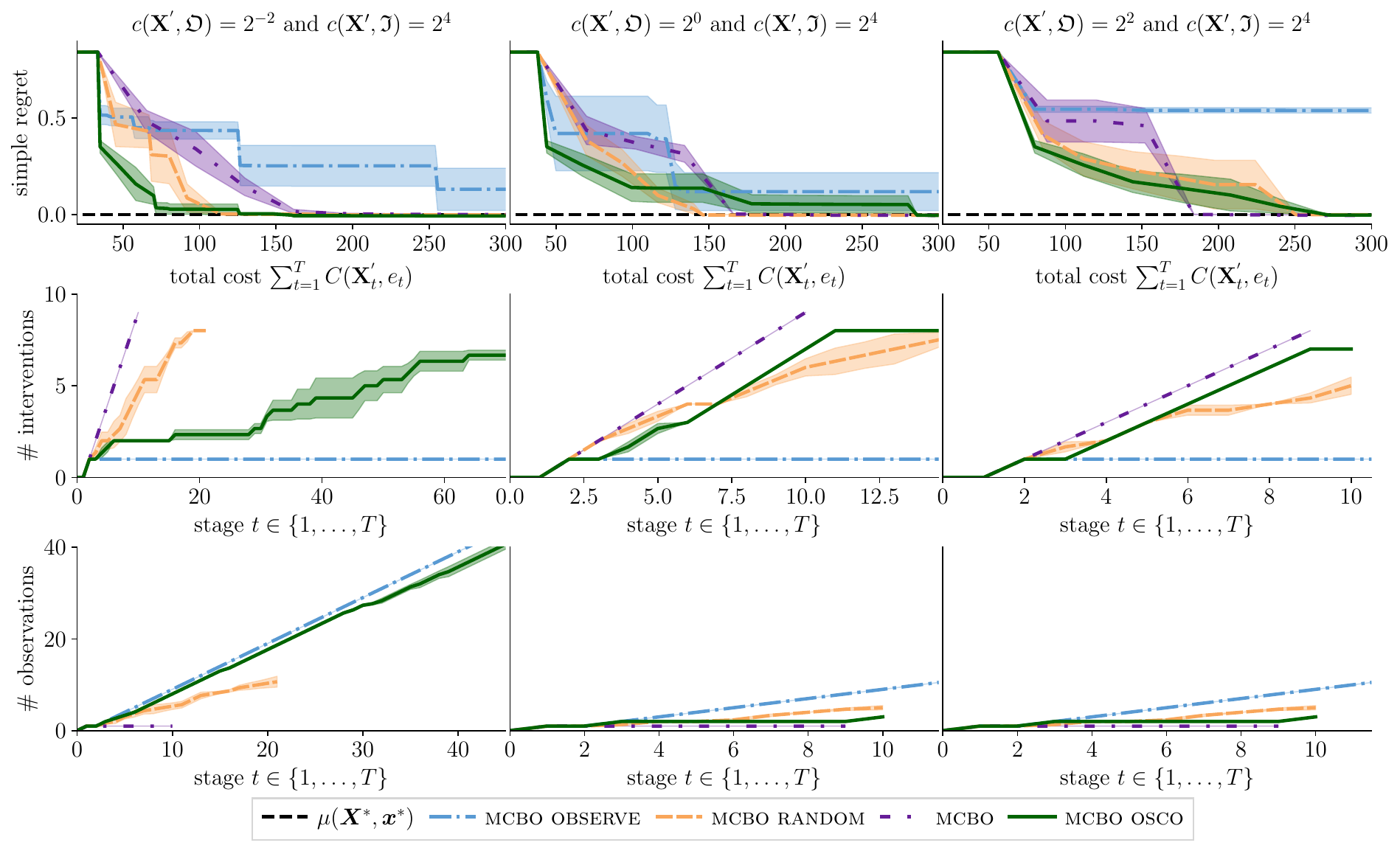}
}
    \caption{Convergence curves for \mcbo with different policies for balancing the intervention-observation trade-off when solving \labelcref{eq:cbo} for the PSA \scm (\cref{fig:psa_level}); the columns relate to evaluations with different observation costs $c(\X^{'}, \mathfrak{O})$; the top row shows the objective value against the evaluation cost; the middle and bottom rows show the number of interventions ($e_t=\mathfrak{I}$) and observations ($e_t=\mathfrak{O}$) at different stages of the optimisation; the curves indicate the mean and standard deviation ($\pm \frac{\sigma}{\sqrt[]{3}}$) over three evaluations with different random seeds.}
     \label{fig:psa_mcbo_convergence}
\end{figure}

%\Cref{fig:psa_trajectories} and \cref{fig:psa_mcbo_trajectories} show trajectories of intervention-observation decisions when running \cbo and \mcbo, respectively. Similar to the Chain example (\cref{sec:extra_chain}), we observe that regular \cbo and \mcbo collects a couple of observations in the beginning of the optimisation to initialise the probabilistic models but then almost exclusively uses interventions. In contrast, \cbo and \mcbo with \osco uses more observations when the observation costs are low and more interventions when the observation costs are high.

%\input{figures/psa_trajectories}
%\input{figures/psa_mcbo_trajectories}

%\input{figures/chain_uc_trajectories}
%\input{figures/chain_uc_trajectories_mcbo}

\subsection{Synthetic \scm}\label{sec:extra_synthetic}
\Cref{fig:synthetic_convergence} shows convergence curves of \cbo with different policies for balancing the intervention-observation trade-off for the synthetic \scm with the causal graph in \cref{fig:graph_two}. We observe that both the policy that always observes and \cbo with \osco performs best on average. This result suggests to us that the most cost-effective way to find the optimal intervention for this \scm is to collect observations and estimate the causal effects via the do-calculus.

\begin{figure}
  \centering
  \scalebox{1}{
    \includegraphics[width=1\linewidth]{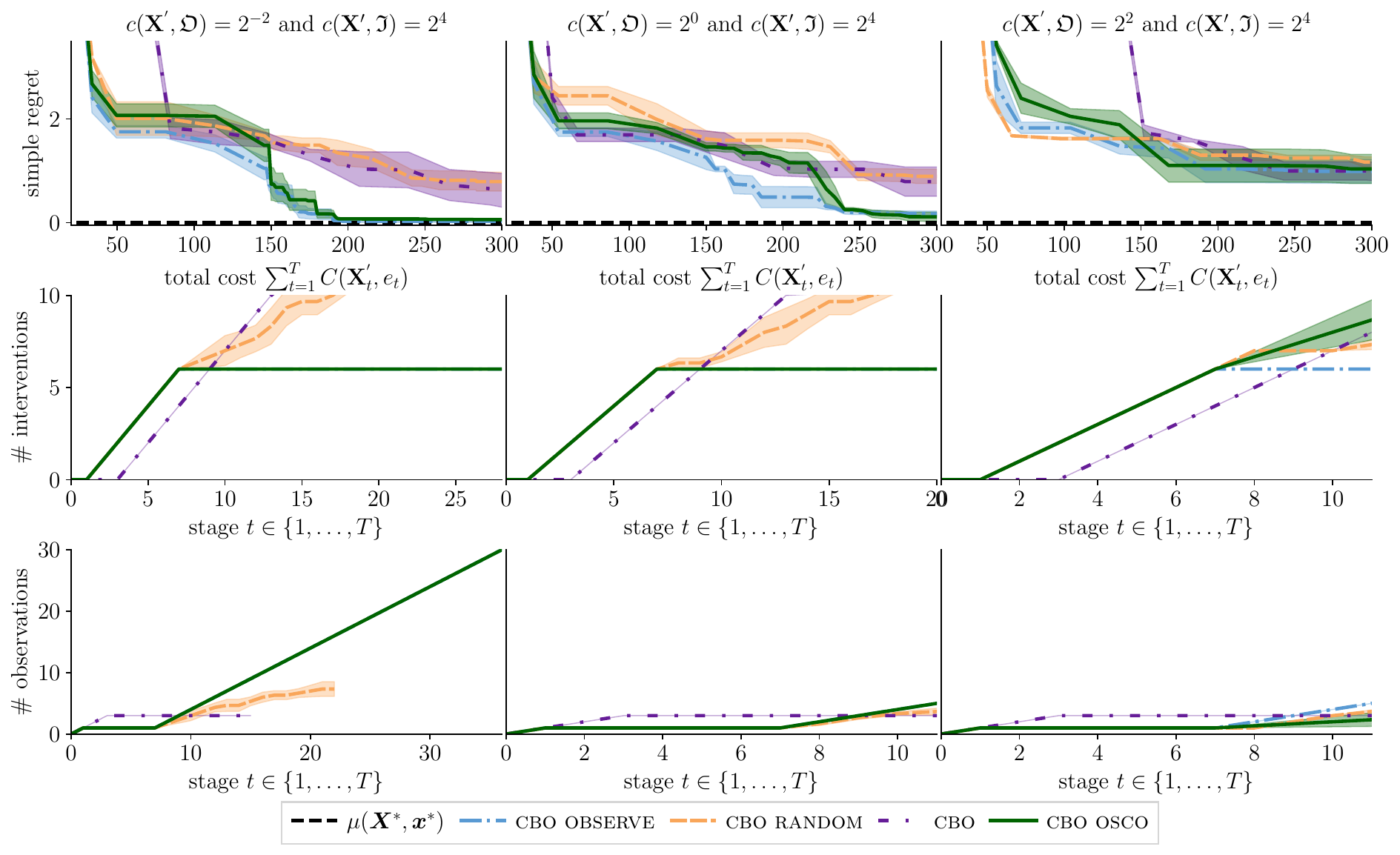}
}
    \caption{Convergence curves for \cbo with different policies for balancing the intervention-observation trade-off when solving \labelcref{eq:cbo} for the synthetic \scm in \cref{fig:graph_two}; the columns relate to evaluations with different observation costs $c(\X^{'}, \mathfrak{O})$; the top row shows the objective value against the evaluation cost; the middle and bottom rows show the number of interventions ($e_t=\mathfrak{I}$) and observations ($e_t=\mathfrak{O}$) at different stages of the optimisation; the curves indicate the mean and standard deviation ($\pm \frac{\sigma}{\sqrt[]{3}}$) over three evaluations with different random seeds.}
     \label{fig:synthetic_convergence}
\end{figure}

\clearpage
\section{Hyperparameters}\label{sec:hyperparameters}
Herein we provide the hyperparameters used for all the experiments. See \cref{fig:hyperparam_dag} for the relevant \DAG{}s and their corresponding hyperparameter tables.
\begin{figure}[ht!]
    \centering
    \begin{subfigure}[t]{0.35\textwidth}
        \centering
        \resizebox{\textwidth}{!}{%
           \begin{tikzpicture}[node distance =1.5cm]
                % t = 0
                \node[state,circle] (X0) {$X$};
                \node[state,circle, right of = X0] (Z0) {$Z$};
                \node[state,circle, right of = Z0] (Y0) {$Y$};
                % Emission edges
                \path (X0) edge (Z0);
                \path (Z0) edge (Y0);
            \end{tikzpicture}
        }%
        \caption{\Cref{tab:toygraph_hparams}.}
    \end{subfigure}
    \hspace{2em}
    \begin{subfigure}[t]{0.35\textwidth}
        \centering
        \resizebox{\textwidth}{!}{%
           \begin{tikzpicture}[node distance =1.5cm]
                % t = 0
                \node[state,circle] (X0) {$X$};
                \node[state,circle, right of = X0] (Z0) {$Z$};
                \node[state,circle, right of = Z0] (Y0) {$Y$};
                % Emission edges
                \path (X0) edge (Z0);
                \path (Z0) edge (Y0);
                \path[bidirected] (X0) edge[bend left=50] (Y0);
            \end{tikzpicture}
        }%
        \caption{\Cref{tab:toygraph_hparams} and \cref{sec:chain_scm_w_uc}.}
    \end{subfigure}
    \vspace{1em}
    \\
    \begin{subfigure}[t]{0.35\textwidth}
        \centering
        \resizebox{\textwidth}{!}{%
           \begin{tikzpicture}[node distance =1.5cm]
                % t = 0
                \node[state,circle] (S) {$S$};
                \node[state,circle, below right of = S] (W) {$W$};
                \node[state,circle,circle, below right of = W] (Y) {$Y$};
                \node[state,circle, above right of = Y] (X) {$X$};
                \node[state,circle, above right of = X] (Z) {$Z$};
                \node[state,above right of = W] (B) {$B$};
                % Emission edges
                \path (S) edge (B);
                \path (B) edge (W);
                \path (W) edge (Y);
                \path (B) edge (X);
                \path (X) edge (Y);
                \path (Z) edge (X);
                \path[bidirected] (Y) edge[bend right=45] (Z);
                \path[bidirected] (S) edge[bend right=45] (Y);
            \end{tikzpicture}
        }%
        \caption{\Cref{tab:graph_two_hparams}.}
    \end{subfigure}
    \hspace{2em}
    \begin{subfigure}[t]{0.35\textwidth}
        \centering
        \resizebox{\textwidth}{!}{%
           \begin{tikzpicture}[node distance =2cm]
                % t = 0
                \node[state,fill=gray!50,circle] (A) {Age};                 % Age
                \node[state,fill=gray!50,circle, right of = A] (B) {BMI};   % BMI
                \node[state,circle, right of = B] (C) {Asp.};                % Aspirin
                \node[state,circle, below of = A] (D) {Stat.};                % Statin
                \node[state,fill=gray!50,circle, right of = D] (E) {Can.};   % Cancer
                \node[state,circle, right of = E] (F) {PSA};               % PSA

                % Emission edges
                \path (A) edge (B);
                \path (A) edge[bend left=30] (C);
                \path (A) edge (F);
                \path (A) edge (E);
                \path (A) edge (D);
                \path (B) edge (C);
                \path (B) edge (F);
                \path (B) edge (E);
                \path (B) edge (D);
                \path (C) edge (E);
                \path (C) edge (F);
                \path (D) edge (E);
                \path (D) edge[bend right=30] (F);
                \path (E) edge (F);
            \end{tikzpicture}
        }%
        \caption{\Cref{tab:psa_level_hparams}.}
    \end{subfigure}
    \caption{\DAG{}s used for the experimental section of this paper. Each \DAG is associated with a table of experimental hyperparameters, found in the caption of each figure. The causal \mab experiment uses the \DAG{} in sub-figure (c) with a different \scm.}
    \label{fig:hyperparam_dag}
\end{figure}
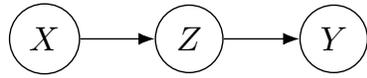
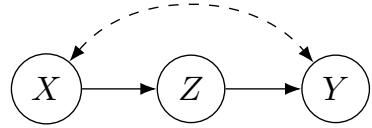
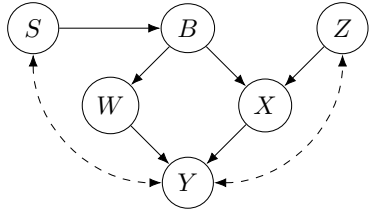
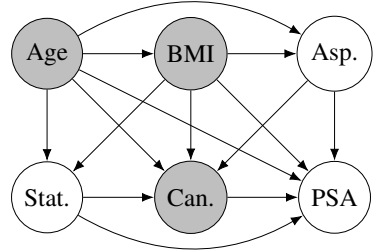
\FloatBarrier

\raggedbottom

\subsection{Hyperparameters for the chain \scm}

\begin{table}[H]
  \centering
  \caption{Hyperparameters for the chain \scm in \cref{fig:toygraph}.}\label{tab:toygraph_hparams}
  \begin{tabular}{lll}
  \toprule
    {\textit{Parameter}} & {\textit{Description}} & {\textit{Value}} \\ \midrule
    $\U$ & set of exogenous variables & $\{\epsilon_X,\epsilon_Z,\epsilon_Y\}$ \\
    $\V$ & set of endogenous variables & $\{X,Y,Z\}$ \\
    $\X$ & set of manipulative variables & $\{X,Z\}$ \\
    $\Y$ & set of target variables & $\{Y\}$ \\
    $\N$ & set of non-manipulative variables & $\{Y\}$ \\
    $\mat{F}$ & set of functions in the \scm & $\{f_X,f_Z,f_Y\}$ \\
    $\dom{X}$ & domain of the random variable $X$ & $[-5,5] \subset \mathbb{R}$ \\
    $\dom{Z}$ & domain of the random variable $Z$ & $[-5,20] \subset \mathbb{R}$ \\
    $\dom{Y}$ & domain of the random variable $Y$ & $[-5,5] \subset \mathbb{R}$ \\
    $f_X$ & function in the \scm & $f_X \colon X = \epsilon_X$ \\
    $f_Z$ & function in the \scm & $f_Z \colon Z = e^{-X} + \epsilon_Z$ \\
    $f_Y$ & function in the \scm & $f_Y \colon Y = \cos(Z) - e^{-\frac{Z}{20}} + \epsilon_y$ \\
    $\epsilon_X$ & Gaussian noise term in $f_X$ & $\epsilon_X \sim \mathcal{N}(\mu=0,\sigma=1)$ \\
    $\epsilon_Z$ & Gaussian noise term in $f_Z$ & $\epsilon_Z \sim \mathcal{N}(\mu=0,\sigma=0.5)$ \\
    $\epsilon_Y$ & Gaussian noise term in $f_Y$ & $\epsilon_Y \sim \mathcal{N}(\mu=0,\sigma=0.1)$ \\
    $K$ & evaluation budget in \labelcref{eq:formal_problem} & $300$ \\
    $\D_1$ & initial dataset of observations and interventions & $\emptyset$ \\
    $\widehat{\mu}$ & probabilistic model of $\mu$, see \labelcref{eq:cbo} & Gaussian process (\gp) \\
    $\widehat{\mat{F}}$ & probabilistic model of $\mat{F}$ & \gp \\
    \gp kernel & kernel for $\widehat{\mu}$ and $\widehat{\mat{F}}$ & causal \textsc{rbf} kernel \cite{cbo} \\
    \rbf length-scale & parameter for the causal \rbf kernel & $1$ \\
    \gp variance & parameter for the \gp in \citep{gpyopt2016} & $e^{-5}$ \\
    acquisition function & acquisition function for \cbo & \textsc{cei} \citep{cbo} \\
    $\eta$ & weighting factor in \labelcref{eq:stopping_reward} & $2$ \\
    $\kappa$ & weighting factor in \labelcref{eq:stopping_reward} & $1$ \\
    $\tau$ & weighting factor in \labelcref{eq:stopping_reward} & $5$ \\
    $\mathbf{M}_{\graph,Y}^{\mathbf{V}}$ & set of \mis{}s (\cref{def:mis})  & $\{\{Z\}, \{X,Z\}\}$ \\
    $\mathbf{P}_{\graph,Y}^{\mathbf{V}}$ & set of \pomis{}s (\cref{def:pomis}) & $\{\{Z\}\}$ \\
    $\mathbf{O}^{X}_{\graph,Y}$ & \mos (\cref{def:mos}) for $\myP{Y \mid \DO{X}{x}}$ & $\{X,Z,Y\}$ \\
    $\mathbf{O}^{Z}_{\graph,Y}$ & \mos (\cref{def:mos}) for $\myP{Y \mid \DO{Z}{z}}$ & $\{Z,Y\}$ \\
    $c(\cdot, \observe)$ & observation costs & $c(\mathbf{L}, \observe)=|\mathbf{L}|2^{-2} \quad \forall \mathbf{L} \in \mathcal{P}(\V)$\\
    $c(\cdot, \intervene)$ & intervention costs & $c(\mathbf{L}, \intervene)=|\mathbf{L}|2^{4} \quad\quad \forall \mathbf{L} \in \mathcal{P}(\X)$\\
    $\gamma$ & discount factor for the stopping problem & $1$ \\
    \mcbo batch size & batch size for $\pi_{\mathrm{O}}$ in \mcbo & $32$ \\
    \mcbo $\beta$ & exploration-exploitation parameter in \mcbo & $0.5$ \\
    \bottomrule
  \end{tabular}
\end{table}

\subsection{Hyperparameters for the chain \scm with an unobserved confounder}
\label{sec:chain_scm_w_uc}
The \scm in \cref{fig:admg_a} uses the same hyperparameters as \cref{fig:toygraph} listed in \cref{tab:toygraph_hparams}, with the differences that a) there is an unobserved confounder $\epsilon_{XY} \sim \mathcal{N}(\mu=0,\sigma=1)$ in both $f_Y$ and $f_X$; b) the set of \pomis{}s is $\mathbf{P}_{\graph,Y}^{\mathbf{V}} = \set{\emptyset,\set{Z}}$; c) the \mos for $\myP{Y \mid \DO{Z}{z}}$ is $\mathbf{O}^{Z}_{\graph,Y}=\{X,Z,Y\}$; and d) $\epsilon_Z \sim \mathcal{N}(\mu=-1,\sigma=0.5)$.

%\newpage
\subsection{Hyperparameters for the synthetic example \scm}
The functions $(\mathbf{F})$ for the synthetic example \scm in \cref{fig:graph_two}, adapted from \citep{cbo}, are given by
\begin{equation}
    \begin{aligned}[c]
        U_{SY}\  &= \epsilon_{SY}  &&\text{(unobserved confounder between $S$ and $Y$)} \\
        U_{ZY}\  &= \epsilon_{ZY}  &&\text{(unobserved confounder between $Z$ and $Y$)} \\
        f_S\colon S &= U_{SY} + \epsilon_S  && \\
        f_B \colon B &=  S + \epsilon_B &&\\
        f_Z\colon Z &= e^{-U_{ZY}} + \epsilon_Z  &&\\
        f_W\colon W &= \frac{e^{-B}}{10} + \epsilon_W &&\\
        f_X\colon X  &= \cos(Z) + \frac{B}{10} + \epsilon_X  &&\\
        f_Y\colon Y   &= \cos(W) + \sin(X) + U_{SY} + U_{ZY} \cdot \epsilon_Y &&
    \end{aligned}
    \label{eq:synthetic}
\end{equation}

Set of \mis{}s
\begin{align}
    \label{eq:synthetic_mis}
    &\set{\emptyset,\nonumber\\
    &\set{Z},\set{X}, \set{W}, \set{B}, \set{S}, \nonumber \\
    &\set{S, W}, \set{X, W}, \set{Z, W}, \set{B, Z}, \set{B, X}, \set{X, S},\set{B, W}, \set{S, Z},\nonumber \\
    &\set{Z, S,W}, \set{B, Z, W}}.
\end{align}
Set of \pomis{}s
\begin{align}
    \label{eq:synthetic_pomis}
    &\set{\emptyset,\nonumber\\
    &\set{X},\set{W},\set{Z},\nonumber\\
    &\set{B,W},\set{X,W},\set{Z,W}}.
\end{align}

\newpage
\begin{table}[H]
  \centering
  \caption{Hyperparameters for the synthetic \scm example in \cref{fig:graph_two}.}\label{tab:graph_two_hparams}
  \begin{tabular}{lll}
  \toprule
    {\textit{Parameter}} & {\textit{Description}} & {\textit{Value}} \\ \midrule
    $\U$ & set of exogenous variables & $\{\epsilon_{SY},\epsilon_{ZY},\epsilon,\set{\epsilon_i}_{i=S,B,Z,W,X}\}$ \\
    $\V$ & set of endogenous variables & $\{S,B,Z,W,X,Y\}$ \\
    $\X$ & set of manipulative variables & $\{S,B,Z,W,X\}$ \\
    $\Y$ & set of target variables & $\{Y\}$ \\
    $\N$ & set of non-manipulative variables & $\emptyset$ \\
    $\mat{F}$ & set of functions in the \scm & $\set{f_j}_{j =S,B,Z,W,X,Y}$ \\
    $\dom{S}$ & domain of the random variable $S$ & $[-5,4] \subset \mathbb{R}$\\
    $\dom{B}$ & domain of the random variable $B$ & $[-5,4] \subset \mathbb{R}$\\
    $\dom{W}$ & domain of the random variable $W$ & $[-5,5] \subset \mathbb{R}$\\
    $\dom{X}$ & domain of the random variable $X$ & $[-6,3] \subset \mathbb{R}$  \\
    $\dom{Z}$ & domain of the random variable $Z$ & $[-5,4] \subset \mathbb{R}$\\
    $\set{f_j}_{j \in \V }$ & functions in the \scm & see \labelcref{eq:synthetic} \\
    $\set{\epsilon_j}_{j \in \V \setminus \{W,X\}}$ & Gaussian noise terms & $\epsilon_j \sim \mathcal{N}(\mu=0,\sigma=0.1)$ \\
    $\epsilon_{W}$ & Gaussian noise term & $\epsilon_W \sim \mathcal{N}(\mu=0,\sigma=2)$ \\
    $\epsilon_{X}$ & Gaussian noise term & $\epsilon_W \sim \mathcal{N}(\mu=0,\sigma=2)$ \\
    $\epsilon_{SY}$ & Unobserved confounder term & $\epsilon_{SY} \sim \mathcal{N}(\mu=0,\sigma=0.1)$ \\
    $\epsilon_{ZY}$ & Unobserved confounder term & $\epsilon_{ZY} \sim \mathcal{N}(\mu=0,\sigma=0.1)$ \\
    $K$ & evaluation budget in \labelcref{eq:formal_problem} & $300$ \\
    $\D_1$ & initial dataset of observations and interventions & $\emptyset$ \\
    $\widehat{\mu}$ & probabilistic model of $\mu$, see \labelcref{eq:cbo} & Gaussian process (\gp) \\
    $\widehat{\mat{F}}$ & probabilistic model of $\mat{F}$ & \gp \\
    \gp kernel & kernel for $\widehat{\mu}$ and $\widehat{\mat{F}}$ & causal \textsc{rbf} kernel \cite{cbo} \\
    \rbf length-scale & parameter for the causal \rbf kernel & $1$ \\
    \gp variance & parameter for the \gp in \citep{gpyopt2016} & $e^{-5}$ \\
    acquisition function & acquisition function for \cbo & \textsc{cei} \citep{cbo} \\
    $\eta$ & weighting factor in \labelcref{eq:stopping_reward} & $2$ \\
    $\kappa$ & weighting factor in \labelcref{eq:stopping_reward} & $1$ \\
    $\tau$ & weighting factor in \labelcref{eq:stopping_reward} & $5$ \\
    $\mathbf{M}_{\graph,Y}^{\mathbf{V}}$ & \mis{}s (\cref{def:mis})  & see \labelcref{eq:synthetic_mis} \\
    $\mathbf{P}_{\graph,Y}^{\mathbf{V}}$ & \pomis{}s (\cref{def:pomis}) & see \labelcref{eq:synthetic_pomis}  \\
    $\mathbf{O}^{\emptyset}_{\graph,Y}$ & \mos (\cref{def:mos}) & $\V$ \\
    $\mathbf{O}^{X}_{\graph,Y}$ & \mos (\cref{def:mos}) & $\set{X,B,Z,Y}$ \\
    $\mathbf{O}^{W}_{\graph,Y}$ & \mos  & $\set{B,W,Y}$ \\
    $\mathbf{O}^{Z}_{\graph,Y}$ & \mos  & $\V$ \\
    $\mathbf{O}^{\set{B,W}}_{\graph,Y}$ & \mos & $\set{B,W,S,Y}$ \\
    $\mathbf{O}^{\set{X,W}}_{\graph,Y}$ & \mos & $\set{W,X,B,Z,Y}$ \\
    $\mathbf{O}^{\set{Z,W}}_{\graph,Y}$ & \mos & $\V$ \\
    $c(\cdot, \observe)$ & observation costs & $c(\mathbf{L}, \observe)=|\mathbf{L}|2^{-2} \quad \forall \mathbf{L} \in \mathcal{P}(\V)$\\
    $c(\cdot, \intervene)$ & intervention costs & $c(\mathbf{L}, \intervene)=|\mathbf{L}|2^{4} \quad\quad \forall \mathbf{L} \in \mathcal{P}(\X)$\\
    $\gamma$ & discount factor for the stopping problem & $1$ \\
    \bottomrule
  \end{tabular}
\end{table}

\newpage
\subsection{Hyperparameters for the PSA \scm}
The \DAG in \cref{fig:psa_level} describes the causal relationships between statin (node $D$), aspirin (node $C$) and prostate-specific antigen (PSA) level (node $F$), mediated by a set of non-manipulative variables, adapted from \citep{ferro2015use}. We use the same \scm as \citet[Appendix \S5]{cbo}.
\begin{equation}
    \begin{aligned}[c]
        f_A\ (\text{Age})\colon A &= \mathcal{U}(55,75) \\
        f_B\ (\text{BMI}) \colon B &= \mathcal{N}(27.0 - 0.01 \cdot A,0.7) \\
        f_C\ (\text{Aspirin})\colon C &= \sigma (-8.0 + 0.10 \cdot A + 0.03 \cdot B) \\
        f_D\ (\text{Statin}) \colon D &= \sigma (-13.0 + 0.10 \cdot A + 0.20 \cdot B) \\
        f_E\ (\text{Cancer}) \colon E &= \sigma (2.2 - 0.05\cdot A + 0.01 \cdot B - 0.04 \cdot D + 0.02 \cdot C) \\
        f_F\ (\text{PSA}) \colon F &= \mathcal{N}(6.8 + 0.04 \cdot A - 0.15 \cdot B - 0.60 \cdot D + 0.55 \cdot C +  E,0.4)
    \end{aligned}
    \label{eq:psa_sem}
\end{equation}
In \labelcref{eq:psa_sem}, $\mathcal{U}(a,b)$ denotes the continuous uniform distribution on the interval $[a,b]$, $\mathcal{N}(\mu,\sigma^2)$ denotes the univariate Gaussian distribution with mean $\mu$ and variance $\sigma^2$ and $\sigma(x)$ denotes the sigmoid function $\frac{1}{1+e^{-x}}$.
\begin{table}[H]
  \centering
  \begin{threeparttable}
  \caption{Hyperparameters for the \textsc{PSA} level example in \cref{fig:psa_level} .}\label{tab:psa_level_hparams}
  \begin{tabular}{lll}
  \toprule
    {\textit{Parameter}} & {\textit{Description}} & {\textit{Value}} \\ \midrule
    $\U$ & set of exogenous variables & $\emptyset$  \\
    $\V$ & set of endogenous variables & $\{A,B,C,D,E,F\}$ \\
    $\X$ & set of manipulative variables & $\{D,C\}$ \\
    $\N$ & set of non-manipulative variables & $\{A,B,E\}$ \\
    $\Y$ & set of target variables & $\{F\}$ \\
    $\mat{F}$ & set of functions in the \scm & $\{f_A,f_B,f_C,f_D,f_E,f_F\}$ \\
    $\dom{A}$ & domain of the random variable $A$\tnote{$\ddagger$} & $[55,75] \subset \mathbb{R}$ \\
    $\dom{B}$ & domain of the random variable $B$\tnote{$\dagger$} & $[24.1,28.8] \subset \mathbb{R}$  \\
    $\dom{C}$ & domain of the random variable $C$\tnote{$\ddagger$} &  $[0,1] \subset \mathbb{R}$\\
    $\dom{D}$ & domain of the random variable $D$\tnote{$\ddagger$} &  $[0,1] \subset \mathbb{R}$\\
    $\dom{E}$ & domain of the random variable $E$\tnote{$\ddagger$} &  $[0,1] \subset \mathbb{R}$\\
    $\dom{F}$ & domain of the random variable $F$\tnote{$\dagger$}& $[4.36,7.81] \subset \mathbb{R}$  \\
    $\set{f_i}_{i \in \V }$ & function in the \scm & see \labelcref{eq:psa_sem} \\
    $K$ & evaluation budget in \labelcref{eq:formal_problem} & $300$ \\
    $\D_1$ & initial dataset of observations and interventions & $\emptyset$ \\
    $\widehat{\mu}$ & probabilistic model for $\mu$, see \labelcref{eq:cbo} & \gp \\
    $\widehat{\mat{F}}$ & probabilistic model for $\mat{F}$ &\gp \\
    \gp kernel & kernel for $\widehat{\mu}$ and $\widehat{\mat{F}}$ & causal \textsc{rbf} kernel \cite{cbo} \\
    \rbf length-scale & parameter for the causal \rbf kernel & $1$ \\
    \gp variance & parameter for the \gp in \citep{gpyopt2016} & $e^{-5}$ \\
    acquisition function & acquisition function for \cbo & \textsc{cei} \citep{cbo} \\
    $\eta$ & weighting factor in \labelcref{eq:stopping_reward} & $2$ \\
    $\kappa$ & weighting factor in \labelcref{eq:stopping_reward} & $1$ \\
    $\tau$ & weighting factor in \labelcref{eq:stopping_reward} & $5$ \\
    $\mathbf{M}_{\graph,F}^{\V}$ & set of \mis{}s (\cref{def:mis})  & $\{\emptyset, \{C\},\{D\},\{C,D\} \}$ \\
    $\mathbf{P}_{\graph,F}^{\V}$ & set of \pomis{}s (\cref{def:pomis}) & $\{\{C,D\} \}$ \\
    $\mathbf{O}^{\{C,D\}}_{\graph,Y}$ & \mos (\cref{def:mos}) for $\myP{F \mid \DO{C}{c},\DO{D}{d}}$ & $\{A,B,C,D,F\}$ \\
    $c(\cdot, \observe)$ & observation costs & $c(\mathbf{L}, \observe)=|\mathbf{L}|2^{-2} \quad \forall \mathbf{L} \in \mathcal{P}(\V)$\\
    $c(\cdot, \intervene)$ & intervention costs & $c(\mathbf{L}, \intervene)=|\mathbf{L}|2^{4} \quad\quad \forall \mathbf{L} \in \mathcal{P}(\X)$\\
    $\gamma$ & discount factor for the stopping problem & $1$ \\
    \mcbo batch size & batch size for $\pi_{\mathrm{O}}$ in \mcbo & $32$ \\
    \mcbo $\beta$ & exploration-exploitation parameter in \mcbo & $0.5$ \\
    \bottomrule
  \end{tabular}
    \begin{tablenotes}
      \small
      \item[$\dagger$]The domains used in this experiment are the 25th and 75th percentiles of the measured variables, found in \citep[Table 1]{ferro2015use}.
      \item[$\ddagger$]Strictly this is a discrete variable which we have made continuous for computational reasons.
    \end{tablenotes}
    \end{threeparttable}
\end{table}

Where
\begin{equation}
    \myP{F \mid \DO{C}{c},\DO{D}{d}}  = \int \myP{F \mid A,B,C,D} \myP{A,B} \mathrm{d}a\mathrm{d}b.
\end{equation}

\subsection{Synthetic causal \mab}
We consider the causal \mab setting in which the causal structure is provided by the \scm with \DAG{} $\graph$ in \cref{fig:graph_two}. The two unobserved confounders follow binary distributions governed by
\begin{align*}
    P(U_{SY} = 1) &= 0.1\\
    P(U_{ZY} = 1) &= 0.05
\end{align*}
and exogenous variables follow binary distributions governed by
\begin{align*}
   P(U_S = 1) &= 0.45\\
    P(U_B = 1) &= 0.4\\
    P(U_Z = 1) &= 0.8\\
    P(U_W = 1) &= 0.3\\
    P(U_X = 1) &= 0.85\\
    % P(U_Y = 1) &= 0.1
\end{align*}
All variables are binary with domain $\{0,1\}$ and with $\mat{F}$:
\begin{equation}
    \begin{aligned}[c]
        f_S\colon S &= U_{SY} \oplus U_S \\
        f_Z\colon Z &= 1 - U_{ZY} \oplus U_Z \\
        f_B\colon B &= S \oplus U_B  \\
        f_W\colon W &= B \oplus  U_W \\
        f_X\colon X &= 1 - B \oplus Z \oplus U_X \\
        f_Y\colon Y &= W\oplus X \oplus U_{SY} \oplus U_{ZY} \\
    \end{aligned}
\end{equation}
where $\oplus$ is the exclusive-or function. The set of \pomis{}s, the set of \mis{}s and the \mos for each intervention can be found in \cref{tab:graph_two_hparams}.

\section{Pseudocode and implementation of \osco}\label{sec:algorithms}
The optimal stopping problem described \cref{sec:os_formulation} can be integrated with existing causal optimisation algorithms to balance the \textit{intervention}-\textit{observation} trade-off. More specifically, given an optimisation policy $\pi_{\mathrm{O}}$ that determines which intervention to evaluate at each stage of the optimisation, the solution to the optimal stopping problem in \labelcref{eq:stopping_time_problem} determines whether the intervention should be evaluated by intervention or observation. ($\pi_{\mathrm{O}}$ may for example be implemented by the \cbo algorithm \cite[Alg. 1]{cbo} or the causal \mab algorithm in \cite[Alg.1 ]{causal_bandits_3}) The pseudocode for integrating the optimal stopping problem with the existing algorithms is listed in \cref{alg:os_pseudocode}. The main computational complexity of the integration is the repeated solving of \labelcref{eq:olsa}, which requires evaluating a potentially high-dimensional integral and evaluating the optimisation policy $\pi_{\mathrm{O}}$ several times. The integral can be evaluated efficiently using Monte-Carlo methods \cite{rubinstein_mc,evans2000approximating} and the evaluations of $\pi_{\mathrm{O}}$ (which may involve optimisation of an acquisition function as is e.g. the case in \cbo \cite{cbo}) can be done in parallel. The average execution times per iteration when running \cbo and \mcbo with and without \osco are shown in \cref{fig:runtimes}.
\begin{algorithm}[H]
  \caption{Optimal stopping for Causal Optimisation (\osco).}\label{alg:os_pseudocode}
  \hspace*{\algorithmicindent}\textbf{Input:} Optimisation policy $\pi_{\mathrm{O}}$ and causal graph $\mathcal{G}$.\\
  \hspace*{\algorithmicindent}\textbf{Output:} Optimised intervention $(\X^{\star}, \x^{\star})$.
\begin{algorithmic}[1]
  \Procedure{}{}
  \State Set $\D_1 \triangleq \emptyset$ and initialise models $\widehat{\mu}_{\D_1},\widehat{\mat{F}}_{\D_1}$.
  \State Compute set of \pomis{s} $\mathbf{P}_{\graph,Y}^{\mathbf{V}}$.
  \For{$t = 1,\hdots,T$}
  \State Set $\X_t^{'}, \x^{'}_t \triangleq \pi_{\mathrm{O}}(\widehat{\mu}_{\D_t},\widehat{\mat{F}}_{\D_t},\mathbf{P}_{\graph,Y}^{\mathbf{V}})$.
  \State Compute $\mathcal{T}^{*}$ using \labelcref{eq:olsa}.
  \If{$\mathcal{T}^{*} = 1$}
  \State Intervene $\DO{\X^{'}_t}{\x^{'}_t}$ \& measure $Y$.
  \State Set $\D_{t+1}\triangleq \D_t \cup \{(\DO{\X^{'}_t}{\x^{'}_t},Y)\}$.
  \Else
  \State Observe $\mathbf{o}_t \sim P(\mathbf{O}^{\mathbf{X}}_{\graph,Y})$.
  \State Set $\D_{t+1}\triangleq \D_t \cup \{\mathbf{o}_t\}$.
  \EndIf
  \State Update models $\widehat{\mu}_{\D_{t+1}}$ and $\widehat{\mat{F}}_{\D_{t+1}}$.
  \EndFor
  \State \Return
  \begin{align*}
(\X^{\star}, \x^{\star}) \in     \argmin_{
        \substack{\X' \in \mathbf{P}_{\graph,Y}^{\mathbf{V}}; \\ \x' \in \dom{\X'}}
    }\widehat{\mu}(\X',\x')
  \end{align*}
%  $$
\EndProcedure
\end{algorithmic}
\end{algorithm}

\begin{figure}[!htb]
    \begin{subfigure}[t]{0.6\textwidth}
        \centering
        \resizebox{1\textwidth}{!}{%
            \includegraphics{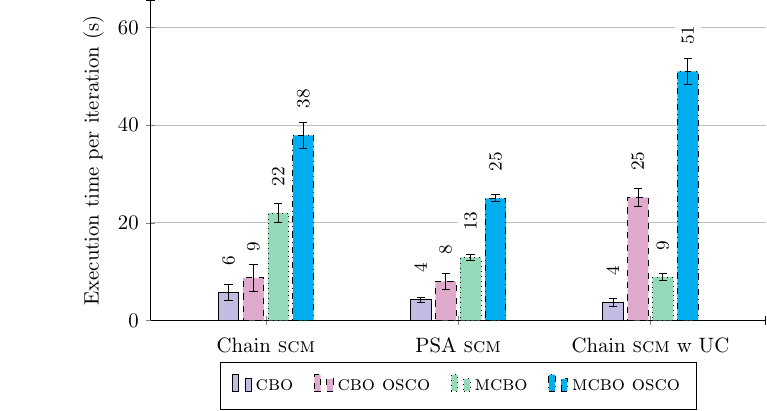}
        }
        \caption{Causal \bo algorithms.}
    \end{subfigure}
    \hfill
    \begin{subfigure}[t]{0.4\textwidth}
    \centering
        \resizebox{0.58\textwidth}{!}{%
            \includegraphics{osco_runtime_4.pdf}
        }
    \caption{Causal \mab algorithms.}
    \end{subfigure}
    \caption{Execution times per iteration when running \cbo \cite{cbo}, \mcbo \cite{model_based_cbo} and \textsc{c-ucb} \cite{causal_bandits_3} with and without \osco for the example \scm{}s defined in \cref{sec:experiments}; the height of each bar indicates the mean execution time from $100$ measurements and the error bars indicate the standard deviations.}
 \label{fig:runtimes}
\end{figure}

\section{Background on Markovian optimal stopping problems}\label{sec:mdp_background}
This appendix provides a self-contained background on \mdp{}s and Markovian optimal stopping problems. It provides sufficient prerequisite knowledge for a reader that is not familiar with optimal stopping to follow \cref{sec:os_formulation} in the main body of the paper.
\subsection{Markov decision processes}
A Markov Decision Process (\mdp) models the control of a discrete-time dynamical system that evolves in time-steps from $t=1$ to $t=T$ and is defined by the seven-tuple \cite{bellman1957markovian,puterman}:
\begin{align}
\mathcal{M} = \langle \mat{S}, \mat{A}, P_{\mathcal{M}}, r, \gamma, \rho_1, T \rangle\label{eq:mdp_def}
\end{align}
$\mat{S} \subseteq \mathbb{R}^n$ denotes the set of states, $\mat{A} \subseteq \mathbb{R}^{m}$ denotes the set of actions, $\gamma \in \left[0,1\right]$ is a discount factor, $\rho_1 : \mat{S} \rightarrow [0,1]$ is the initial state distribution and $T$ is the time horizon. $P_{\mathcal{M}}\left(\mathbf{S}_{t+1}=\mathbf{s}_{t+1}\mid \mathbf{S}_t=\mathbf{s}_t, \mathbf{A}_t=a_t\right)$ refers to the probability of transitioning from state $\mathbf{s}_t$ to state $\mathbf{s}_{t+1}$ when taking action $\mathbf{a}_t$ and satisfies the Markov property $P_{\mathcal{M}}\left(\mathbf{S}_{t+1}=\mathbf{s}_{t+1} \mid \mathbf{S}_t=\mathbf{s}_t\right) = P_{\mathcal{M}}\left(\mathbf{S}_{t+1}=\mathbf{s}_{t+1} \mid  \mathbf{S}_1=\mathbf{s}_1, \hdots, \mathbf{S}_t=\mathbf{s}_t\right)$, where $\mathbf{s}_t\in \mat{S}$ and $\mathbf{a}_t \in \mat{A}$ are realisations of the random vectors $\mathbf{S}_t$ and $\mathbf{A}_t$. Similarly, $r(\mathbf{s}_t,\mathbf{a}_t) \in \mathbb{R}$ is the reward when taking action $\mathbf{a}_t$ in state $\mathbf{s}_t$, which we assume is bounded, i.e. $|r(\mathbf{s}_t,\mathbf{a}_t)| \leq M < \infty$ for some $M \in \mathbb{R}$. If $P_{\mathcal{M}}$ and $r(\mathbf{s}_t,\mathbf{a}_t)$ are independent of the time-step $t$, the \mdp is said to be \textit{stationary} and if $\mat{S}$ and $\mat{A}$ are finite, the \mdp is said to be \textit{finite}.

A policy is a function $\pi: \{1,\hdots, T\} \times \mat{S} \rightarrow \Delta(\mat{A})$. If a policy is independent of the time-step $t$ given the current state, it is called \textit{stationary}. An optimal policy $\pi^{*}$ maximizes the expected discounted cumulative reward over the time horizon:
\begin{align}
\pi^{*} &\in \argmax_{\pi \in \mat{\Pi}} \mathbb{E}_{\pi}\left[\sum_{t=1}^{T}\gamma^{t-1}R_{t}\right] \label{eq:rl_prob}
\end{align}
where $\mat{\Pi}$ is the policy space, $R_{t} \in \mathbb{R}$ is a random variable representing the reward at time $t$ and $\mathbb{E}_{\pi}$ denotes the expectation of the random vectors and variables $(\mathbf{S}_t,R_t,\mathbf{A}_t)_{t=1,\hdots,T}$ under policy $\pi$.

Optimal deterministic policies exist for a finite \mdp with bounded rewards and either $T < \infty$ or $\gamma \in [0,1)$ \cite[Prop. 4.4.3 \& Thm. 6.2.10]{puterman}. If the \mdp is also stationary and the horizon is either random or infinite with $\gamma \in [0,1)$, an optimal stationary policy exists \cite[Thm. 6.2.10]{puterman}.

The Bellman equations relate any optimal policy $\pi^{*}$ to the two value functions $V^{*} : \mat{S} \rightarrow \mathbb{R}$ and $Q^{*}: \mat{S} \times \mat{A} \rightarrow \mathbb{R}$ \cite{bellman_eq}:
\begin{align}
V^{*}(\mathbf{s}_t) &= \displaystyle\max_{a_t\in \mat{A}} \mathbb{E}_{\mathbf{S}_{t+1}}\big[R_{t+1} + \gamma V^{*}(\mathbf{S}_{t+1}) \mid \mathbf{S}_t=\mathbf{s}_t, \mathbf{A}_t=\mathbf{a}_t\big]\label{eq:bellman_eq_31} \\
Q^{*}(\mathbf{s}_t,\mathbf{a}_t) &= \mathbb{E}_{\mathbf{S}_{t+1}}\big[R_{t+1} + \gamma V^{*}(\mathbf{S}_{t+1}) | \mathbf{S}_t=\mathbf{s}_t, \mathbf{A}_t=\mathbf{a}_t\big] \label{eq:bellman_eq_33}\\
\pi^{*}(\mathbf{s}_t) &\in \argmax_{\mathbf{a}_t\in \mat{A}} Q^*(\mathbf{s}_t,\mathbf{a}_t)\label{eq:bellman_eq_34}
\end{align}
where $V^{*}(\mathbf{s}_t)$ and $Q^{*}(\mathbf{s}_t,\mathbf{a}_t)$ denote the expected cumulative discounted reward under $\pi^{*}$ for each state and state-action pair, respectively. Solving \labelcref{eq:bellman_eq_31} -- \labelcref{eq:bellman_eq_33} means computing the value functions from which an optimal policy can be obtained via \labelcref{eq:bellman_eq_34}.

\subsection{Markovian optimal stopping problems}
Optimal stopping is a classical problem domain with a well-developed theory \cite{wald,shirayev,stopping_book_1,chow1971great,bert05,bather_decision_theory,puterman,hammar_stadler_tnsm}. Many variants of the optimal stopping problem have been studied. For example, discrete-time and continuous-time problems, stationary and non-stationary problems and Markovian and non-Markovian problems. As a consequence, different solution methods for these variants have been developed. The most commonly used methods are the \textit{martingale approach} \cite{stopping_book_1,chow1971great,Snell1952TAMS} and the \textit{Markovian approach} \cite{shirayev,bert05,puterman,ross_stochastic_dp,bather_decision_theory}.

In this paper, we focus on a stationary optimal stopping problem with a finite time horizon $T$, discrete-time progression, a continuous state space $\mathbf{S} \subset \mathbb{R}^{n}$, bounded rewards and the Markov property. We use the Markovian solution approach and model the problem as a stationary \mdp $\mathcal{M}$, where the system state evolves as a discrete-time Markov process $(\mathbf{S}_{t})_{t=1}^{T}$. Here $\mathbf{S}_{t} \in \mathbf{S}$ and $\mathbf{s}_t$ denotes the realization of $\mathbf{S}_{t}$. At each time-step $t$ of this process, two actions are available: ``stop'' ($\mathrm{S}$) and ``continue'' ($\mathrm{C}$) i.e. $(A_t\in \{ \mathrm{S,C} \})_{t=1,\ldots,T}$. The \textit{stop} action yields a reward $r(\mathbf{s}_t,\mathrm{S})$ and terminates the process. In contrast, the \textit{continue} action causes the process to transition to the next state according to the transition probabilities $P_{\mathcal{M}}$ and yields the reward $r(\mathbf{s}_t,\mathrm{C})$.

A \textit{stopping time} is a positive random variable $1 \leq \T \leq T$ that is dependent on $s_1,\hdots,s_{\mathcal{T}}$ and independent of $s_{\mathcal{T}+1},\hdots s_{T}$ \cite{stopping_book_1}:
\begin{align}
\mathcal{T} &= \inf\{t: t \geq 1, \text{ }a_t=\mathrm{S}\}, \label{eq:stopping_time_def_1}
\end{align}
The objective is to find a deterministic and stationary stopping policy $\pi^{*}: \mathbf{S} \rightarrow \{\mathrm{S},\mathrm{C}\}$ that maximizes the expected discounted cumulative reward of the induced stopping time $\mathcal{T}$:
\begin{align}
  &\pi^{*} \in \argmax_{\pi \in \Pi} \mathbb{E}_{\pi}\left\{\sum_{t=1}^{\mathcal{T}-1}\gamma^{t-1}r(\mathbf{S}_t,\mathrm{C})  + r(\mathbf{S}_{\mathcal{T}},\mathrm{S})\right\}\label{eq:optimal_stopping_2}
\end{align}
Due to the Markov property, any policy that satisfies \labelcref{eq:optimal_stopping_2} also satisfies the following Bellman equation:
\begin{align}
\pi^{*}(\mathbf{s}) \in \argmax_{\{\mathrm{S},\mathrm{C}\}} \left\{\underbrace{r(\mathbf{s},\mathrm{S})}_{\text{stop } (\mathrm{S})}, \text{ }\underbrace{\mathbb{E}_{S^{'}}\left[r(\mathbf{s},\mathrm{C}) + \gamma V^{*}(\mathbf{S}^{\prime})\right]}_{\text{continue } (C)}\right\}\label{eq:optimal_stopping_1}
\end{align}
where $V^{*}$ is defined in \labelcref{eq:bellman_eq_31}.

\end{appendices}

%%%%%%%%%%%%%%%%%%%%%%%%%%%%%%%%%%%%%%%%%%%%%%%%%%%%%%%%%%%%

\end{document}